\documentclass[12pt]{article}
\usepackage{amsmath,amssymb}
\usepackage{libertine}
\usepackage{libertinust1math}
\usepackage{natbib}
\usepackage{dsfont}
\usepackage{xcolor}
\usepackage{tikz}
\usetikzlibrary{arrows.meta,positioning}
\usepackage{rotating}
\usepackage{tabularx}
\usepackage{array}
\usepackage{booktabs}
\usepackage{makecell}
\usepackage{refcount}
\usepackage[hidelinks]{hyperref}
\usepackage[shortlabels]{enumitem}
\usepackage{graphicx}
\usepackage{subcaption}
\usepackage{placeins}
\usepackage[margin=1in]{geometry}
\usepackage{amsthm}
\newtheorem{proposition}{Proposition}
\newtheorem{assumption}{Assumption}

\newtheorem{remark}{Remark}

\usepackage[nameinlink,noabbrev]{cleveref}
\usepackage[all,defaultlines=3]{nowidow}
\crefname{equation}{Equation}{Equations}
\Crefname{equation}{Equation}{Equations}
\crefname{algorithm}{Algorithm}{Algorithms}
\Crefname{algorithm}{Algorithm}{Algorithms}
\crefname{figure}{Figure}{Figures}
\Crefname{figure}{Figure}{Figures}
\crefname{table}{Table}{Tables}
\Crefname{table}{Table}{Tables}
\crefname{section}{Section}{Sections}
\Crefname{section}{Section}{Sections}
\crefname{subsection}{Section}{Sections}
\Crefname{subsection}{Section}{Sections}
\crefname{subsubsection}{Section}{Sections}
\Crefname{subsubsection}{Section}{Sections}
\crefname{assumption}{Assumption}{Assumptions}
\Crefname{assumption}{Assumption}{Assumptions}
\crefname{proposition}{Proposition}{Propositions}
\Crefname{proposition}{Proposition}{Propositions}
\crefname{remark}{Remark}{Remarks}
\Crefname{remark}{Remark}{Remarks}
\crefname{lemma}{Lemma}{Lemmas}
\Crefname{lemma}{Lemma}{Lemmas}
\crefname{theorem}{Theorem}{Theorems}
\Crefname{theorem}{Theorem}{Theorems}
\crefname{appendix}{Appendix}{Appendices}
\Crefname{appendix}{Appendix}{Appendices}
\newcommand{\proofsection}[3]{\section{Proof of \texorpdfstring{\Cref*{#1}}{Proposition \getrefnumber{#1}}: #2}\label{#3}}
\newcommand{\mcY}{\mathcal{Y}}
\newcommand{\mcG}{\mathcal{G}}
\newcommand{\R}{\mathbb{R}}
\newcommand{\E}{\mathbb{E}}
\newcommand{\Pp}{\mathbb{P}}
\newcommand{\indep}{\mathrel{\perp\!\!\!\perp}}
\DeclareMathOperator*{\argmax}{arg\,max}
\usepackage{algorithm}
\usepackage{algpseudocode}
\algrenewcommand\algorithmicrequire{\textbf{Input:}}
\algrenewcommand\algorithmicensure{\textbf{Output:}}

\title{Causal Inference with Unstructured Outcomes}

\author{
  Kevin Christian Wibisono\\
  Department of Statistics\\
  University of Michigan, Ann Arbor\\
  kwib@umich.edu
  \and
  Yixin Wang\\
  Department of Statistics\\
  University of Michigan, Ann Arbor\\
  yixinw@umich.edu
  }

\date{}

\begin{document}

\maketitle

\begin{abstract}
\noindent
Causal inference has traditionally centered on scalar outcomes: whether a patient recovers, how much a worker earns, or how many visits a website receives. Modern studies increasingly ask causal questions about outcomes with richer form, such as clinical notes, open-ended survey responses, and images. A hospital may want to know how an AI documentation tool changes the notes physicians write, or how a nurse training program alters what patients say in survey responses. For such outcomes, the usual average treatment effect is ill-defined: one cannot meaningfully subtract one text or image from another. To this end, we propose a causal query for unstructured outcomes. The key idea is to learn what features of the outcome are most causally affected by the treatment, which we call the \textit{maximally contrasting feature} (MCF). To estimate the MCF, we learn a feature-scoring function that maps each outcome to a scalar and exposes the sharpest contrast between treated and control potential outcomes. We develop identification conditions and estimation algorithms for this query, and extend it to heterogeneous effects by allowing the feature-scoring function to depend on observed covariates. We also handle settings where both the treatment and the outcome are unstructured. Empirical studies on text and images show that the algorithm recovers salient aspects of an outcome changed by a treatment.
\end{abstract}

\section{Introduction}

Causal inference has traditionally centered on scalar outcomes. A treatment is assigned, an outcome is measured, and the estimand compares mean potential outcomes under treatment and control. This setting underlies much of applied causal inference: whether a patient recovers, how long a patient survives, how much a worker earns, or how many visits a website receives.

Modern studies increasingly ask causal questions about richer unstructured objects. The outcome may be a clinical note, an open-ended survey response, a pathology image, or another unstructured record. A hospital may want to learn how an AI documentation tool changes physicians' notes. A public-health team may want to learn how a nurse-training program changes what patients say in survey responses. A medical-imaging study may want to learn how a reconstruction algorithm changes the artifacts, texture, or contrast of abdominal CT images. In these settings, the unstructured object is the evidence. Reducing it to a prespecified scalar before analysis can discard the changes that motivated the study.

We use clinical documentation as a running example, though the tools apply equally to text, images, and other unstructured outcomes. Suppose a hospital introduces an AI tool that helps clinicians draft notes. Let $A=1$ indicate that a note was written with the tool and $A=0$ that it was written without it. The raw outcome $Y$ is the resulting note. Treatment assignment is typically non-random: tool use may depend on visit type, clinician, patient characteristics, and time pressure, all of which may also shape the note. These variables are confounders, and a causal analysis must adjust for them. The scientific question is open-ended. Which aspects of the note change because the tool was used? Notes may become longer, more formal, more complete, more templated, more technical, or more uniform in structure. Their content may also change: clinicians may ask different questions, document different discussions, or engage with patients in different ways.

\textbf{Why the usual ATE breaks down.}
Let $Y(a)$ denote the raw potential outcome under treatment level $a$. In the documentation example, $Y(1)$ is the note that would be written with AI assistance, and $Y(0)$ is the note that would be written without it. For scalar outcomes, the average treatment effect
\[
    \mathbb E\{Y(1)-Y(0)\},
\]
has a direct interpretation. For documents and images, the expression has no analogous meaning. Clinical notes do not admit a canonical subtraction, images do not have a privileged direction of comparison, and an average document is rarely the object of scientific interest.

A common response is to represent and score the outcome. Let $g:\mathcal Y\to[0,1]$ be a bounded feature-scoring function. Then
\[
    \theta(g)
    =
    \mathbb E\!\left[g\{Y(1)\}-g\{Y(0)\}\right],
\]
is an ordinary average treatment effect for the scalar outcome $g(Y)$. The feature $g$ specifies the aspect of the object being studied. For a clinical note, $g(Y)$ might measure formality, brevity, completeness, clarity, technicality, or templating. For an image, it might measure blur, contrast, texture, sharpness, or another visual attribute.

This formulation shifts the original difficulty into a new question: which feature should be studied? The space of possible text or image features is enormous, and most features are irrelevant to a given intervention. One approach is to fix features in advance using a hand-coded codebook, topic model, embedding coordinate, or other dimension-reduction algorithm, and then estimate effects on those summaries \citep{egami2022make}. Such features are useful when they are themselves the scientific target. They are less useful when the target may lie outside a codebook, or may not be cleanly represented by any individual coordinate.

\textbf{The maximally contrasting feature.}
We make ``choosing which features to study'' part of the causal query. We consider a prespecified class $\mathcal G$ of bounded scalar functions $g:\mathcal Y\to[0,1]$ and search for the function that maximizes the feature-specific causal contrast $\theta(g)$.

We call the resulting function the \emph{maximally contrasting feature} (MCF). In the documentation example, the MCF is the represented feature whose mean potential value changes most when comparable notes are written with, rather than without, AI assistance. In an abdominal-imaging study, it may be the represented visual feature whose value changes most under a new reconstruction algorithm. The learned function~$g$ is a feature-scoring function: it maps each unstructured outcome to a scalar chosen to expose the sharpest causal contrast between treated and control potential outcomes. The MCF therefore defines a causal query for unstructured outcomes without requiring the analyst to know in advance whether the relevant change is lexical, stylistic, semantic, visual, or structural. We develop identification conditions and estimation algorithms for this query.

\Cref{fig:mcf-overview} shows this construction in the clinical-documentation example: adjust for factors that affect both tool use and the resulting notes, search over bounded note scores for the largest causal contrast, and interpret the learned score through recurring differences between low- and high-scoring notes.


\begin{figure*}[t]
\centering
\begin{minipage}[t]{0.22\textwidth}
\vspace{0pt}
\raggedright
\small
\textbf{(a) Causal question}

\vspace{3pt}
\centering
\begin{tikzpicture}[
  scale=0.88,
  transform shape,
  var/.style={
    circle,
    draw=black!55,
    line width=0.7pt,
    minimum size=9mm,
    text=white,
    font=\bfseries
  },
  causal/.style={
    -{Latex[length=2.2mm,width=1.7mm]},
    line width=1.1pt,
    draw=blue!65!black
  },
  direct/.style={
    -{Latex[length=2.2mm,width=1.7mm]},
    line width=1.1pt,
    draw=orange!85!black
  },
  label/.style={font=\tiny, align=center}
]
  \node[var, fill=blue!65!black] (x) at (1.50,2.05) {\(X\)};
  \node[var, fill=blue!65!black] (a) at (0.50,0.45) {\(A\)};
  \node[var, fill=green!55!black] (y) at (2.50,0.45) {\(Y\)};

  \draw[causal] (x) -- (a);
  \draw[causal] (x) -- (y);
  \draw[direct] (a) -- (y);

  \node[label, above=1mm of x] {visit, clinician,\\patient factors};
  \node[label, below=1mm of a] {AI documentation\\tool};
  \node[label, below=1mm of y] {clinical-note\\outcome};
\end{tikzpicture}

\vspace{4pt}
\begin{tikzpicture}
  \node[
    rounded corners=3pt,
    draw=orange!75!black,
    fill=orange!7,
    line width=0.7pt,
    align=center,
    font=\scriptsize\bfseries,
    text width=3.20cm,
    inner sep=5pt
  ] {Which aspects of the notes\\change because the AI tool is used?};
\end{tikzpicture}
\end{minipage}\hfill
\begin{minipage}[t]{0.44\textwidth}
\vspace{0pt}
\raggedright
\small
\textbf{(b) Learn the maximally contrasting features (MCF)}

\vspace{2pt}
{\scriptsize
Each bounded score \(g\in\mathcal G\) defines a causal contrast:
\[
  \theta(g)
  =
  \mathbb E\!\left[
    g\{Y(1)\}-g\{Y(0)\}
  \right],
  \qquad
  g^\star\in\arg\max_{g\in\mathcal G}\theta(g).
\]
}

\vspace{-4pt}
\centering
\begin{tikzpicture}[
  x=0.82cm,
  y=0.78cm,
  control/.style={
    circle,
    draw=blue!70!black,
    fill=blue!65!black,
    line width=0.5pt,
    minimum size=3.1mm,
    inner sep=0pt
  },
  treated/.style={
    circle,
    draw=orange!90!black,
    fill=orange!85!black,
    line width=0.5pt,
    minimum size=3.1mm,
    inner sep=0pt
  },
  gap/.style={draw=black!40, line width=1.0pt},
  bestgap/.style={draw=orange!85!black, line width=2.0pt},
  row/.style={font=\scriptsize, anchor=east},
  axislabel/.style={font=\tiny, text=black!75}
]
  \node[control] (legend0) at (2.00,4.25) {};
  \node[axislabel, anchor=west] at (2.18,4.25) {without AI tool};
  \node[treated] (legend1) at (5.20,4.25) {};
  \node[axislabel, anchor=west] at (5.38,4.25) {with AI tool};

  \node[row] at (1.30,3.35) {candidate \(g_1\)};
  \node[row] at (1.30,2.45) {candidate \(g_2\)};
  \node[row] at (1.30,1.55) {candidate \(g_3\)};
  \node[row, text=orange!85!black, font=\scriptsize\bfseries]
    at (1.30,0.65) {MCF \(g^\star\)};

  \draw[gap] (2.40,3.35) -- (3.12,3.35);
  \node[control] at (2.40,3.35) {};
  \node[treated] at (3.12,3.35) {};

  \draw[gap] (3.22,2.45) -- (3.58,2.45);
  \node[control] at (3.22,2.45) {};
  \node[treated] at (3.58,2.45) {};

  \draw[gap] (2.10,1.55) -- (3.72,1.55);
  \node[control] at (2.10,1.55) {};
  \node[treated] at (3.72,1.55) {};

  \draw[bestgap] (1.86,0.65) -- (5.12,0.65);
  \node[control] at (1.86,0.65) {};
  \node[treated] at (5.12,0.65) {};
  \node[
    font=\tiny\bfseries,
    text=orange!85!black,
    anchor=south
  ] at (3.49,0.78) {largest adjusted gap};

  \draw[-{Latex[length=1.6mm,width=1.2mm]}, black!55, line width=0.6pt]
    (1.65,0.05) -- (5.55,0.05);
  \foreach \x/\lab in {1.65/0,3.60/{0.5},5.55/1}
    {
      \draw[black!45] (\x,0.00) -- (\x,0.11);
      \node[axislabel, anchor=north] at (\x,-0.03) {\lab};
    }
  \node[axislabel, anchor=north] at (3.60,-0.40)
    {mean feature score};
\end{tikzpicture}

\vspace{3pt}
\raggedright
{\scriptsize
\textcolor{orange!85!black}{\textbf{Adjust for \(X\)}} so the plotted gaps
compare the same population under treatment and control.
}
\end{minipage}\hfill
\begin{minipage}[t]{0.30\textwidth}
\vspace{0pt}
\raggedright
\small
\textbf{(c) Interpret the MCF}

\vspace{2pt}
{\footnotesize
Score observed notes with \(g^\star\), then inspect recurring differences
across low- and high-scoring notes.}

\vspace{4pt}
\centering
\begin{tikzpicture}[
  low/.style={
    rounded corners=3pt,
    draw=blue!55!black,
    fill=blue!5,
    line width=0.7pt,
    align=left,
    font=\scriptsize,
    text width=3.45cm,
    inner sep=5pt
  },
  high/.style={
    rounded corners=3pt,
    draw=orange!80!black,
    fill=orange!8,
    line width=0.8pt,
    align=left,
    font=\scriptsize,
    text width=3.45cm,
    inner sep=5pt
  },
  insight/.style={
    rounded corners=5pt,
    fill=black!7,
    draw=black!30,
    align=center,
    font=\scriptsize\bfseries,
    text width=3.35cm,
    inner xsep=5pt,
    inner ysep=3pt
  }
]
  \node[low] (low) {\textbf{Low \(g^\star(Y)\)}\\[-1pt]
    \emph{``fever down. keep fluids.''}\\[-1pt]
    \emph{``call if cough gets worse.''}};
  \node[
    font=\scriptsize\bfseries,
    text=orange!85!black,
    text width=3.45cm,
    align=center,
    below=2mm of low
  ] (compare)
    {\(\boldsymbol{\downarrow}\)\; increasing feature score};
  \node[high, below=2mm of compare] (high) {\textbf{High \(g^\star(Y)\)}\\[-1pt]
    \emph{``Assessment: pain improved.\\
    Plan: continue medication.''}\\[-1pt]
    \emph{``Follow-up: blood pressure stable.\\
    Return in three months.''}};
  \node[insight, below=2.5mm of high] (insight)
    {recurring patterns give\\\(g^\star\) its meaning};
\end{tikzpicture}
\end{minipage}

\caption{\textbf{The MCF learns which feature of an unstructured outcome has
the largest causal contrast.} (a) In the clinical-documentation example,
pre-treatment covariates \(X\) affect both use of the AI documentation tool
\(A\) and the clinical-note outcome \(Y\). (b) Each bounded
\(g\in\mathcal G\) scores the notes and defines a treated-minus-control causal
contrast. After adjustment for \(X\), the maximally contrasting feature (MCF) \(g^\star\) is the feature with
the largest causal contrast. (c) Inspecting recurring differences between low- and
high-scoring notes gives the learned feature its substantive interpretation.}
\label{fig:mcf-overview}
\end{figure*}

Causal effects often vary across units. An AI documentation tool may change notes differently for complex visits than for routine visits, or for junior clinicians than for senior clinicians. A CT reconstruction algorithm may change image texture differently across scanner types or patient body habitus. We extend the MCF to heterogeneous effects by allowing the feature-scoring function to depend on observed covariates. The covariate-adaptive MCF asks which feature of the outcome is most affected for units with a given covariate profile, or which feature best captures treatment-effect variation across the population.

\textbf{Unstructured treatments and outcomes.}
The same difficulty arises when the treatment is itself unstructured. In text-to-image generation, the treatment may be a prompt and the outcome the generated image. A researcher may want to learn which attributes of a prompt causally improve which qualities of the image. In clinical care, the treatment may be an unstructured radiation plan, including dose distribution, beam geometry, and organ-at-risk constraints, while the outcome may be a follow-up CT scan. The causal question may ask which aspects of the plan improve which aspects of post-treatment image appearance. In writing, the treatment may be an AI suggestion and the outcome the final draft; the goal may be to learn which kinds of suggestions most improve specificity, voice, structure, or other dimensions of draft quality.

When treatment and outcome are both unstructured, we extend the MCF to learn a pair of functions: an influential treatment feature $f$ and a contrasting outcome feature $g$. The treatment feature identifies the aspect of the treatment being shifted, while the outcome feature identifies the aspect of the outcome that changes most under that supported shift.

\textbf{Contributions.} The paper makes five contributions:
\begin{enumerate}
\item We propose a causal query for causal inference with unstructured outcomes: the maximally contrasting feature (MCF), which learns a bounded feature-scoring function that captures the represented aspect of the outcome with the largest treatment-induced contrast.
\item We establish identification conditions for the MCF, develop inverse-propensity-weighted estimation algorithms, and study their consistency and first-order behavior.
\item We characterize the population behavior of the MCF and develop budgeted and covariate-adaptive versions of the MCF for interpretation.
\item We extend the MCF algorithm to settings in which both treatment and outcome are unstructured, via learning paired treatment and outcome feature-scoring functions with a covariate-adjusted objective estimated using matched negative-control outcomes.
\item We illustrate the MCF algorithm across empirical studies with unstructured outcomes, including text-based outcomes, image-based outcomes, and simultaneous text-based treatments and outcomes.
\end{enumerate}

\subsection{Related work}

The work draws on six themes of related work.

\textbf{Text-based causal adjustment.}
Text-based causal inference remains an emerging area, and much existing work uses text as a confounder, proxy, adjustment variable, or mediator rather than as the outcome itself. For example, \citet{keith2020text}, \citet{mozer2020matching}, and \citet{roberts2020adjusting} study how text can be used to adjust for confounding when estimating causal effects on scalar outcomes. \citet{wood2018challenges} study measurement problems that arise when text classifiers are used in causal analyses. Other work develops text representations and methods for causal adjustment or identification with latent and proxy confounding~\citep{veitch2020adapting,weld2022adjusting,chen2024proximal}. Work on multimodal adjustment and causal annotation also uses unstructured data to improve measurement or adjustment~\citep{klaassen2024doublemldeep,nwankwo2025batch}. \citet{zhou2026integrating} use unstructured product descriptions as inputs to outcome and propensity models for observational treatment-effect estimation, while \citet{arbour2026variance} use AI predictions from rich unstructured inputs as auxiliary covariates for variance reduction in randomized experiments. \citet{jin2026partial} treat generated output as a mediator between model choice and a scalar outcome; they combine a small randomized experiment with an offline simulator to identify causal model values and use confounded observational logs only to improve estimation. For a comprehensive overview of causal inference in NLP, see \citet{feder2022causal}. Our setting is different: the unstructured object is itself the outcome whose causal change we seek to characterize.

\textbf{Text outcomes and codebooks.}
The closest work studies causal inference with text-based outcomes. \citet{egami2022make} propose using a \textit{codebook} function to derive a low-dimensional text representation and estimate causal effects in studies with text-based treatments or outcomes, including randomized experiments. They emphasize sample splitting so that the codebook is constructed separately from the sample used for causal estimation, which helps satisfy the consistency assumption. Their workflow fixes the codebook before causal estimation. In contrast, we focus on observational studies and select a causally relevant feature-scoring function by optimizing the treatment-induced contrast.

\textbf{Empirical causal analyses of text outcomes.}
Several empirical studies estimate causal effects on pre-specified or model-derived text summaries. \citet{gill2015judicial} investigate how the random assignment of a female or non-white judge changes the use of legal terminology in final rulings. \citet{sridhar2019estimating} use latent Dirichlet allocation (LDA) topic models to assess how reply tones in debates affect linguistic and sentiment shifts in subsequent responses, while controlling for users' ideologies and emotions. More recently, \citet{modarressi2025causal} develop a method for text outcomes in randomized experiments and learn causal themes by prompting an LLM to obtain flexible document descriptions that capture systematic differences between treatment and control groups. These works share our goal of treating text as a causal outcome, but they rely on pre-specified summaries, topic models, hand-designed or generative codebooks, or LLM-prompted descriptions. Our approach instead learns the outcome feature from the causal contrast itself and accommodates observational confounding.

\textbf{High-dimensional outcomes.}
Related work studies causal inference with high-dimensional outcomes. \citet{jeong2024identifying} study sparse treatment effects across many measured outcomes and develop algorithms for identifying a sparse set of affected outcomes in randomized experiments. This is conceptually related to our goal of finding a low-dimensional summary of treatment-induced change. However, our focus is on unstructured outcomes represented through text, image, or other embedding functions, and on learning an interpretable causally relevant feature rather than selecting among a fixed set of measured outcomes.

\textbf{Unstructured treatments.}
A complementary literature treats language or other unstructured objects as treatments rather than outcomes. This includes work on discovering treatments from text corpora~\citep{fong2016discovery}, estimating causal effects of linguistic properties~\citep{pryzant2021causal}, transporting language effects across text distributions~\citep{lin2023texttransport}, optimizing language models for human preferences using causal objectives~\citep{lin2024cpo}, and estimating isolated effects of natural-language attributes~\citep{lin2025isolated}. Recent work also uses generative AI to define, manipulate, or estimate effects of unstructured text treatments~\citep{imai2025gpi,lin2025ivrepr,shi2025semantic,imai2024causal,nakamura2026dynamic}. These papers study how to define or estimate effects of unstructured treatments. By contrast, our main contribution is outcome-side: we define and estimate the causally relevant feature of an unstructured outcome.

\textbf{Representations of unstructured outcomes.}
Finally, our algorithm depends on a representation of the unstructured outcome. Such representations may come from hand coding, language or multimodal embeddings, or style-specific representations \citep{reimers2019sentence,wegmann2022same,duquenne2023sonar,patel2023learning,patel2024styledistance}. A broad and rapidly developing literature on causal representation learning studies when latent causal variables can be recovered from low-level observations; see \citet{scholkopf2021towards,moran2026towards} for reviews. This literature is complementary: it concerns how representations are obtained or identified, whereas we define and estimate causal features after a representation has been chosen.

\subsection{Organization of the paper}
\Cref{sec:mcf-methods} defines the MCF for represented unstructured outcomes, states its identification assumptions, characterizes its population meaning, and develops an estimator. \Cref{sec:extension-hd-treatment-outcome} extends the construction to settings in which both the treatment and the outcome are unstructured. \Cref{sec:experiments} evaluates the MCF algorithm in studies with text-based outcomes, image-based outcomes, and text-based treatments and outcomes. \Cref{sec:discussion} closes with a discussion.

\section{Causal inference with unstructured outcomes}
\label{sec:mcf-methods}

This section formulates a causal query for unstructured outcomes, gives identification conditions and estimation algorithms for this query, and extends the construction to heterogeneous effects. In the running example, the outcome is a clinical note. In other applications, it may be an open-ended survey response, an image, a video, or another unstructured object.

\subsection{Setup, potential outcomes, and the causal query}
\label{sec:mcf-setup-query}

\textbf{Observed data and potential outcomes.}
We observe i.i.d. data
\[
    Z_i=(X_i,A_i,Y_i),\qquad i=1,\ldots,N,
\]
where \(N\) is the sample size, \(X_i\) contains pre-treatment covariates, \(A_i\in\{0,1\}\) is the treatment, and \(Y_i\in\mcY\) is an unstructured outcome. For each treatment level \(a\in\{0,1\}\), let \(Y(a)\) denote the potential outcome that would be observed under treatment level \(a\). The analysis treats \(Y\) as the outcome. In the running example, \(A=1\) means that the clinical note was written with the documentation tool, \(A=0\) means that it was written without the tool, and \(Y\) is the note whose causal change is under study. Thus, \(Y(a)\) is the note that would be observed under documentation condition \(a\).

\textbf{Numerical representations of unstructured outcomes.}
Modern unstructured objects often come with numerical representations. A language or vision model can map a document or image into a vector from which much of the object can be retrieved, reconstructed, or generated. This map need not be a mathematical bijection: an arbitrary point in an embedding space may fail to correspond to any coherent note or natural image. For scientific analysis, however, the representation often supplies a useful working scale. Observed objects can be embedded, and neighborhoods in the representation space can be inspected by retrieving, reconstructing, or generating corresponding objects. Thus, \(Y\) is often treated as the outcome for the object that the study intends to measure.

Specifically, let \(O(a)\) be the raw potential outcome under treatment level~\(a\), and let \(Y(a)=\psi\{O(a)\}\) be its numerical representation. The raw outcome~\(O(a)\) may be a clinical note, an open-ended survey response, an image, or another unstructured object. The representation function~\(\psi\) may be a text embedding, an image embedding, a set of hand-coded variables, or a representation designed to isolate a modifiable component such as writing style or image texture. In the clinical-documentation example, \(O(a)\) is the note that would be written under documentation condition~\(a\), and \(Y(a)\) is its sentence embedding.

\textbf{Choosing the information in \(Y\).}
The choice of \(\psi\) determines which information from the raw object is available for causal analysis, and different choices of \(\psi\) support different causal questions. In clinical documentation, diagnoses, medications, test results, and patient history are largely fixed by the encounter. A documentation tool is more likely to change style: tone, formality, verbosity, organization, and technical language. A representation that mixes content and style may capture differences in patient mix or visit type rather than the modifiable effect of the tool. The analyst may therefore use a general-purpose embedding, such as SentenceTransformer~\citep{reimers2019sentence} or SONAR~\citep{duquenne2023sonar}; a style representation that separates semantic content from tone or formality~\citep{wegmann2022same,patel2023learning,patel2024styledistance}; or a representation constructed with a language model, for example by generating or extracting descriptions that preserve content while varying style~\citep{wibisono2026causal}. The same principle applies to images: the representation should emphasize the visual component that the intervention can plausibly affect, such as contrast, texture, sharpness, anatomical deformation, or artifact pattern.

\textbf{Why a feature-scoring function is still needed.}
A numerical representation makes an unstructured outcome available to statistical analysis, but it does not by itself define the causal feature of interest. A text or image embedding gives coordinates for the object, but its axes rarely form scientific scales. One coordinate is usually a component of a compressed code, rather than an interpretable attribute such as formality, specificity, blur, or contrast. A raw difference between two embedding vectors is algebraically defined, but it does not reveal which aspect of the note, response, or image changed.

\textbf{The causal query.}
First suppose that the feature of interest is known. In the clinical-note example, the analyst may ask whether the AI tool makes notes more formal. Such a feature can often be represented by a feature-scoring function \(g:\mcY\to\R\), where larger values of \(g(Y)\) correspond to more formal notes. Other choices of \(g\) could score verbosity, organization, technical language, or any other measurable feature of the represented outcome. Once \(g\) is fixed, the causal question is ordinary: how much would the average score change if the same population of encounters were documented with the tool rather than without it? This gives the feature-specific causal contrast
\begin{equation}
\label{eq:mcf-theta}
    \theta(g)=\E\left[g\{Y(1)\}-g\{Y(0)\}\right].
\end{equation}
For fixed \(g\), this is the usual average treatment effect with \(g(Y)\) as the scalar outcome.

For unstructured outcomes, however, the relevant feature is often unknown in advance. A hospital may know that a documentation tool changes clinical notes, while remaining uncertain about whether the change is mainly formality, verbosity, organization, or some more subtle attribute of the text.

We propose to make choosing ``which features to study'' part of the causal query. Let \(\mcG\) be a prespecified class of feature-scoring functions \(g:\mcY\to\R\). The maximally contrasting feature is
\begin{equation}
\label{eq:mcf-definition}
    g^\star \in \argmax_{g\in\mcG} \theta(g).
\end{equation}
Equivalently, \(g^\star\) is the feature in \(\mcG\) whose average potential value changes most when treatment is set from \(0\) to \(1\). In the running example, \(g^\star(Y)\) is the learned note score that most strongly separates notes written with the documentation tool from the notes that would have been written without it.

This definition separates three choices. The representation \(\psi\) determines which information from the raw object is available for analysis. The function class \(\mcG\) determines which scalar summaries may be learned. The contrast \(\theta(g)\) selects, within that class, the feature most affected by treatment. Together, these choices define a causal query for unstructured outcomes without requiring the analyst to know in advance which aspect of the outcome the treatment will move.

\subsection{Causal identification of the MCF}
\label{sec:challenge-asm}

Two issues precede identification. The first is scale. The criterion in \Cref{eq:mcf-definition} is linear in the feature \(g\). Thus, if a feature has positive contrast, rescaling it by a large constant increases the objective without changing the ordering of outcomes. We fix this scale by restricting the feature class to bounded functions, typically \(0\leq g\leq1\). This normalization gives the learned feature a simple interpretation: values near one mark outcomes that are more characteristic of the treated potential-outcome distribution, and values near zero mark outcomes that are more characteristic of the control potential-outcome distribution.

The second issue is computational. Optimizing over all measurable functions is not feasible in finite samples, so we often set \(\mcG\) as a tractable bounded class, such as a fixed neural-network architecture with sigmoid output. A smooth linear score,
\[
    g(y)=\sigma(w^\top y),
\]
is a special case, where~\(w\) is the coefficient vector and~\(\sigma\) is the sigmoid function. The bounded output fixes the scale, while the fixed architecture makes the search estimable. For interpretation, it is useful to compare this practical class with the oracle class
\[
    \mcG_{\mathrm{oracle}}=\{g:0\leq g(y)\leq1\},
\]
which describes the population feature that would be selected without approximation error.

We now turn to identification. The assumptions are stated for the outcome \(Y\), before any feature is learned. This is because the feature search ranges over functions of \(Y\); the causal conditions must therefore support every candidate contrast considered by the procedure.

\begin{assumption}[SUTVA]
\label{asm:mcf-sutva}
Each unit has well-defined potential outcomes \(Y(0)\) and \(Y(1)\). One unit's treatment does not affect another unit's outcome. If \(A=a\), then \(Y=Y(a)\).
\end{assumption}

\begin{assumption}[Overlap]
\label{asm:mcf-overlap}
Let \(e_0(x)=\Pp(A=1\mid X=x)\) be the propensity score, and let \(P_X\) be the marginal covariate law. For \(P_X\)-almost every \(x\),
\[
    0<e_0(x)<1.
\]
For estimation results, we use the stronger condition that \(\eta<e_0(x)<1-\eta\) for an overlap margin~\(\eta>0\).
\end{assumption}

\begin{assumption}[Unconfoundedness]
\label{asm:mcf-unconfoundedness}
\[
    \{Y(0),Y(1)\}\indep A\mid X.
\]
\end{assumption}

\Cref{asm:mcf-sutva} links the observed outcome to the corresponding potential outcome. \Cref{asm:mcf-overlap} ensures that both treatment levels remain possible within the relevant covariate strata. \Cref{asm:mcf-unconfoundedness} says that the observed covariates block confounding between treatment and the outcome. In the clinical-note example, after adjusting for visit type, clinician behavior, patient complexity, and other recorded drivers of tool use, the remaining variation in AI tool use carries no additional information about the notes that would have appeared under either treatment level.

For any fixed measurable feature \(g\), define
\[
    \theta(g)=\E\{g(Y(1))-g(Y(0))\},
\]
and let
\[
    \mu_a^g(x)=\E\{g(Y)\mid A=a,X=x\}.
\]

\begin{proposition}[Identification of feature-specific contrasts]
\label{prop:mcf-identification}
Under \Cref{asm:mcf-sutva,asm:mcf-overlap,asm:mcf-unconfoundedness}, every fixed measurable feature \(g\) with finite expectation satisfies
\begin{equation}
\label{eq:mcf-adjustment}
    \theta(g)
    =
    \E\left[
        \E\{g(Y)\mid A=1,X\}
        -
        \E\{g(Y)\mid A=0,X\}
    \right]
    =
    \E\{\mu_1^g(X)-\mu_0^g(X)\}.
\end{equation}
Equivalently,
\begin{equation}
\label{eq:mcf-ipw-identification}
    \theta(g)
    =
    \E\left[
        \frac{A g(Y)}{e_0(X)}
        -
        \frac{(1-A)g(Y)}{1-e_0(X)}
    \right].
\end{equation}
\end{proposition}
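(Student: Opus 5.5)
The argument is the standard g-formula applied to the scalar outcome \(g(Y)\). Fix a measurable \(g\) with \(\E|g\{Y(a)\}|<\infty\) for \(a\in\{0,1\}\); this integrability is what the statement means by finite expectation, and it holds automatically for bounded features. Write \(\theta(g)=\E[g\{Y(1)\}]-\E[g\{Y(0)\}]\), using linearity of expectation. Everything then reduces to showing, for each \(a\), that
\[
\E[g\{Y(a)\}]=\E\{\mu_a^g(X)\}=\E\!\left[\frac{\mathds 1\{A=a\}\,g(Y)}{\Pp(A=a\mid X)}\right].
\]

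\textbf{Regression form.} Apply the tower property to get \(\E[g\{Y(a)\}]=\E\big(\E[g\{Y(a)\}\mid X]\big)\). By \Cref{asm:mcf-unconfoundedness}, \(Y(a)\indep A\mid X\), so \(g\{Y(a)\}\indep A\mid X\) as well, since \(g\) is measurable. Hence \(\E[g\{Y(a)\}\mid X]=\E[g\{Y(a)\}\mid A=a,X]\). By \Cref{asm:mcf-sutva}, \(Y=Y(a)\) on \(\{A=a\}\), so this equals \(\E\{g(Y)\mid A=a,X\}=\mu_a^g(X)\). Taking the difference over \(a=1,0\) gives \Cref{eq:mcf-adjustment}.

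\textbf{Weighting form.} Condition on \(X\) and pull out the \(X\)-measurable weight:
\[
\E\!\left[\frac{A g(Y)}{e_0(X)}\right]=\E\!\left[\frac{\E\{A g(Y)\mid X\}}{e_0(X)}\right],
\qquad
\E\{A g(Y)\mid X\}=\Pp(A=1\mid X)\,\E\{g(Y)\mid A=1,X\}=e_0(X)\,\mu_1^g(X).
\]
The ratio is therefore \(\mu_1^g(X)\). The control term is handled the same way with \(1-A\) and \(1-e_0(X)\). Combining with the regression form yields \Cref{eq:mcf-ipw-identification}.

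\textbf{The only delicate step is well-definedness.} Conditioning on the event \(\{A=a, X=x\}\) is legitimate only where \(\Pp(A=a\mid X=x)>0\), and the inverse weights are finite only under the same condition. \Cref{asm:mcf-overlap} supplies exactly this, \(0<e_0(x)<1\) for \(P_X\)-almost every \(x\). I would therefore define \(\mu_a^g\) as a version of the conditional expectation and note that it is uniquely determined \(P_X\)-a.e. by overlap.

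Integrability of the weighted terms also needs a check, because the weights can be large under weak overlap. I would avoid assuming anything extra by using \(|g|\) and the tower identity above: it gives \(\E[A|g(Y)|/e_0(X)]=\E\big(\E[|g\{Y(1)\}|\mid X]\big)=\E|g\{Y(1)\}|<\infty\), which is finite even under weak overlap. This justifies the manipulations in the weighting form without a strict overlap margin.
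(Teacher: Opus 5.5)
Your proof is correct and follows the same route as the paper: the tower property together with unconfoundedness and SUTVA gives the regression form, and iterated expectation with the $X$-measurable weight gives the inverse-weighting form. Your remarks on well-definedness under overlap and on integrability of the weighted terms fill in details the paper's short proof leaves implicit.
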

The proof is in \Cref{app:pf-mcf-identification}. The proposition identifies the contrast for each fixed \(g\). Since the same observed-data formula applies throughout \(\mcG\), it identifies the population criterion \(g\mapsto\theta(g)\). Consequently, whenever this criterion has a unique maximizer in the chosen class, the selected feature is also identified. The learned feature is causal in this precise sense: it is selected by optimizing an identified causal contrast.

The next proposition records this conclusion and characterizes the oracle maximizer. Boundedness makes the optimization well-scaled; uniqueness then depends on the absence of ties in the population ordering.

\begin{proposition}[Identification and uniqueness of the MCF]
\label{prop:mcf-identification-uniqueness}
Suppose \Cref{asm:mcf-sutva,asm:mcf-overlap,asm:mcf-unconfoundedness} hold. Then \(\theta(g)\) is identifiable for every \(g\in\mcG\) by \Cref{eq:mcf-adjustment}. If \(\theta(g)\) has a unique maximizer in \(\mcG\), up to equality under the reference distribution of \(Y\), then \(g^*\) is identifiable.

In the oracle class \(\mcG_{\mathrm{oracle}}=\{g:0\le g\le1\}\), suppose the laws of \(Y(1)\) and \(Y(0)\) have densities \(p_1\) and \(p_0\) with respect to a common measure. Write
\[
    \Delta(y)=p_1(y)-p_0(y).
\]
Then every oracle maximizer satisfies
\begin{equation}
\label{eq:oracle-mcf}
    g^*(y)=\mathbf{1}\{\Delta(y)>0\}
    \quad\text{where } \Delta(y)\ne0.
\end{equation}
If the tie set \(\{y:\Delta(y)=0\}\) has probability zero, the oracle MCF is unique almost surely.
\end{proposition}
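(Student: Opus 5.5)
The proof has two parts: the identification claim, which reduces to \Cref{prop:mcf-identification}, and a pointwise characterization of the oracle maximizer.

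\textbf{Identification.} By \Cref{prop:mcf-identification}, each \(g\in\mcG\) is bounded, so \(\theta(g)\) has finite expectation. The map \(g\mapsto\theta(g)\) therefore equals the observed-data functional \(g\mapsto\E\{\mu_1^g(X)-\mu_0^g(X)\}\), which depends only on the law of \((X,A,Y)\).

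Suppose two data-generating processes satisfy the assumptions and induce the same observed law. Then they induce the same criterion on \(\mcG\), hence the same set of maximizers. If that set is a single equivalence class (functions equal almost everywhere under the reference law of \(Y\)), the maximizer is a functional of the observed law, so \(g^\star\) is identified. I will note that equality a.e. under the reference distribution leaves \(\theta\) unchanged. This requires the reference measure to dominate the laws of \(Y(0)\) and \(Y(1)\). I will take the reference to be the observed law of \(Y\): under overlap, the law of \(Y(a)\) is \(\int P(Y\in\cdot\mid A=a,X=x)\,dP_X(x)\), which is dominated by the law of \(Y\) because \(e_0\) is bounded away from \(0\) and \(1\) almost everywhere. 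This domination step is the only delicate point in this part.

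\textbf{Oracle class.} For \(g\) with \(0\le g\le 1\), write
\[
\theta(g)=\int g(y)\{p_1(y)-p_0(y)\}\,d\nu(y)=\int g\Delta\,d\nu .
\]
Pointwise, \(g\Delta\le \Delta^+\), with equality iff \(g(y)=1\) when \(\Delta(y)>0\) and \(g(y)=0\) when \(\Delta(y)<0\). Hence \(\theta(g)\le\int\Delta^+\,d\nu\), and \(g_0=\mathbf 1\{\Delta>0\}\) attains this bound. So the supremum is \(\int\Delta^+\,d\nu\), which equals the total variation distance between the laws of \(Y(1)\) and \(Y(0)\).

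For the converse, let \(g\) be any maximizer. Then \(\int(\Delta^+-g\Delta)\,d\nu=0\) with a nonnegative integrand, so \(g\Delta=\Delta^+\) \(\nu\)-a.e. This forces \(g=\mathbf 1\{\Delta>0\}\) a.e. on \(\{\Delta\neq0\}\), which is \Cref{eq:oracle-mcf}. Since the statement is "up to equality under the reference distribution," I will read it as a.e. with respect to \(\nu\) restricted to where \(p_0+p_1>0\). Off the support of both laws, \(g\) is unconstrained and irrelevant.

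\textbf{Uniqueness.} If the tie set \(\{\Delta=0\}\) has probability zero under the laws of \(Y(0)\) and \(Y(1)\) (equivalently under \(\tfrac12(p_0+p_1)\,d\nu\)), any two maximizers agree almost surely. Combined with the first part, the oracle MCF is then identified.
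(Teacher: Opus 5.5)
Your proposal is correct and follows the paper's route: identification is inherited from \Cref{prop:mcf-identification}, and the oracle maximizer is read off from the pointwise linearity of $\int g\Delta$. Your bound $g\Delta\le\Delta^+$ and the a.e.-equality step make rigorous the ``every maximizer'' claim that the paper only asserts; one small correction is that the domination of the law of $Y(a)$ by the law of $Y$ needs only $0<e_0<1$ almost everywhere, which is what the identification assumptions supply, not a margin bounded away from $0$ and $1$.
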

The proof is in \Cref{app:pf-mcf-identification-uniqueness}. The proposition gives the population interpretation of the MCF. It marks the region of the represented outcome space that treatment makes more likely. For clinical notes, it selects the kinds of represented notes that appear more often when physicians use the documentation tool than they would under control, after adjustment for \(X\).

A simple special case makes the interpretation concrete. Suppose an interpretable scalar summary \(H=h(Y)\) is the only represented feature whose distribution changes under treatment, in the sense that the density ratio \(p_1(y)/p_0(y)\) is a strictly increasing function of \(h(y)\). Then \Cref{eq:oracle-mcf} is equivalent to thresholding \(H\). In the running example, if more formal notes are increasingly more likely under treatment than under control, and no other aspect of the notes changes, then the oracle MCF is equivalent to thresholding formality. When treatment changes several correlated aspects of the representation, the MCF learns the scalar score that best separates the treated and control outcome distributions.

\subsection{Heterogeneous MCF}
\label{sec:hetero-te}

A global feature \(g(Y)\) summarizes the average distributional change over the study population. Treatment, however, may alter different aspects of an unstructured outcome in different subpopulations. A documentation tool might make routine-visit notes more templated while making complex-visit notes more complete. Let~\(\mathcal X\) denote the covariate space. To represent such patterns, we allow the maximally contrasting feature (MCF) to depend on the covariates,
\[
    g:\mathcal{Y}\times\mathcal{X}\to[0,1].
\]
For any such feature, define the covariate-adaptive contrast
\begin{equation}
\label{eq:mcf-theta-x}
    \theta_X(g)
    =
    \mathbb{E}\left[g\{Y(1),X\}-g\{Y(0),X\}\right].
\end{equation}
The heterogeneous MCF is then
\begin{equation}
\label{eq:heterogeneous-mcf-definition}
    g_X^*
    \in
    \arg\max_{g\in\mathcal{G}_X}\theta_X(g),
\end{equation}
where \(\mathcal{G}_X\) is a bounded class of feature-scoring functions from \(\mathcal{Y}\times\mathcal{X}\) to \([0,1]\). Thus \(g_X^*(y,x)\) describes which represented outcomes are most characteristic of treatment within the covariate profile \(x\).

The identification arguments above carry over after replacing \(g(Y)\) by \(g(Y,X)\). In particular, with
\[
    \mu_a^g(x)=\mathbb{E}\{g(Y,X)\mid A=a,X=x\},
\]
we have
\[
    \theta_X(g)=\mathbb{E}\{\mu_1^g(X)-\mu_0^g(X)\}.
\]
The resulting feature describes which represented aspects of the outcome change within subpopulations, rather than only on average over the population.

\begin{proposition}[Characterization of the heterogeneous oracle MCF]
\label{prop:ext-2}
Let
\[
    T(g)
    =
    \mathbb{E}\left[g\{Y(1),X\}-g\{Y(0),X\}\right]
    =
    \int g(y,x)\,p(x)\,\Delta(y\mid x)\,dy\,dx,
\]
where \(p(x)\) is the marginal covariate density. Define the conditional density difference by
\[
    \Delta(y\mid x)=p_1(y\mid x)-p_0(y\mid x),
\]
where $p_a(y\mid x)$ is the conditional density of \(Y(a)\) given $x$.
Here, \(\Delta(y\mid x)\) denotes the conditional difference between the treated and control potential-outcome densities. Over the oracle class \(0\le g\le1\) on \(\mathcal{Y}\times\mathcal{X}\), any maximizer satisfies
\[
    g^*(y,x)=\mathbf{1}\{\Delta(y\mid x)>0\}
    \quad\text{where }\Delta(y\mid x)\ne0.
\]
If the tie set \(\{(y,x):\Delta(y\mid x)=0\}\) has reference probability zero, this maximizer is unique almost surely.
\end{proposition}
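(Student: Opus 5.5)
The plan mirrors the argument behind the oracle part of \Cref{prop:mcf-identification-uniqueness}, applied pointwise in the joint variable \((y,x)\).

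\textbf{Step 1: the integral representation.} Condition on \(X\) and use the tower property:
\[
    \mathbb{E}[g\{Y(a),X\}]=\int p(x)\int g(y,x)\,p_a(y\mid x)\,dy\,dx .
\]
Since \(0\le g\le 1\) and the conditional densities integrate to one, both terms are finite. Subtracting the \(a=0\) term from the \(a=1\) term gives \(T(g)=\int g\,p\,\Delta\,dy\,dx\). No causal assumptions are needed here, because \(T\) is defined directly in terms of potential-outcome laws.

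\textbf{Step 2: the oracle maximizer.} Set \(g^\dagger(y,x)=\mathbf{1}\{\Delta(y\mid x)>0\}\). For any measurable \(0\le g\le1\), the pointwise inequality
\[
    g(y,x)\,\Delta(y\mid x)\le \{\Delta(y\mid x)\}_+=g^\dagger(y,x)\,\Delta(y\mid x)
\]
holds. Multiplying by \(p(x)\ge0\) and integrating yields \(T(g)\le T(g^\dagger)\). So \(g^\dagger\) attains the supremum, and the set of maximizers is nonempty.

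\textbf{Step 3: characterizing every maximizer.} This is the main point requiring care. Let \(g^*\) be any maximizer and write
\[
    D(y,x)=p(x)\,\bigl[\{\Delta(y\mid x)\}_+-g^*(y,x)\,\Delta(y\mid x)\bigr]\ge0 .
\]
Optimality gives \(\int D=T(g^\dagger)-T(g^*)=0\). A nonnegative function with zero integral vanishes almost everywhere, so \(D=0\) a.e. with respect to \(dy\,dx\). Now read off \(g^*\) on each region:
\begin{itemize}
    \item where \(p(x)>0\) and \(\Delta>0\), we get \((1-g^*)\Delta=0\), hence \(g^*=1\);
    \item where \(p(x)>0\) and \(\Delta<0\), we get \(-g^*\Delta=0\), hence \(g^*=0\).
\end{itemize}
So \(g^*=\mathbf{1}\{\Delta>0\}\) off the tie set, up to null sets. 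The subtlety is the choice of reference measure. The natural one is \(p(x)\,\{p_1(y\mid x)+p_0(y\mid x)\}\,dy\,dx\). It is absolutely continuous with respect to \(dy\,dx\) restricted to \(\{p>0\}\), so "a.e." statements transfer. I would state the conclusion "up to equality under the reference distribution", matching the convention in \Cref{prop:mcf-identification-uniqueness}.

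\textbf{Step 4: uniqueness.} If the tie set \(\{\Delta(y\mid x)=0\}\) has reference probability zero, any two maximizers agree off the union of that set and a null set. Hence the maximizer is unique almost surely.
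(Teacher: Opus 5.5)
Your proposal is correct and follows the paper's proof, which also notes that the integrand \(g(y,x)\,p(x)\,\Delta(y\mid x)\) is pointwise linear in \(g(y,x)\), maximizes it pointwise, and leaves the tie set arbitrary. Your Step 3 (a nonnegative integrand with zero integral vanishes almost everywhere on \(\{p(x)>0\}\)) makes rigorous the paper's one-line claim that every maximizer has the stated form off the tie set, and it makes explicit the almost-everywhere qualification that the statement leaves implicit.
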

The proof is in \Cref{app:pf-ext-2}. This characterization preserves the pointwise oracle interpretation while allowing the feature-scoring function to vary across covariate strata.

\subsection{Causal estimation of the MCF}
\label{sec:mcf-obs-data}

The preceding subsection defines the MCF as a population feature: among the feature-scoring functions in
\(\mcG\), it selects the one whose average potential value changes most under treatment. We now
estimate this feature from observational data. The presentation focuses on the homogeneous score
\(g(Y)\). The heterogeneous version uses the same construction after replacing \(g(Y;\beta)\) by
\(g(Y,X;\beta)\).

\textbf{A tractable search class.}
The population definition allows any prespecified class \(\mcG\). In a finite sample, we work with a
parameterized subfamily so that the search can be carried out by numerical optimization. Let
\[
    \mcG_\Theta=\{g(\cdot;\beta):\beta\in\Theta\},
\]
where \(\beta\in\Theta\subseteq\mathbb{R}^{n_\beta}\) is the parameter vector, \(n_\beta\) is its dimension, and \(\beta\) may index the weights of a neural network. We keep the scale constraint from the
population definition, for example by using a bounded output layer so that
\(0\le g(Y;\beta)\le 1\). The bound fixes the scale of the contrast; the parameterization turns the
search over features into a training problem.

The population target within this fitted class is
\begin{equation}
\label{eq:beta-target}
    \beta_0\in
    \argmax_{\beta\in\Theta}
    \theta(\beta),
    \qquad
    \theta(\beta)
    :=
    \E\left[g\{Y(1);\beta\}-g\{Y(0);\beta\}\right].
\end{equation}
Thus \(g(Y;\beta_0)\) is the bounded score in the chosen class whose average potential value changes
most when treatment is set from \(0\) to \(1\). In the clinical-note example, it is the learned note
score most affected by use of the documentation tool.

\textbf{The inverse-propensity-weighted (IPW) objective.} We propose to estimate the MCF using an inverse-propensity-weighted objective.
Let
\[
    e_0(X)=\Pp(A=1\mid X),
\]
be the propensity score. Under SUTVA, overlap, and unconfoundedness, the contrast in
\Cref{eq:beta-target} can be written in terms of observed outcomes
\begin{equation}
\label{eq:ipw-pop}
    \theta(\beta)
    =
    \E\!\left[
        \frac{A\,g(Y;\beta)}{e_0(X)}
        -
        \frac{(1-A)\,g(Y;\beta)}{1-e_0(X)}
    \right].
\end{equation}
This identity directly gives a learning objective. Once the propensity score is learned,
the remaining task is to score the observed notes, images, or responses and optimize over
\(\beta\).

Given an estimator \(\hat e\), define
\begin{equation}
\label{eq:ipw-obj}
    \hat\theta_N(\beta)
    =
    \frac{1}{N}
    \sum_{i=1}^N
    \left\{
        \frac{A_i\,g(Y_i;\beta)}{\hat e(X_i)}
        -
        \frac{(1-A_i)\,g(Y_i;\beta)}{1-\hat e(X_i)}
    \right\},
    \qquad
    \hat\beta\in\argmax_{\beta\in\Theta}\hat\theta_N(\beta).
\end{equation}
This separates the two tasks. The propensity model is learned once, using \(X\) and \(A\). The feature-scoring function is then learned by optimizing a weighted empirical contrast over the represented outcomes. In implementation, we use sample splitting or cross-fitting: estimate \(\hat e\) on one fold, optimize \Cref{eq:ipw-obj} on another, and rotate the folds.

A reader might ask why we do not instead estimate conditional means such as
\(\E\{g(Y;\beta)\mid X,A=a\}\) and plug them into the identified contrast. The difficulty is that
the regression target moves with \(\beta\). Each update of the feature-scoring function changes the outcome that
the regression must predict, creating a nested optimization problem for text and image embeddings.
The IPW objective avoids this loop because the propensity model \(e_0(X)\) is learned from the
treatment assignment mechanism and remains fixed as \(g\) is trained. We detail this reasoning in \Cref{sec:why-ipw}.

\textbf{Consistency.}
The next proposition states that the fitted parameter~$\hat{\beta}$ consistently estimates the population parameter~$\beta_0$ in \Cref{eq:beta-target} under standard regularity conditions.

\begin{proposition}[Consistency]
\label{prop:cons}
Let \(\Theta\) be compact and contain \(\beta_0\). Suppose \(\theta(\beta)\) has a unique maximizer
\(\beta_0\), the class
\[
    \{g(\cdot;\beta):\beta\in\Theta\},
\]
is Glivenko--Cantelli under the weighted law, overlap holds, and
\[
    \sup_x|\hat e(x)-e_0(x)|\xrightarrow{p}0,
\]
then any maximizer \(\hat\beta\) of \Cref{eq:ipw-obj} satisfies
\[
    \hat\beta\xrightarrow{p}\beta_0.
\]
\end{proposition}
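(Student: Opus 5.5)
The plan is the standard argmax-consistency argument for M-estimators (van der Vaart, Theorem 5.7). It needs two ingredients.

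\emph{(i) Uniform convergence.} We show
\[
\sup_{\beta\in\Theta}|\hat\theta_N(\beta)-\theta(\beta)|\xrightarrow{p}0.
\]

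\emph{(ii) Well-separation.} For every \(\epsilon>0\),
\[
\sup_{\|\beta-\beta_0\|\ge\epsilon}\theta(\beta)<\theta(\beta_0).
\]

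Given both, the conclusion is routine. Since \(\hat\beta\) maximizes \(\hat\theta_N\), we have \(\hat\theta_N(\hat\beta)\ge\hat\theta_N(\beta_0)\), and hence
\[
\theta(\beta_0)-\theta(\hat\beta)\le 2\sup_\beta|\hat\theta_N-\theta|\xrightarrow{p}0.
\]
Well-separation then forces \(\|\hat\beta-\beta_0\|<\epsilon\) with probability tending to one.

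\textbf{Uniform convergence.} Introduce the oracle-weighted objective
\[
\tilde\theta_N(\beta)=\frac1N\sum_{i=1}^N\Big\{\frac{A_i}{e_0(X_i)}-\frac{1-A_i}{1-e_0(X_i)}\Big\}g(Y_i;\beta),
\]
and split
\[
\hat\theta_N-\theta=(\hat\theta_N-\tilde\theta_N)+(\tilde\theta_N-\theta).
\]
By \Cref{prop:mcf-identification} (equivalently \Cref{eq:ipw-pop}), \(\E\tilde\theta_N(\beta)=\theta(\beta)\). The Glivenko--Cantelli assumption on \(\{g(\cdot;\beta)\}\) under the weighted law is read precisely as the statement that \(\sup_\beta|\tilde\theta_N-\theta|\to0\) almost surely. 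If one prefers to assume GC only for \(\{g\}\) itself, note that the weight \(w_0(X,A)\) is a single fixed function bounded by \(1/\eta\) under strong overlap. Multiplying a uniformly bounded GC class by one fixed integrable function preserves the GC property (van der Vaart--Wellner, GC preservation), so this reading is justified either way.

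For the nuisance term, use \(0\le g\le1\) to get
\[
\sup_\beta|\hat\theta_N-\tilde\theta_N|\le\frac1N\sum_i\Big|\frac{A_i}{\hat e(X_i)}-\frac{A_i}{e_0(X_i)}\Big|+\Big|\frac{1-A_i}{1-\hat e(X_i)}-\frac{1-A_i}{1-e_0(X_i)}\Big|.
\]
On the event \(\sup_x|\hat e-e_0|<\eta/2\), whose probability tends to one, both \(\hat e\) and \(1-\hat e\) exceed \(\eta/2\). The right-hand side is then at most \(4\eta^{-2}\sup_x|\hat e-e_0|\xrightarrow{p}0\). This step does not require cross-fitting because the bound is uniform in \(x\). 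With sample splitting it is cleaner still, but the sup-norm argument already suffices.

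\textbf{Well-separation (the main obstacle).} The hypotheses give only a unique maximizer, not well-separation, so we must derive it from compactness of \(\Theta\) together with continuity of \(\beta\mapsto\theta(\beta)\). Continuity is not stated explicitly. We would obtain it by assuming, as is implicit for neural networks with sigmoid output, that \(\beta\mapsto g(y;\beta)\) is continuous for each \(y\). Dominated convergence with the bound \(|w_0 g|\le 1/\eta\) then makes \(\theta\) continuous. Upper semicontinuity alone would also suffice.

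With continuity in hand, the argument is short. The set \(\{\beta\in\Theta:\|\beta-\beta_0\|\ge\epsilon\}\) is compact, so \(\theta\) attains its maximum there. That maximum is strictly less than \(\theta(\beta_0)\) by uniqueness.

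If continuity were unavailable, the fallback is to interpret "unique maximizer" as well-separated directly, since this is the standard meaning in Theorem 5.7. We would state this interpretation explicitly in the proof.
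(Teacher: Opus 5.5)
Your proof is correct and follows the paper's argument essentially step for step: the same split into an oracle-weighted criterion and the feasible criterion, the same sup-norm bound on the nuisance term on the event that $\hat e$ and $1-\hat e$ stay above $\eta/2$, and the argmax theorem (your constant $4/\eta^2$ is looser than the paper's $2/\eta^2$ because $A_i(1-A_i)=0$ leaves only one nonzero difference per $i$, but this is harmless). Where you go beyond the paper is well-separation, and rightly so: the paper's appeal to ``compactness, uniqueness, and the argmax theorem'' tacitly needs (upper semi)continuity of $\beta\mapsto\theta(\beta)$, which the statement does not list, and your assumption of continuity of $\beta\mapsto g(y;\beta)$ plus dominated convergence (the bound $0\le g\le 1$ already suffices) supplies exactly that missing condition.
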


The proof is in \Cref{app:pf-cons}. The compactness condition can be relaxed, as the following remark records.
\begin{remark}
\label{rmk:tight}
    The compactness assumption on $\Theta$ can be replaced by the \textit{tightness} assumption: For every $\epsilon > 0$, there is a compact subset $K_\epsilon \subset \Theta$ containing $\beta_0$ such that for all large enough $N$, we have $\mathbb{P}(\hat{\beta}_N \in K_\epsilon) \geq 1 -\epsilon$.
\end{remark}

\textbf{First-order behavior.}
The next result records the large-sample behavior of the learned MCF. Specifically, if the fixed-\(\beta\) contrast estimator has a regular first-order expansion, then the argmax estimator inherits that expansion through the first-order condition. Thus, when the true propensity score~$e_0$ is estimated well, the fitted parameter~$\hat{\beta}$ is not only consistent, but also asymptotically normal and semiparametrically efficient.

Let the score gradient be~$\ell(Y;\beta)=\partial g(Y;\beta)/\partial\beta$, let the observed-data vector be~$Z=(X,A,Y)$, and let~$e$ denote a candidate propensity score. Write~$\mathbb{P}$ and~$\mathbb{P}_N$ for the population and empirical expectation operators, respectively, and define the IPW moment function
\begin{equation}
\label{eq:ipw-moment}
\phi(Z;\beta,e)=\frac{A\ell(Y;\beta)}{e(X)}-\frac{(1-A)\ell(Y;\beta)}{1-e(X)}.
\end{equation}
The population pair~$(\beta_0,e_0)$ satisfies the moment condition~$\mathbb{P}\{\phi(Z;\beta_0,e_0)\}=0$, while the fitted pair~$(\hat{\beta},\hat e)$ satisfies~$\mathbb{P}_N\{\phi(Z;\hat{\beta},\hat e)\}=0$.

For every candidate parameter~$\beta$, let the population moment be~$\tau_\beta=\mathbb{P}\{\phi(Z;\beta,e_0)\}$ and define its IPW estimator as
\begin{equation}
\label{eq:tau-estimator}
\hat{\tau}_\beta=\frac{1}{N}\sum_{i=1}^N\left\{\frac{A_i\ell(Y_i;\beta)}{\hat e(X_i)}-\frac{(1-A_i)\ell(Y_i;\beta)}{1-\hat e(X_i)}\right\}.
\end{equation}
The observed outcome is indexed by~$i$ in both summands. Suppose this estimator is semiparametrically efficient for every fixed candidate parameter~$\beta$; that is,
\begin{equation}
\label{eq:tau-expansion}
\hat{\tau}_\beta-\tau_\beta=(\mathbb{P}_N-\mathbb{P})\{\gamma(Z;\beta,e_0)\}+o_p(N^{-1/2}),
\end{equation}
where the efficient influence representation is
\begin{equation}
\label{eq:efficient-representation}
\gamma(Z;\beta,e)=\frac{A\{\ell(Y;\beta)-\mu_{1\beta}(X)\}}{e(X)}-\frac{(1-A)\{\ell(Y;\beta)-\mu_{0\beta}(X)\}}{1-e(X)}+\mu_{1\beta}(X)-\mu_{0\beta}(X),
\end{equation}
and the outcome-regression gradient is~$\mu_{a\beta}(X)=\mathbb{E}\{\ell(Y;\beta)\mid X,A=a\}$. Finally, let the derivative matrix be~$V(\beta,e)=\partial\mathbb{P}\{\phi(Z;\beta,e)\}/\partial\beta$. Under the following regularity conditions, the fitted parameter~$\hat{\beta}$ is asymptotically normal and semiparametrically efficient.

\begin{proposition}[Asymptotic normality and efficiency]
\label{prop:eff}
Suppose that all the following conditions are met:
\begin{enumerate}[label=(\alph*)]
    \item The conditions of \Cref{prop:cons} hold, so that \(\hat\beta\xrightarrow{p}\beta_0\);
    \item The propensity estimator~$\hat e$ is trained on observations independent of those used in the estimating equation for~$\hat\beta$, or the analogous independence is obtained by cross-fitting;
    \item The IPW-based estimator $\hat{\tau}_\beta$ for $\tau_\beta$ is semiparametrically efficient for any fixed $\beta$;
    \item The function class \(\{\phi(z;\beta,e):\beta\in\mathbb{R}^{n_\beta}\}\) is Donsker in \(\beta\) for any fixed~\(e\);
    \item The moment map $\beta \rightarrow \mathbb{P}\{\phi(Z;{\beta}, {e})\}$ is differentiable at \(\beta_0\), uniformly over candidate propensity scores~\(e\). The derivative matrix~\(V(\beta_0,e)\) is non-singular, and \(V(\beta_0,\hat e)\xrightarrow{p}V(\beta_0,e_0)\).
\end{enumerate}
Then, $\hat{\beta}$ is an asymptotically normal and semiparametrically efficient estimator for $\beta_0$.
\end{proposition}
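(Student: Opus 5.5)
This is a Z-estimator argument. The expansion in \Cref{eq:tau-expansion} for each fixed $\beta$ transfers to $\hat\beta$ through the first-order condition, with the Donsker condition controlling the random index. Throughout, write $\Phi_N(\beta,e)=\mathbb{P}_N\phi(Z;\beta,e)$ and $\Phi(\beta,e)=\mathbb{P}\phi(Z;\beta,e)$. Note that $\Phi(\beta,e_0)=\tau_\beta$ and $\Phi_N(\beta,\hat e)=\hat\tau_\beta$. We have $\Phi_N(\hat\beta,\hat e)=0$ and $\Phi(\beta_0,e_0)=0$.

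\textbf{Step 1 (decomposition).} Start from
\[
0=\hat\tau_{\hat\beta}=\{\hat\tau_{\hat\beta}-\tau_{\hat\beta}\}+\tau_{\hat\beta}.
\]
For the second term, use condition (e) together with $\hat\beta\to\beta_0$ from condition (a) and \Cref{prop:cons}. This gives
\[
\tau_{\hat\beta}=\tau_{\beta_0}+V(\beta_0,e_0)(\hat\beta-\beta_0)+o_p(\|\hat\beta-\beta_0\|)=V(\beta_0,e_0)(\hat\beta-\beta_0)+o_p(\|\hat\beta-\beta_0\|).
\]
For the first term, I would like to apply \Cref{eq:tau-expansion} at $\beta=\hat\beta$. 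However, that expansion is stated only pointwise in $\beta$. So I write
\[
\hat\tau_{\hat\beta}-\tau_{\hat\beta}=\{\hat\tau_{\beta_0}-\tau_{\beta_0}\}+R_N,
\]
where
\[
R_N=(\Phi_N-\Phi)(\hat\beta,\hat e)-(\Phi_N-\Phi)(\beta_0,\hat e)+\Phi(\hat\beta,\hat e)-\Phi(\hat\beta,e_0)-\Phi(\beta_0,\hat e)+\Phi(\beta_0,e_0).
\]

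\textbf{Step 2 (control of $R_N$; main obstacle).} The empirical-process piece is handled by conditioning on the independent fold used to fit $\hat e$ (condition (b)). Given that fold, $\hat e$ is a fixed function, and the class $\{\phi(\cdot;\beta,\hat e)\}$ is Donsker by condition (d). Asymptotic equicontinuity then gives $o_p(N^{-1/2})$, provided $\phi(\cdot;\hat\beta,\hat e)\to\phi(\cdot;\beta_0,\hat e)$ in $L_2$. I would obtain this from consistency of $\hat\beta$ and the overlap margin $\eta$, assuming $L_2$-continuity of $\ell$ in $\beta$.

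The population piece is a cross-difference in $(\beta,e)$. By the uniform differentiability in condition (e), it equals
\[
\{V(\beta_0,\hat e)-V(\beta_0,e_0)\}(\hat\beta-\beta_0)+o_p(\|\hat\beta-\beta_0\|),
\]
which is $o_p(\|\hat\beta-\beta_0\|)$ because $V(\beta_0,\hat e)\to V(\beta_0,e_0)$. Making this uniformity rigorous is the delicate point. I would try to phrase everything through the uniform-in-$e$ derivative so that the remainders can be absorbed.

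\textbf{Step 3 (conclusion).} Combining the steps gives
\[
0=(\mathbb{P}_N-\mathbb{P})\gamma(Z;\beta_0,e_0)+V(\beta_0,e_0)(\hat\beta-\beta_0)+o_p(N^{-1/2}+\|\hat\beta-\beta_0\|),
\]
using \Cref{eq:tau-expansion} at $\beta_0$, which is condition (c). Since $V(\beta_0,e_0)$ is nonsingular, the usual argument yields $\|\hat\beta-\beta_0\|=O_p(N^{-1/2})$, and hence
\[
\sqrt N(\hat\beta-\beta_0)=-V(\beta_0,e_0)^{-1}\sqrt N(\mathbb{P}_N-\mathbb{P})\gamma(Z;\beta_0,e_0)+o_p(1).
\]
The central limit theorem then gives asymptotic normality, with covariance
\[
V^{-1}\,\mathrm{Var}\{\gamma(Z;\beta_0,e_0)\}\,V^{-\top}.
\]

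For efficiency, $\beta_0$ is defined implicitly by $\tau_{\beta_0}=0$. By the implicit function theorem, its pathwise derivative is $-V^{-1}$ times the pathwise derivative of $\tau_{\beta}$ at $\beta=\beta_0$. Condition (c) identifies the efficient influence function of the latter as $\gamma(Z;\beta_0,e_0)$. Therefore $-V^{-1}\gamma(Z;\beta_0,e_0)$ is the efficient influence function of $\beta_0$. Since $\hat\beta$ is asymptotically linear with exactly this influence function, it is regular and semiparametrically efficient.
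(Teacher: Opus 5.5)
Your proof is correct and follows the paper's argument. The paper writes $0=W_{1N}+W_{2N}+W_{3N}$: a fixed-$\beta_0$ expansion from (c), an equicontinuity remainder from Donsker plus sample splitting (Lemma 19.24 of van der Vaart), and a population linearization using the uniform-in-$e$ derivative together with $V(\beta_0,\hat e)\to V(\beta_0,e_0)$. Your $\tau_{\hat\beta}$ term plus the population cross-difference in $R_N$ is just a regrouping of the paper's $W_{3N}=\mathbb{P}(\phi_{\hat\beta,\hat e}-\phi_{\beta_0,\hat e})$. Your explicit $L_2$-continuity requirement and the implicit-function justification of the influence function $-V(\beta_0,e_0)^{-1}\gamma(Z;\beta_0,e_0)$ spell out steps the paper leaves implicit.
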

The proof is in \Cref{app:pf-eff}. The following remark clarifies the efficiency assumption used by this result.
\begin{remark}
\Cref{prop:eff} relies on the assumption that the IPW-based estimator is semiparametrically efficient; it concerns the fixed-\(\beta\) contrast estimator. Although classical IPW estimators are not generally efficient, semiparametric efficiency can be attained when the propensity score is estimated flexibly, such as nonparametrically, under suitable regularity conditions, as established in prior work, e.g.,  \citet{hirano2003efficient}, \citet{ertefaie2023nonparametric}, and \citet{van2022efficient}. The same argmax expansion also applies if \Cref{eq:ipw-obj} is replaced by an efficient augmented contrast.
\end{remark}

\textbf{Algorithm.}
\Cref{alg:beta_estimation} summarizes the MCF algorithm. The simplest version uses one split
of the sample, writing \(N=2M\): estimate the propensity score on the first half and learn the MCF
on the second. Cross-fitting repeats this construction across folds.

\begin{algorithm}[t]
\caption{\textit{Estimating the MCF}}
\label{alg:beta_estimation}
\begin{algorithmic}[1]
\Require Observations \(Z_{1:N}=(X_i,A_i,Y_i)_{i=1}^N\); bounded class \(g(Y;\beta)\).
\Ensure Estimated MCF \(g(\cdot;\hat\beta)\).

\State Estimate \(\hat e(X)=\hat\Pp(A=1\mid X)\) on the training split \(Z_{1:M}\).
\State On the held-out split \(Z_{M+1:2M}\), obtain \(\hat{\beta}\) by maximizing the IPW objective \Cref{eq:ipw-obj} with stochastic gradient ascent.
\State Inspect outcomes with high and low values of \(g(Y;\hat\beta)\); use near neighbors and minimal pairs to interpret the learned feature.
\end{algorithmic}
\end{algorithm}

\begin{remark}[Mini-batch optimization]
\label{rmk:mb}
The objective in \Cref{eq:ipw-obj} is well suited to mini-batch optimization. Instead of computing the optimization objective over the entire second half of the sample at each iteration, we approximate the full-sample gradient using randomly selected subsets of observations. This introduces stochasticity into the updates, reduces the risk of overfitting, and substantially lowers the computational cost per iteration. For the same reason, different subsets of the data can be used to evaluate different components of the objective. For instance, one may use separate mini-batches to approximate the treated and control terms.
\end{remark}

\begin{remark}[Heterogeneous MCF]
\label{rmk:hetero-est}
To learn a heterogeneous MCF, replace \(g(Y_i;\beta)\) by
\(g(Y_i,X_i;\beta)\) in \Cref{eq:ipw-obj,alg:beta_estimation}. The learned
score is then read as \(g(Y,X;\hat\beta)\): a covariate-dependent feature of the
unstructured outcome.
\end{remark}

Learning MCF from data serves as an exploratory tool for causal inference with unstructured outcomes. It reports which represented features of an
unstructured outcome are most changed by treatment. Interpretation returns to
the training examples: inspect high- and low-scoring outcomes, retrieve near neighbors, look for
minimal pairs, and summarize recurring differences with domain knowledge. The algorithm performs a
search rather than confirmatory inference for a hand-named attribute. Labels such as
``formality,'' ``specificity,'' or ``texture'' are interpretations of the learned score, read from
examples and subsequent summaries.

\subsection{Interpreting the MCF}
\label{sec:mcf-interpret}

The estimator in \Cref{sec:mcf-obs-data} returns a feature-scoring function. A candidate feature-scoring function is
\[
g:\mathcal{Y}\to[0,1],
\]
which assigns each represented outcome \(y\) a number. In the clinical-note example, one such function might assign large values to notes with formal language; another might assign large values to notes with templated headings; another might assign large values to notes with short assessment-and-plan sections; and another might combine several of these patterns. For a fixed feature \(g\), the contrast
\[
\theta(g)
=
\mathbb{E}\{g(Y(1))-g(Y(0))\},
\]
asks how much the average value of that feature changes when the same population is documented with the tool rather than without it. The MCF is the function, within the chosen class, whose contrast is largest.

\textbf{The oracle MCF.}
The population oracle MCF gives a useful starting point for interpretation. Let \(p_a\) denote the density of the potential outcome \(Y(a)\), for \(a\in\{0,1\}\), and define
\[
\Delta(y)=p_1(y)-p_0(y).
\]
Thus \(\Delta(y)>0\) means that represented outcomes near \(y\) are more common under treatment than under control. For any bounded feature \(0\le g\le1\),
\[
\theta(g)
=
\mathbb{E}\{g(Y(1))-g(Y(0))\}
=
\int g(y)\Delta(y)\,dy.
\]
This objective is linear in the value \(g(y)\) at each represented outcome \(y\). Therefore, the value of the oracle MCF can be determined separately at each \(y\). If \(\Delta(y)>0\), assigning \(g(y)=1\) increases the objective as much as possible at that point. If \(\Delta(y)<0\), assigning \(g(y)=0\) avoids adding a negative contribution. Hence every oracle MCF satisfies
\[
g^\star(y)=\mathbf{1}\{\Delta(y)>0\},
\]
up to arbitrary choices on the tie set \(\{y:\Delta(y)=0\}\). The set
\[
\{y:g^\star(y)=1\},
\]
is the part of the represented outcome space that treatment makes more likely.

In the running example, this set may contain several recognizable changes at once: more standardized organization, repeated phrases, more formal diction, shorter plans, or other patterns induced by the documentation tool. The MCF does not name these patterns in advance. It assigns large values to notes whose represented form is more characteristic of the treated potential-outcome distribution. Interpretation comes from inspecting notes with large feature values and describing the recurring differences they share.

The oracle MCF is transparent, but it may be too broad for interpretation. A documentation tool may slightly move many aspects of writing, and \(g^\star\) assigns value one to every represented note type whose probability increases, even if the increase is small. For interpretation, it is often better to begin with the clearest part of the shift. A hospital reviewing tool-assisted notes may first want the notes whose organization, phrasing, compression, or templated structure most strongly marks tool use.

\textbf{Adding a budget.}
This motivates a budgeted version of the MCF. The contrast remains the same, but the function is constrained to assign value one to only a fixed amount of the outcome population. Let \(P_{\mathrm{ref}}\) be a reference distribution on represented outcomes, with density \(p_{\mathrm{ref}}\). This distribution defines the population with respect to which selected mass is measured. A natural choice is the pooled potential-outcome distribution
\[
p_{\mathrm{ref}}(y)
=
\pi p_1(y)+(1-\pi)p_0(y),
\qquad \pi\in(0,1),
\]
where \(\pi\) is the mixture weight. Alternatively, the reference distribution may be the corresponding adjusted empirical distribution of represented outcomes. For a budget \(\tau\in(0,1)\), define
\begin{equation}
\label{eq:budget-mcf}
g^\star_\tau
\in
\operatorname*{arg\,max}_{0\le g\le1}
\int g(y)\Delta(y)\,dy
\quad
\text{subject to}
\quad
\int g(y)p_{\mathrm{ref}}(y)\,dy=\tau.
\end{equation}
The constraint says that the outcomes to which \(g^\star_\tau\) assigns value one have reference mass \(\tau\). A small budget makes \(g^\star_\tau\) assign value one only to the most treatment-enriched notes. A larger budget allows \(g^\star_\tau\) to assign value one to notes with weaker, but still systematic, evidence of the same treatment-induced shift.

The budgeted oracle ranks outcomes by
\[
s(y)=\frac{\Delta(y)}{p_{\mathrm{ref}}(y)}.
\]
We call \(s(y)\) the treatment enrichment score. The numerator measures how much more density treatment places near \(y\). The denominator puts that increase on the scale of the reference population. Thus \(s(y)\) asks: among the kinds of notes that appear in the population being reviewed, which kinds become most disproportionately common when the documentation tool is used?

In the clinical-note example, notes with highly regular headings and compressed assessment-and-plan language may have large positive enrichment scores if they appear much more often under tool use. Notes with mildly more formal transitions may also be more common under treatment, but they appear lower in the ranking when the increase is weaker. The budget~\(\tau\) determines how much of this ordered population the function highlights.

Let
\[
\tau^\star
=
\int \mathbf{1}\{\Delta(y)>0\}p_{\mathrm{ref}}(y)\,dy,
\]
denote the reference mass of the represented outcomes for which treatment increases density. This is the reference mass of the set to which the unbudgeted oracle MCF assigns value one.

\begin{proposition}[Budgeted oracle MCF]
\label{prop:ext-3}
Assume \(p_{\mathrm{ref}}(y)>0\) on the relevant support and \(0<\tau<\tau^\star\). The treatment enrichment random variable is
\[
\frac{\Delta(Y)}{p_{\mathrm{ref}}(Y)}.
\]
Suppose this random variable has no atom at its upper \(\tau\)-threshold under \(P_{\mathrm{ref}}\). Then the budgeted oracle MCF is unique almost surely and satisfies
\begin{equation}
\label{eq:budget-threshold}
g^\star_\tau(y)
=
\mathbf{1}\left\{
\frac{\Delta(y)}{p_{\mathrm{ref}}(y)}
\ge
\lambda_\tau
\right\},
\end{equation}
where \(\lambda_\tau\) is chosen so that
\[
\int g^\star_\tau(y)p_{\mathrm{ref}}(y)\,dy=\tau.
\]
Moreover, \(\theta(g^\star_\tau)\) is increasing in \(\tau\) on \((0,\tau^\star)\).
\end{proposition}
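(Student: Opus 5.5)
The plan is to rewrite the budgeted problem \Cref{eq:budget-mcf} as a fractional knapsack problem (a Neyman--Pearson-type problem) under the reference measure, and then use a Lagrangian/exchange argument. Since \(p_{\mathrm{ref}}>0\) on the relevant support, write \(s(y)=\Delta(y)/p_{\mathrm{ref}}(y)\). The objective and the constraint become
\[
\int g(y)\Delta(y)\,dy=\E_{\mathrm{ref}}\{g(Y)s(Y)\},\qquad \E_{\mathrm{ref}}\{g(Y)\}=\tau .
\]
The problem is therefore to choose \(0\le g\le 1\) with reference mean \(\tau\) so as to maximize the reference mean of \(g\,s\). Points outside the reference support contribute nothing, since there \(\Delta=0\) because \(p_1,p_0\le p_{\mathrm{ref}}/\min(\pi,1-\pi)\) when the pooled reference is used. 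In general they are handled by the standing support assumption.

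\textbf{Step 1: the threshold and feasibility.} Let \(F\) be the law of \(s(Y)\) under \(P_{\mathrm{ref}}\). Define \(\lambda_\tau\) as the upper \(\tau\)-quantile, so that \(P_{\mathrm{ref}}\{s(Y)\ge\lambda_\tau\}=\tau\). This quantile exists exactly because \(s(Y)\) has no atom at that level. Since \(\tau<\tau^\star=P_{\mathrm{ref}}\{s(Y)>0\}\), we get \(\lambda_\tau>0\), and more precisely \(P_{\mathrm{ref}}\{s\ge\lambda_\tau\}=\tau<P_{\mathrm{ref}}\{s>0\}\). Hence \(g^\star_\tau=\mathbf 1\{s\ge\lambda_\tau\}\) is feasible.

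\textbf{Step 2: optimality and uniqueness.} For any feasible \(g\), use the pointwise inequality
\[
\{g^\star_\tau(y)-g(y)\}\{s(y)-\lambda_\tau\}\ge 0 .
\]
It holds because \(g^\star_\tau=1\ge g\) where \(s\ge\lambda_\tau\), and \(g^\star_\tau=0\le g\) where \(s<\lambda_\tau\). Integrate against \(P_{\mathrm{ref}}\). Both functions have reference mean \(\tau\), so the \(\lambda_\tau\) terms cancel, giving \(\E_{\mathrm{ref}}\{(g^\star_\tau-g)s\}\ge0\); this is optimality. For uniqueness, suppose \(g\) is also optimal. Then the integral of the nonnegative integrand is zero, so \(g=g^\star_\tau\) \(P_{\mathrm{ref}}\)-a.s. on \(\{s\ne\lambda_\tau\}\). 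The level set \(\{s=\lambda_\tau\}\) is \(P_{\mathrm{ref}}\)-null by the no-atom assumption, so \(g=g^\star_\tau\) almost surely. This level set is the only place where the atom hypothesis matters, and it also fixes the weak-versus-strict inequality in \Cref{eq:budget-threshold}.

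\textbf{Step 3: monotonicity in \(\tau\).} Write
\[
V(\tau)=\theta(g^\star_\tau)=\E_{\mathrm{ref}}[s\,\mathbf 1\{s\ge\lambda_\tau\}].
\]
For \(\tau<\tau'<\tau^\star\), the thresholds satisfy \(\lambda_{\tau'}\le\lambda_\tau\), so \(V(\tau')-V(\tau)=\E_{\mathrm{ref}}[s\,\mathbf 1\{\lambda_{\tau'}\le s<\lambda_\tau\}]\). This band has reference mass \(\tau'-\tau>0\). On the band \(s\ge\lambda_{\tau'}\), and \(\lambda_{\tau'}>0\) because \(\tau'<\tau^\star\). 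Hence the increment is at least \(\lambda_{\tau'}(\tau'-\tau)>0\), so \(V\) is strictly increasing.

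The step I expect to need the most care is Step 1 combined with Step 3. The issue is that the no-atom hypothesis is stated only at the single threshold \(\lambda_\tau\), yet Step 3 compares two budgets. To get around this, I would not use a threshold rule for \(\tau'\) at all. Instead I would compare with the feasible function \(g^\star_\tau+h\), where \(0\le h\le \mathbf 1\{0<s<\lambda_\tau\}\) has reference mass \(\tau'-\tau\). Such an \(h\) exists since \(P_{\mathrm{ref}}\{0<s<\lambda_\tau\}=\tau^\star-\tau>\tau'-\tau\), and it can be fractional. Its value exceeds \(V(\tau)\) strictly because \(s>0\) on the support of \(h\). The optimum at \(\tau'\) is at least this value, so \(V\) is strictly increasing without any atom condition at \(\tau'\).
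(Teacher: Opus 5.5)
Your proposal is correct and follows essentially the paper's argument: rewrite the problem through the enrichment score $s=\Delta/p_{\mathrm{ref}}$, use the pointwise inequality $\{g^\star_\tau-g\}\{s-\lambda_\tau\}\ge 0$ together with the no-atom condition for optimality and uniqueness, and use positivity of thresholds below $\tau^\star$ for monotonicity. Your fractional-augmentation version of Step 3 is a modest sharpening, because the paper's monotonicity step implicitly uses the threshold form at both budgets, whereas your version needs the no-atom condition only at the smaller one.
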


The proof is in \Cref{app:pf-ext-3}. The proposition says that the budgeted MCF assigns value one to the outcomes with the largest treatment enrichment scores, stopping when the total reference mass reaches \(\tau\). As \(\tau\) increases, the threshold \(\lambda_\tau\) decreases, so \(g^\star_\tau\) assigns value one to a larger set of represented outcomes. For \(\tau<\tau^\star\), every newly included outcome still has positive density difference, so the optimal contrast increases.

\textbf{The saturation point of the budget.}
Let
\[
\tau_0
=
\int \mathbf{1}\{\Delta(y)=0\}p_{\mathrm{ref}}(y)\,dy,
\]
denote the reference mass of the zero-difference set. At \(\tau=\tau^\star\), the exact-budget optimizer is
\[
g^\star_{\tau^\star}(y)
=
\mathbf{1}\{\Delta(y)>0\}
\qquad
P_{\mathrm{ref}}\text{-almost surely}.
\]
Even when the zero-difference set has positive reference mass, assigning positive mass to that set would require removing the same mass from the positive-difference set and would decrease the objective.

For \(\tau\in(\tau^\star,\tau^\star+\tau_0]\), the additional budget can be assigned to the zero-difference set without changing the optimal contrast. Only when \(\tau>\tau^\star+\tau_0\) does the equality constraint in \Cref{eq:budget-mcf} force the function to assign positive mass to outcomes with \(\Delta(y)<0\), which lowers the contrast. With an upper-bound budget,
\[
\int g(y)p_{\mathrm{ref}}(y)\,dy\le \tau,
\]
the optimal value is attained by an optimizer that stops once all outcomes with positive density difference have been assigned value one. Thus \(\tau^\star\) is the beginning of the saturation region: the exact-budget value increases before \(\tau^\star\), remains constant through \(\tau^\star+\tau_0\), and decreases once negative-difference outcomes must be included.

In the clinical-note example, a one-percent budget may highlight notes whose tool signature is unmistakable: standardized organization, repeated templated language, or unusually compressed assessment-and-plan sections. A ten-percent budget may include milder but still systematic shifts: more regular headings, more formal transitions, shorter plan descriptions, or phrasing that resembles the clearest selected notes less strongly. As the budget approaches \(\tau^\star\), the set of notes assigned value one expands to all represented note types made more common by the tool. Once the budget exceeds \(\tau^\star+\tau_0\), an exact budget would begin to include note types more characteristic of unaided documentation.

\textbf{Interpreting a fitted MCF.}
The fitted MCF may combine several human-interpretable attributes. A note may receive a high value of \(g(Y)\) because it is formal, concise, templated, unusually structured, technical, or some mixture of these. An attribute with little direct mean shift may still help identify treated-like notes when it is correlated with an attribute that shifts strongly. For example, politeness may help distinguish tool-assisted notes because it co-occurs with formality, even when the main tool-induced change is formality. The causal contrast belongs to the learned feature score~\(g(Y)\) as a whole.

\textbf{A simple Gaussian two-feature calculation.}
A simple Gaussian calculation makes this point concrete. Suppose a note representation is summarized by two standardized attributes, say formality and politeness. Let
\[
Y(a)\sim \mathcal N(\mu_a,\Sigma),
\qquad
\delta=\mu_1-\mu_0,
\]
with a common covariance matrix \(\Sigma\). In this equal-covariance Gaussian case, the likelihood ratio is monotone in a single linear score,
\[
S(Y)=w^\top Y,
\qquad
w=\Sigma^{-1}\delta.
\]
Thus the oracle MCF can be written as a function of \(S(Y)\). Notes with larger values of \(S(Y)\) are notes that move farther in the direction treatment tends to move the representation. If formality has the larger standardized treatment shift, the formality coefficient has the larger magnitude after standardization. If politeness changes mainly through its correlation with formality, its coefficient reflects the remaining covariance-adjusted contribution. Standardization puts the attributes on a common marginal scale, so the comparison is not driven by units.

The budget chooses a cutoff on this same score. With a very small budget, \(g^\star_\tau\) assigns value one only to notes that sit farthest in the treatment direction, such as notes that are both highly formal and strongly templated. With a larger budget, \(g^\star_\tau\) may also assign value one to notes that are moderately formal, or notes that carry correlated markers such as regular headings, compressed plans, or polished transitions. These notes are included because they lie farther down the same treatment-enrichment ordering. We detail the calculation in \Cref{sec:mcf-learn}.

\textbf{Covariate-dependent features.}
The same logic extends to covariate-dependent features. Let \(X\) denote the baseline covariates taking values in the covariate space~\(\mathcal X\), let \(p(x)\) denote their marginal density, let \(p_a(y\mid x)\) be the conditional density of \(Y(a)\) given \(X=x\), and define
\[
\Delta(y\mid x)=p_1(y\mid x)-p_0(y\mid x).
\]
A covariate-dependent feature is a function
\[
g:\mathcal{Y}\times\mathcal{X}\to[0,1],
\]
whose value may depend on both the represented note and the covariate profile. This is useful when treatment changes different aspects of notes in different subpopulations. For instance, the documentation tool may make routine-visit notes more templated while making complex-visit notes more complete.

For any such feature, the covariate-adaptive contrast is
\[
\theta_X(g)
=
\mathbb{E}\{g(Y(1),X)-g(Y(0),X)\}
=
\int g(y,x)p(x)\Delta(y\mid x)\,dy\,dx.
\]
The unbudgeted heterogeneous oracle MCF is
\[
g^\star_X(y,x)=\mathbf{1}\{\Delta(y\mid x)>0\},
\]
up to arbitrary choices on the tie set \(\{(y,x):\Delta(y\mid x)=0\}\). In the clinical-note example, the set of notes assigned value one may differ across visit types, clinicians, or patient groups.

For the budgeted heterogeneous problem, let \(P_{\mathrm{ref}}\) be a reference distribution on \((Y,X)\), with density \(p_{\mathrm{ref}}(y,x)\). A natural pooled choice is
\[
p_{\mathrm{ref}}(y,x)
=
p(x)\{\pi p_1(y\mid x)+(1-\pi)p_0(y\mid x)\}.
\]
The budgeted heterogeneous oracle solves
\[
g^\star_{X,\tau}
\in
\operatorname*{arg\,max}_{0\le g\le1}
\int g(y,x)p(x)\Delta(y\mid x)\,dy\,dx
\quad
\text{subject to}
\quad
\int g(y,x)p_{\mathrm{ref}}(y,x)\,dy\,dx=\tau.
\]
Define the reference mass of the positive conditional-density-difference set as
\[
\tau^\star_X
=
\int
\mathbf{1}\{\Delta(y\mid x)>0\}
p_{\mathrm{ref}}(y,x)\,dy\,dx.
\]

\begin{proposition}[Heterogeneous budgeted oracle MCF]
\label{prop:ext-4}
Assume \(p_{\mathrm{ref}}(y,x)>0\) and \(p(x)>0\) on the relevant support and \(0<\tau<\tau^\star_X\). The covariate-specific treatment enrichment random variable is
\[
\frac{p(X)\Delta(Y\mid X)}{p_{\mathrm{ref}}(Y,X)}.
\]
Suppose this random variable has no atom at its upper \(\tau\)-threshold under \(P_{\mathrm{ref}}\). Then the budgeted heterogeneous oracle MCF is unique almost surely and satisfies
\begin{equation}
\label{eq:hetero-budget}
g^\star_{X,\tau}(y,x)
=
\mathbf{1}\left\{
\frac{p(x)\Delta(y\mid x)}
{p_{\mathrm{ref}}(y,x)}
\ge
\lambda_\tau
\right\},
\end{equation}
where \(\lambda_\tau\) is chosen so that
\[
\int g^\star_{X,\tau}(y,x)p_{\mathrm{ref}}(y,x)\,dy\,dx=\tau.
\]
Moreover, \(\theta_X(g^\star_{X,\tau})\) is increasing in \(\tau\) on \((0,\tau^\star_X)\).
\end{proposition}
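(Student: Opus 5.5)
The plan is to reduce \Cref{prop:ext-4} to the single-variable budgeted problem of \Cref{prop:ext-3}, treating the pair \(u=(y,x)\) as the represented outcome. Write the objective as an integral against \(P_{\mathrm{ref}}\):
\[
\theta_X(g)=\int g(y,x)\,s_X(y,x)\,p_{\mathrm{ref}}(y,x)\,dy\,dx,
\qquad
s_X(y,x)=\frac{p(x)\Delta(y\mid x)}{p_{\mathrm{ref}}(y,x)}.
\]
This is legitimate because \(p_{\mathrm{ref}}>0\) on the relevant support. Off that support \(p(x)\Delta(y\mid x)\) vanishes whenever \(p_{\mathrm{ref}}\) is the pooled reference. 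For a general reference we simply restrict attention to the support, as the assumption states. The problem is then to maximize \(\mathbb{E}_{\mathrm{ref}}\{g\,s_X\}\) over \(0\le g\le1\) subject to \(\mathbb{E}_{\mathrm{ref}}\{g\}=\tau\). This is a fractional-knapsack / Neyman--Pearson problem on the probability space \((\mathcal Y\times\mathcal X,P_{\mathrm{ref}})\), and it has exactly the form solved in \Cref{prop:ext-3}.

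\textbf{Optimality of the threshold rule.} Define \(\lambda_\tau\) as the upper \(\tau\)-quantile of \(s_X\) under \(P_{\mathrm{ref}}\). By the no-atom assumption, \(g^\star=\mathbf 1\{s_X\ge\lambda_\tau\}\) then has reference mass exactly \(\tau\). Since \(\tau<\tau^\star_X=P_{\mathrm{ref}}(s_X>0)\), we get \(\lambda_\tau>0\) (or \(\lambda_\tau\ge0\) with the positive set not exhausted). Now take any feasible \(g\). Because both functions have reference mass \(\tau\), we may subtract \(\lambda_\tau\) times the mass difference:
\[
\theta_X(g^\star)-\theta_X(g)=\mathbb{E}_{\mathrm{ref}}\{(g^\star-g)(s_X-\lambda_\tau)\}.
\]
The integrand is pointwise nonnegative: where \(s_X>\lambda_\tau\) we have \(g^\star=1\ge g\), and where \(s_X<\lambda_\tau\) we have \(g^\star=0\le g\). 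So the threshold rule is optimal.

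\textbf{Uniqueness.} For another optimizer, the integrand must vanish almost surely. Hence \(g=g^\star\) off the level set \(\{s_X=\lambda_\tau\}\), and that set is null by the no-atom assumption, giving uniqueness \(P_{\mathrm{ref}}\)-a.s.

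\textbf{Monotonicity in \(\tau\).} For \(\tau<\tau'<\tau^\star_X\), the thresholds satisfy \(\lambda_{\tau'}\le\lambda_\tau\), so the optimal sets are nested. We then have
\[
\theta_X(g^\star_{X,\tau'})-\theta_X(g^\star_{X,\tau})=\int_{\{\lambda_{\tau'}\le s_X<\lambda_\tau\}} s_X\,dP_{\mathrm{ref}}.
\]
The region of integration has mass \(\tau'-\tau>0\), and on it \(s_X\ge\lambda_{\tau'}\). It remains to show \(s_X>0\) there, which gives strict increase. This follows because \(\tau'<\tau^\star_X\) forces \(\lambda_{\tau'}\ge0\), and the set \(\{s_X=0\}\) contributes only when \(\lambda_{\tau'}=0\). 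Alternatively, monotonicity follows without the no-atom assumption at intermediate \(\tau'\): \(g^\star_{X,\tau}\) plus extra mass taken from \(\{s_X>0\}\setminus\{g^\star_{X,\tau}=1\}\) is feasible for \(\tau'\) and has a strictly larger value.

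The main obstacle is the bookkeeping at the threshold and the edge case \(\lambda_\tau=0\). Specifically, we must confirm that the quantile \(\lambda_\tau\) gives an exact-mass set and that \(\tau<\tau^\star_X\) keeps all selected points in \(\{\Delta>0\}\). To handle the intermediate \(\tau'\) in the monotonicity step, which may have atoms, we will use the feasible-augmentation argument above rather than the explicit threshold formula, since it does not need the no-atom condition.
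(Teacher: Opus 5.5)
Your proposal is correct and follows the paper's route. The paper also writes $\theta_X(g)=\int g\,r\,dP_{\mathrm{ref}}$ with $r=p(x)\Delta(y\mid x)/p_{\mathrm{ref}}(y,x)$, applies the threshold argument of \Cref{prop:ext-3} on the joint $(y,x)$ space, and gets monotonicity from the positivity of $\lambda_\tau$ when $\tau<\tau^\star_X$; your pointwise inequality $(g^\star-g)(s_X-\lambda_\tau)\ge0$, uniqueness step, and atom-free augmentation argument just spell out what the paper's short proof leaves implicit (and the no-atom condition already forces $\lambda_\tau>0$ strictly, so your hedge about $\lambda_\tau=0$ is unnecessary).
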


The proof is in \Cref{app:pf-ext-4}. The score inside \Cref{eq:hetero-budget} is the covariate-specific treatment enrichment score. Under the pooled reference distribution above,
\[
\frac{p(x)\Delta(y\mid x)}{p_{\mathrm{ref}}(y,x)}
=
\frac{\Delta(y\mid x)}
{\pi p_1(y\mid x)+(1-\pi)p_0(y\mid x)}.
\]
For a fixed covariate profile \(x\), this score is large when notes near \(y\) become much more common under treatment than under control, relative to how often such notes appear among units with that covariate profile. The single threshold \(\lambda_\tau\) lets the budget be spent where the treatment-induced difference is sharpest across the joint space of notes and covariates. In the running example, routine-visit notes assigned value one may be those with templated structure, while complex-visit notes assigned value one may be those with fuller histories or more explicit plans.

Define the reference mass of the zero conditional-density-difference set as
\[
\tau_{0,X}
=
\int \mathbf{1}\{\Delta(y\mid x)=0\}p_{\mathrm{ref}}(y,x)\,dy\,dx.
\]
At \(\tau=\tau^\star_X\), the exact-budget optimizer assigns value one to all \((y,x)\) pairs with positive conditional density difference and value zero to the zero-difference set. For \(\tau\in(\tau^\star_X,\tau^\star_X+\tau_{0,X}]\), additional mass can be assigned to zero-difference pairs without changing the optimal contrast. Negative-difference pairs are required only when \(\tau>\tau^\star_X+\tau_{0,X}\). With an upper-bound budget, the optimum saturates at \(\tau^\star_X\).

\textbf{Using the budget in practice.}
In practice, one fits the MCF, chooses several budgets, and inspects the outcomes selected at each budget. For clinical notes, this means comparing selected notes with low-scoring or reference notes, retrieving near neighbors, constructing minimal pairs when possible, and asking domain experts to describe recurring differences. These descriptions can then be summarized with hand-coded variables, structured annotations, or follow-up confirmatory analyses. The MCF supplies an exploratory map of where the treatment-induced contrast is concentrated in the represented outcome space.

\section{When both treatments and outcomes are unstructured}
\label{sec:extension-hd-treatment-outcome}

The preceding sections study a setting in which the treatment is simple and the outcome is complex. A clinician either uses an AI documentation tool or does not; the object of interest is the note that is eventually written. In many applications, the same difficulty appears on both sides of the causal relationship. The treatment is itself a document, image, plan, prompt, or other unstructured object, and the outcome is another unstructured object.

We continue with the clinical-documentation example. During a visit, an AI system may show the clinician a draft suggestion: a paragraph, an assessment, a list of questions, or a proposed plan. The treatment $A$ is this suggestion. The outcome $Y$ is the final note signed by the clinician. Both $A$ and $Y$ are unstructured text objects. The covariates $X$ include the visit type, patient complexity, clinician, specialty, time pressure, and other recorded factors that may shape both the suggestion produced by the system and the note eventually written.

The causal question is which aspects of the suggestion are associated with which aspects of the final note, after comparing visits with similar observed covariates. Do suggestions that contain concrete assessment-and-plan language lead to notes with more complete follow-up plans? Do highly templated suggestions lead to more templated notes? Do patient-specific suggestions lead to final notes that retain more patient-specific details? These are causal questions about directions in two unstructured spaces.

\textbf{A two-sided feature.}
The MCF applies a feature-scoring function to an unstructured outcome, with the function chosen by a causal contrast. Here we need two such functions. We learn a treatment score~$f(A)$ and an outcome score~$g(Y)$. The treatment score marks the aspect of the suggestion being studied; the outcome score marks the aspect of the final note being studied. The pair~$(f,g)$ is chosen so that, among comparable visits, high values of the treatment score line up with high values of the outcome score.

A related treatment-side problem was studied by \citet{wibisono2026causal}, where the treatment is unstructured and the outcome is scalar. The present setting adds the outcome-side difficulty: the scalar outcome feature is also learned from the data. The object of interest is therefore a matched pair of feature-scoring functions. Below we first consider global functions~$f(A)$ and~$g(Y)$, whose meanings do not vary with~$X$. We then give the covariate-adaptive version, where the same suggestion or note may be scored differently at different covariate profiles.

\subsection{Maximally influential treatment features and maximally contrasting outcome features}
\label{sec:hdto-homogeneous}

Let $A$ denote the treatment object, $Y$ the outcome object, and $X$ the observed confounders. We learn two bounded feature-scoring functions,
\[
    f(A)\in(\epsilon,1-\epsilon),
    \qquad
    g(Y)\in(0,1),
\]
for a fixed lower-bound constant~$\epsilon\in(0,1/2)$. The bounds put both scores on a stable scale and prevent a degenerate treatment score at exactly zero or one. In the documentation example, a high value of $f(A)$ may mean that the AI suggestion contains a concrete clinical plan, while a high value of $g(Y)$ may mean that the final note contains a complete follow-up plan. In practice, we often parametrize them as
\[
    f_\gamma(A)
    =
    \epsilon+(1-2\epsilon)\sigma\{F_\gamma(A)\},
    \qquad
    g_\beta(Y)
    =
    \sigma\{G_\beta(Y)\},
\]
where $F_\gamma$ and $G_\beta$ are neural networks with trainable parameter vectors~$\gamma$ and~$\beta$, respectively, and $\sigma$ is the sigmoid function. The population definitions below do not depend on this parameterization.

\textbf{The covariate baseline.}
For a fixed outcome feature $g$, define
\[
    m_g(x)=\E\{g(Y)\mid X=x\}.
\]
This is the average value of the learned outcome score among units with covariates $x$. In the running example, $m_g(x)$ is the average value of the note feature among visits with the same recorded characteristics. The residual
\[
    g(Y)-m_g(X),
\]
measures whether the observed final note is high on the learned feature relative to comparable visits.

\textbf{MIF--MCF.} To identify the maximally influential features (MIF) of the treatment and the maximally contrasting features (MCF) of the outcome, we choose $f$ and $g$ by maximizing
\begin{equation}
\label{eq:hdto-objective}
    \mathcal{J}(f,g)
    =
    \E\!
    \left[
        f(A)\{g(Y)-m_g(X)\}
    \right].
\end{equation}
The criterion rewards a treatment feature that gives high scores to treatment objects whose paired outcomes are high after covariate adjustment. If, among comparable visits, concrete AI suggestions are followed by unusually complete final notes, then a treatment score measuring concreteness and an outcome score measuring completeness increase the objective.

\textbf{Within-covariate association.}
The same objective can be read symmetrically. It is the average conditional covariance between the learned treatment score and the learned outcome score.
\begin{proposition}[Equivalent forms of the joint feature objective]
\label{prop:hdto-equivalent}
For any measurable $f$ and $g$ such that the expectations exist,
\[
\begin{aligned}
\E\!
\left[
    f(A)\{g(Y)-\E(g(Y)\mid X)\}
\right]
&\;=\;
\E\!
\left[
    \{f(A)-\E(f(A)\mid X)\}
    \{g(Y)-\E(g(Y)\mid X)\}
\right]
\\
&\;=\;
\E\!
\left[
    \operatorname{Cov}\{f(A),g(Y)\mid X\}
\right].
\end{aligned}
\]
\end{proposition}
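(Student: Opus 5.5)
The plan is to prove both equalities by conditioning on \(X\) and applying the tower property. Write \(F=f(A)\), \(G=g(Y)\), \(m_f(X)=\E(F\mid X)\) and \(m_g(X)=\E(G\mid X)\). We assume that \(F\), \(G\) and \(FG\) are integrable; this is what ``the expectations exist'' is taken to mean. In the paper's setting both scores are bounded, so these conditions hold automatically.

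For the first equality, expand the right-hand side:
\[
\E[\{F-m_f(X)\}\{G-m_g(X)\}]
=
\E[F\{G-m_g(X)\}]-\E[m_f(X)\{G-m_g(X)\}].
\]
It then suffices to show that the second term vanishes. Condition on \(X\). Since \(m_f(X)\) is \(X\)-measurable, it factors out of the inner conditional expectation:
\[
\E[m_f(X)\{G-m_g(X)\}]=\E[m_f(X)\,\E\{G-m_g(X)\mid X\}].
\]
The inner conditional expectation equals \(m_g(X)-m_g(X)=0\), so the term is zero. This is the usual orthogonality of a residual to functions of the conditioning variable.

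For the second equality, use the definition \(\operatorname{Cov}(F,G\mid X)=\E[\{F-m_f(X)\}\{G-m_g(X)\}\mid X]\). Taking the outer expectation and applying the tower property recovers the middle expression.

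The only point needing care is integrability, so that the expansion and the pulling-out step are legitimate. Pulling \(m_f(X)\) out of the conditional expectation requires \(m_f(X)\{G-m_g(X)\}\) to be integrable. This holds when \(F\) and \(G\) are bounded, which is the case for the paper's \(f\in(\epsilon,1-\epsilon)\) and \(g\in(0,1)\). More generally it holds when \(F,G\in L^2\), by Cauchy--Schwarz together with the conditional Jensen inequality. I would state this integrability assumption explicitly; no other obstacle is expected.
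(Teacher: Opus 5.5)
Your proof is correct and follows essentially the same route as the paper: both rest on the tower-property fact that the residual $g(Y)-\E\{g(Y)\mid X\}$ is orthogonal to functions of $X$. The paper reduces both sides to $\E\{f(A)g(Y)\}-\E[\E\{f(A)\mid X\}\E\{g(Y)\mid X\}]$, whereas you show directly that the cross term vanishes; in both, the second equality is the definition of conditional covariance plus iterated expectation, and your explicit boundedness or $L^2$ condition is a welcome tightening of the paper's informal ``expectations exist.''
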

The proof is in \Cref{app:proof_hdto_equivalent}. This identity is the adjustment step in a compact form. The comparison is made within covariate strata and then averaged over the population. In the documentation example, the method compares suggestions and notes after removing the part of the note feature predictable from the visit, clinician, patient, and other observed covariates.

\subsection[Estimating the MIF--MCF pair by matched negative control outcomes]{Estimating the MIF--MCF pair by matched negative control outcomes}
\label{sec:hdto-estimation}

The population objective contains the function~$m_g(x)=\E\{g(Y)\mid X=x\}$. This function changes whenever~$g$ changes. Regressing the current value of~$g(Y)$ on~$X$ at every training step would put one learning problem inside another. We instead estimate the covariate baseline by comparing each observed outcome with matched negative-control outcomes.

\textbf{Negative control outcomes.}
Let $Y^\dagger$ be a negative control outcome satisfying
\[
    Y^\dagger\mid X \sim Y\mid X,
    \qquad
    Y^\dagger \perp A \mid X.
\]
Thus $Y^\dagger$ has the same covariate-specific distribution as $Y$, but carries no remaining association with the treatment object once $X$ is fixed. In the running example, a negative outcome for a visit is a final note from another visit with similar recorded characteristics, rather than the note paired with the AI suggestion actually shown.

\begin{proposition}[Negative control outcomes recover the centered objective]
\label{prop:negative-outcome-hdto}
For any measurable $f$ and $g$, we have
\[
    \E\{f(A)g(Y)\}
    -
    \E\{f(A)g(Y^\dagger)\}
    =
    \E\!
    \left[
        f(A)\{g(Y)-\E(g(Y)\mid X)\}
    \right].
\]
\end{proposition}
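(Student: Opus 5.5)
The plan is to compute the second term $\E\{f(A)g(Y^\dagger)\}$ by conditioning on $X$ and using the two defining properties of the negative control outcome. The first term $\E\{f(A)g(Y)\}$ is left untouched. Subtracting will then give the claimed centered objective.

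The key step is the tower property applied to the second term:
\[
\E\{f(A)g(Y^\dagger)\}=\E\bigl[\E\{f(A)g(Y^\dagger)\mid X\}\bigr].
\]
Because $Y^\dagger\perp A\mid X$, the inner conditional expectation factorizes:
\[
\E\{f(A)g(Y^\dagger)\mid X\}=\E\{f(A)\mid X\}\,\E\{g(Y^\dagger)\mid X\}.
\]
Because $Y^\dagger\mid X\sim Y\mid X$, we have $\E\{g(Y^\dagger)\mid X\}=\E\{g(Y)\mid X\}=m_g(X)$. This yields
\[
\E\{f(A)g(Y^\dagger)\}=\E\bigl[\E\{f(A)\mid X\}\,m_g(X)\bigr].
\]

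Since $m_g(X)$ is $X$-measurable, we can "pull in" $f(A)$:
\[
\E\bigl[\E\{f(A)\mid X\}m_g(X)\bigr]=\E\bigl[\E\{f(A)m_g(X)\mid X\}\bigr]=\E\{f(A)m_g(X)\}.
\]
Subtracting this from $\E\{f(A)g(Y)\}$ and using linearity gives $\E[f(A)\{g(Y)-\E(g(Y)\mid X)\}]$, as claimed.

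The only point requiring care is integrability, which is what justifies the factorization and the pulling-in step. Under the bounded feature class of this section ($f\in(\epsilon,1-\epsilon)$, $g\in(0,1)$), all products are bounded and integrability is automatic. For general measurable $f,g$, I would assume, as in \Cref{prop:hdto-equivalent}, that the relevant expectations exist, specifically that $f(A)g(Y)$, $f(A)g(Y^\dagger)$ and $f(A)m_g(X)$ are integrable. Conditional independence then gives the factorization for nonnegative parts first, and the general case follows by splitting into positive and negative parts. I expect this bookkeeping to be the only mild obstacle; the algebra itself is immediate.
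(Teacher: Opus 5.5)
Your proposal is correct and follows the paper's proof essentially step for step. Both arguments condition on $X$, factor $\E\{f(A)g(Y^\dagger)\mid X\}$ using $Y^\dagger\perp A\mid X$, replace $\E\{g(Y^\dagger)\mid X\}$ by $\E\{g(Y)\mid X\}$ via the equal conditional laws, and match the result with $\E[f(A)\E\{g(Y)\mid X\}]$ by the tower property. Your integrability remark, which is automatic for the bounded classes actually used, is a sensible addition that the paper leaves implicit.
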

The proof is in \Cref{app:proof_negative-outcome-hdto}.

\textbf{The empirical objective.}
In finite samples, exact draws from $Y\mid X=X_i$ are rarely available. We approximate them by matching. Let~$n$ be the sample size. For each observation~$i$, let~$N(i)$ be a neighborhood of observations with covariates close to~$X_i$, and let the nonnegative matching weights~$w_{ij}$ satisfy~$\sum_{j\in N(i)}w_{ij}=1$. For example, $N(i)$ may contain the~$K$ nearest neighbors of~$X_i$, where~$K$ is the neighborhood size. With a kernel function~$\mathcal K$ and bandwidth~$h$, the weights may be
\[
w_{ij}=\frac{\mathcal K(\|X_i-X_j\|/h)}{\sum_{\ell\in N(i)}\mathcal K(\|X_i-X_\ell\|/h)},\qquad j\in N(i).
\]
For the current outcome feature $g_\beta$, the matched baseline is
\[
    \widehat b_\beta(X_i)
    =
    \sum_{j\in N(i)} w_{ij}g_\beta(Y_j).
\]
We thus estimate $(\gamma,\beta)$ by maximizing
\begin{equation}
\label{eq:hdto-empirical}
    \widehat{\mathcal{J}}(\gamma,\beta)
    =
    \frac{1}{n}
    \sum_{i=1}^n
    f_\gamma(A_i)
    \left\{
        g_\beta(Y_i)
        -
        \sum_{j\in N(i)} w_{ij}g_\beta(Y_j)
    \right\}.
\end{equation}
The matching weights depend only on $X$ and are fixed during training. The two networks can therefore be updated together, while the adjustment set remains fixed.

After training, high values of $f_{\hat\gamma}(A)$ mark treatment objects associated with positive residualized values of the learned outcome feature. High values of $g_{\hat\beta}(Y)$ mark outcome objects associated with positive residualized values of the learned treatment feature. In the running example, the learned pair can be inspected by comparing high- and low-scoring AI suggestions, and high- and low-scoring final notes.

\begin{remark}[Exact centering]
When the covariate baseline can be computed directly, negative outcomes are unnecessary. If $X$ is discrete with repeated values, define
\[
    \widehat m_\beta(x)
    =
    \frac{
        \sum_{j=1}^n \mathds{1}(X_j=x)g_\beta(Y_j)
    }{
        \sum_{j=1}^n \mathds{1}(X_j=x)
    }.
\]
Then one can optimize
\[
    \widehat{\mathcal{J}}_{\mathrm{exact}}(\gamma,\beta)
    =
    \frac{1}{n}
    \sum_{i=1}^n
    f_\gamma(A_i)
    \{g_\beta(Y_i)-\widehat m_\beta(X_i)\}.
\]
Here $\widehat m_\beta(X_i)$ is recomputed as $g_\beta$ changes, while the optimization remains over $(\gamma,\beta)$.
\end{remark}

To interpret the learned MIF--MCF pair, we follow the approach in \Cref{sec:mcf-interpret}. \Cref{sec:hdto-selection} provides a detailed analysis of the MIF--MCF objective, and \Cref{sec:hdto-heterogeneous} extends the construction to heterogeneous effects.

\section{Empirical studies}
\label{sec:experiments}

We illustrate the MCF algorithm using synthetic and semi-synthetic experiments. The empirical studies ask whether the MCF identifies the represented aspect of an unstructured outcome that changes under treatment, whether a covariate-adaptive MCF recovers context-dependent changes, and whether paired feature-scoring functions recover linked directions when both the treatment and the outcome are unstructured. Experiment~I studies text-based outcomes, Experiment~II studies image-based outcomes, and Experiment~III considers settings in which both treatments and outcomes are unstructured.

We find the following. For text-based outcomes, the MCF recovers global and context-dependent changes in formality and non-toxicity and isolates the textual dimensions changed by treatment. For image-based outcomes, the learned feature score separates low- and high-blur cell images, and nudging increases blur while holding cell-count content fixed. When both treatment and outcome are unstructured, paired feature-scoring functions isolate linked treatment--outcome coordinates and recover prompt-to-headline formality after adjusting for topic.

\subsection{Experiment I: Empirical studies on text-based outcomes}
\label{sec:exp-text-outcome}

We first evaluate the MCF algorithm when the outcome is text. The goal is to learn an interpretable feature-scoring function of the text outcome that is shifted by treatment after adjusting for context. The formality study examines treatment-induced formality and compares the learned feature-scoring function with a topic-model-based alternative. The detoxification study considers a safety-relevant setting in which treatment may induce more or less non-toxic language depending on context. The third study asks whether the MCF isolates the exact combination of textual dimensions changed simultaneously by treatment.

\subsubsection{Treatment-induced formality in text outcomes}
\label{sec:formal-gyafc}

\textbf{Causal question.}
We begin with a semi-synthetic experiment using the GYAFC dataset~\citep{rao2018dear}, motivated by text-generation systems used on online platforms. Suppose a platform deploys an editing or writing-assistance tool that can affect the style of user-facing text, such as comments, replies, or short posts. The platform observes the context in which the text is written, such as whether it concerns the GYAFC domains Entertainment \& Music or Family \& Relationships, and wants to understand which aspect of the generated text changes under the intervention. The treatment~$A$ indicates whether a formality-inducing intervention is applied, the outcome~$Y$ is the resulting text, and the context~$X$ records the topic. The causal question is: \textit{which textual feature of the outcome is induced by treatment after accounting for the context in which the text is produced?}

\textbf{Experimental setup.}
The binary style label~$F$ indicates whether a sentence is formal. Treatment assignment depends on context, and context also affects the outcome-text distribution; the context is therefore a confounder.

\textbf{Treatment scenarios.}
In Scenario~1, treatment increases formality in both contexts, so the target is a homogeneous feature-scoring function~$g(Y)$ that assigns high scores to formal sentences and low scores to informal sentences. In Scenario~2, treatment decreases formality in entertainment contexts and increases it in family contexts. This mechanism represents an intervention that may encourage a conversational style in entertainment contexts but a formal style in family or advice-related contexts. The target is therefore a covariate-adaptive feature-scoring function~$g(Y,X)$ that captures the treatment direction relative to the context-specific baseline, rather than a global formality classifier.

\textbf{Estimation.}
We estimate both functions using the proposed objective. Text outcomes are embedded before training, and all summaries use a held-out validation set. The complete variable definitions, assignment mechanism, and outcome mechanisms are in \Cref{app:emp-formality}.

\textbf{Results.}
\Cref{fig:formal_heatmap} shows that the learned scores follow the treatment-induced direction. In Scenario~1, the scores are low for informal sentences and high for formal sentences in both contexts. The average feature scores for informal and formal entertainment sentences are approximately $0.19$ and $0.86$, while the corresponding scores for family sentences are approximately $0.21$ and $0.86$. Thus, the homogeneous MCF recovers formality as the treatment-induced textual direction.

In Scenario~2, the learned feature-scoring function adapts to context. Informal and formal entertainment sentences have average scores of approximately $0.68$ and $0.10$. In the family context, the pattern reverses: informal and formal sentences have average scores of approximately $0.23$ and $0.72$. The covariate-adaptive MCF therefore does not simply learn a global formality classifier; it identifies the outcome-side feature associated with the treatment direction after adjusting for context.

\begin{figure}[t]
\centering
\begin{subfigure}[b]{0.47\textwidth}
\centering
\includegraphics[width=\textwidth]{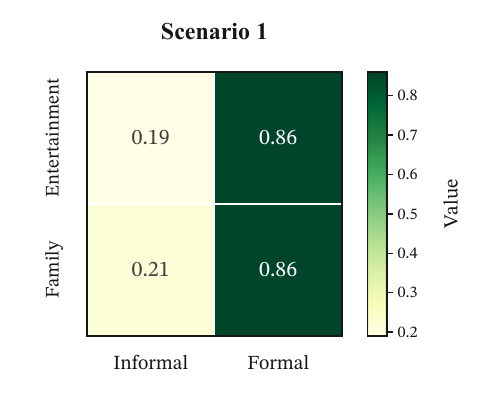}
\caption{Scenario 1}
\end{subfigure}
\hfill
\begin{subfigure}[b]{0.47\textwidth}
\centering
\includegraphics[width=\textwidth]{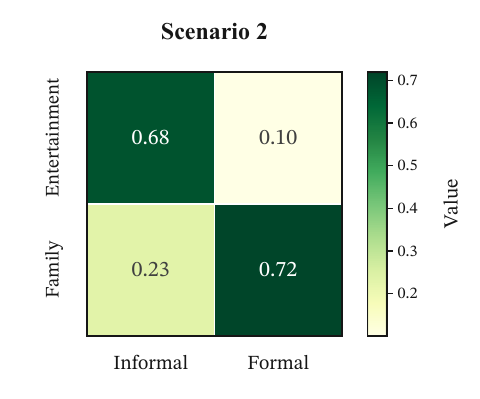}
\caption{Scenario 2}
\end{subfigure}
\caption{The learned feature scores recover the global and context-dependent formality directions. Average learned outcome scores by context and style label in the formality experiment. In Scenario 1, the learned feature assigns higher values to formal sentences in both contexts. In Scenario 2, the learned feature is context-dependent: informal sentences receive higher values in the entertainment context, while formal sentences receive higher values in the family context.}
\label{fig:formal_heatmap}
\end{figure}

Embedding-based nudging~\citep{wibisono2026causal} checks whether increasing the learned feature score yields the transformation predicted by the data-generating process. In Scenario~1, the average formality score increases monotonically from approximately $0.15$ to $0.82$ over ten iterations. In Scenario~2, nudging initially informal family text increases average formality from approximately $0.28$ to $0.63$, while nudging initially formal entertainment text decreases average formality from approximately $0.84$ to $0.72$. Thus, nudging follows the context-specific treatment direction rather than imposing a global increase in formality. The full low- and high-score examples, qualitative interpretation, procedure, and complete trajectories are in \Cref{app:emp-formality}.

\Cref{tab:formal-nudging-examples} retains concrete text examples in the main paper. The nudged sentences are expected to preserve the broad content of the originals while shifting toward a more formal register. These examples provide a qualitative check that the learned feature-scoring function corresponds to a meaningful and interpretable text transformation.

\begin{table}[t]
    \centering
    \small
    \begin{tabularx}{\textwidth}{p{0.09\textwidth}>{\raggedright\arraybackslash}X>{\raggedright\arraybackslash}X}
        \toprule
        Example & Original text & Nudged text \\
        \midrule
        1 & \textit{I HAVE SO MANY I'LL CHOOSE WAT U KNOW ABOUT THAT HAVE U EVER HEARD OF DAT!!!!!!!!!} & \textit{I have so many I choose Those I know of You have heard of this. Is it?} \\
        2 & \textit{MARRIAGE WONT CHANGE A THING EITHER WAY!} & \textit{Marriage Will Change in Another Way.} \\
        3 & \textit{What I like about the 5-string is that I don't ever re-tune it.} & \textit{The thing I like about the five-string is that I do not readjust it.} \\
        4 & \textit{(And if she comes up with any other answer, just tell her no!)} & \textit{If she comes up with any other answer, just say no.} \\
        5 & \textit{what kaitlyn said im sorry im no help its my job to disappoint people} & \textit{What Kaitlyn said is sorry. I am helpless. It is my job to disappoint people.} \\
        \bottomrule
    \end{tabularx}
    \caption{Decoded nudges express the learned global formality direction as more formal text. Examples of original and nudged text outcomes in Scenario 1. Nudging increases the learned outcome feature and produces more formal text.}
    \label{tab:formal-nudging-examples}
\end{table}

\textbf{Comparison with topic modeling.}
We instantiate the codebook framework of \citet{egami2022make} with latent Dirichlet allocation~\citep{blei2003latent}, focusing on Scenario~1, the context-invariant case. The comparison fits models with $K\in\{2,\ldots,10\}$ topics and searches for the subset of dominant topics with the largest causal contrast. Comparing the topics inside and outside this subset is intended to reveal which aspects of the text are causally affected. The model with $K=8$ has the largest contrast, but its selected and unselected topics do not map directly to formality, sentiment, or another clear stylistic dimension. The complete construction, all representative topic words, and the interpretation of this limitation are in \Cref{app:emp-formality-lda}.

\subsubsection{Treatment-induced toxicity in text outcomes}
\label{sec:toxicity}

\textbf{Causal question.}
The second study is a safety-relevant text-outcome experiment using the ParaDetox dataset~\citep{logacheva2022paradetox}, which contains toxic sentences paired with human-written non-toxic paraphrases. It follows the structure of the GYAFC study but replaces formality with non-toxicity. The motivating platform deploys a moderation or reply-assistance intervention that may affect the safety of user-facing text and observes whether the comment appears in a high-conflict discussion or a moderated support setting. The causal question is: \textit{which feature of the outcome is induced by treatment after accounting for the context in which the text is produced?} We define the external non-toxicity score as one minus the toxicity score returned by Detoxify~\citep{Detoxify}.

\textbf{Experimental setup.}
The binary context~$X$ distinguishes high-conflict discussions from moderated support settings, and the binary label~$T$ distinguishes toxic from non-toxic text. Context affects both treatment assignment and the outcome-text distribution and therefore acts as a confounder.

\textbf{Treatment scenarios.}
Scenario~1 makes treatment increase non-toxicity in both contexts. Its homogeneous feature-scoring function~$g(Y)$ should assign high scores to non-toxic sentences and low scores to toxic sentences. Scenario~2 makes treatment decrease non-toxicity in high-conflict discussions and increase it in moderated support settings. It represents an intervention that may amplify confrontational language in one setting while encouraging safer or more neutral language in the other. Its covariate-adaptive feature-scoring function~$g(Y,X)$ should therefore recover the treatment direction relative to the context-specific baseline, rather than a global toxicity classifier.

\textbf{Estimation.}
Text outcomes are embedded before training, and all reported summaries use a held-out validation set. The complete variable definitions, assignment mechanism, and outcome mechanisms are in \Cref{app:emp-toxicity}.

\textbf{Results.}
\Cref{fig:toxic_heatmap} shows that Scenario~1 recovers global non-toxicity. The learned scores are low for toxic sentences and high for non-toxic sentences in both contexts. In the high-conflict context, toxic and non-toxic sentences have average scores of approximately $0.05$ and $0.92$. In the moderated support context, the corresponding scores are approximately $0.03$ and $0.91$. Thus, the learned feature-scoring function recovers non-toxicity as the treatment-induced textual direction.

In Scenario~2, the learned feature-scoring function adapts to context. In the high-conflict context, toxic and non-toxic sentences have average scores of approximately $0.84$ and $0.06$. In the moderated support context, the pattern reverses: the corresponding scores are approximately $0.04$ and $0.82$. Thus, the algorithm does not merely learn a global toxicity or non-toxicity score; it identifies the outcome-side feature associated with the treatment direction after adjusting for context.

\begin{figure}[t]
\centering
\begin{subfigure}[b]{0.47\textwidth}
\centering
\includegraphics[width=\textwidth]{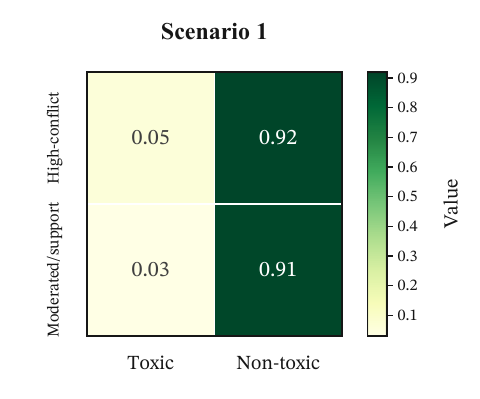}
\caption{Scenario 1}
\end{subfigure}
\hfill
\begin{subfigure}[b]{0.47\textwidth}
\centering
\includegraphics[width=\textwidth]{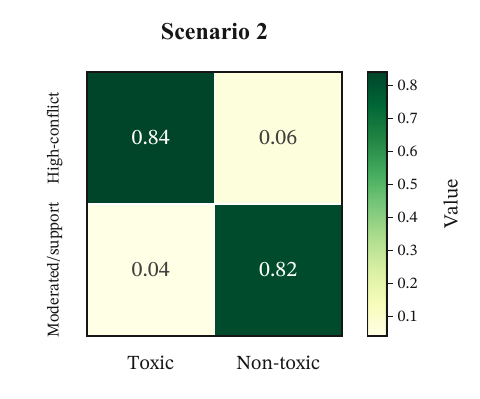}
\caption{Scenario 2}
\end{subfigure}
\caption{The learned feature scores recover global non-toxicity and the opposing context-dependent directions. Average learned outcome scores by context and toxicity label in the toxicity experiment. In Scenario 1, the learned feature assigns higher values to non-toxic sentences in both contexts. In Scenario 2, the learned feature is context-dependent: toxic sentences receive higher values in the high-conflict context, while non-toxic sentences receive higher values in the moderated support setting.}
\label{fig:toxic_heatmap}
\end{figure}

Nudging again checks the learned direction. In Scenario~1, the average non-toxicity score increases monotonically from approximately $0.04$ to $0.77$ over ten iterations. In Scenario~2, nudging initially toxic moderated-support text increases average non-toxicity from approximately $0.05$ to $0.72$, while nudging initially non-toxic high-conflict text decreases average non-toxicity from approximately $0.96$ to $0.83$. The learned transformation therefore follows the context-dependent direction induced by treatment, rather than imposing a global push toward safer language. The full low- and high-score examples, qualitative interpretation, procedure, and complete trajectories are in \Cref{app:emp-toxicity}.

\Cref{tab:toxic-nudging-examples} retains representative text transformations in the main paper. The nudged comments are expected to preserve the broad content of the originals while shifting toward less toxic language. These examples provide a qualitative check that the learned feature-scoring function corresponds to a meaningful and interpretable safety-related text transformation.

\begin{table}[t]
    \centering
    \small
    \begin{tabularx}{\textwidth}{p{0.09\textwidth}>{\raggedright\arraybackslash}X>{\raggedright\arraybackslash}X}
        \toprule
        Example & Original text & Nudged text \\
        \midrule
        1 & \textit{did you use that diploma to wipe your ass with yet ?} & \textit{Did you use that with to wipe yourself with yet?} \\
        2 & \textit{the writing is totally fucked up !} & \textit{The writing is completely terrible!} \\
        3 & \textit{delete the page and shut up	} & \textit{Remove the page and shut} \\
        4 & \textit{wow you 're super awesome bitch} & \textit{Wow you're super awesome girl} \\
        5 & \textit{what the fuck is going on ? ? !} & \textit{What's happening?} \\
        \bottomrule
    \end{tabularx}
    \caption{Decoded nudges express the learned safety direction as less toxic text. Examples of original and nudged text outcomes in Scenario 1. Nudging increases the learned outcome feature and produces less toxic text.}
    \label{tab:toxic-nudging-examples}
\end{table}

\subsubsection{Multiple treatment-induced textual dimensions}
\label{sec:multiple-dimension}

\textbf{Causal question.}
The previous studies consider settings in which treatment primarily affects one interpretable textual dimension, such as formality or non-toxicity. In practice, however, a text intervention may change several aspects of the outcome simultaneously. For example, a writing-assistance tool may make a response more formal while also changing its punctuation, emotional intensity, or conversational style. This study asks whether the MCF recovers the specific combination of textual attributes induced by treatment, rather than merely recovering a pre-specified label. The causal question is: \textit{which combination of textual attributes is induced by treatment?}

\textbf{Experimental setup.}
The binary attribute~$F$ records formality, and the binary attribute~$P$ records the presence of a question mark or exclamation mark. The study uses the GYAFC data~\citep{rao2018dear} and the same treatment-assignment mechanism as the formality study, changing only the outcome mechanism. In Scenario~1, treatment changes both formality and punctuation, so the relevant feature combines both attributes. In Scenario~2, treatment changes punctuation but not formality, so the relevant feature is punctuation alone. In Scenario~3, treatment changes formality but not punctuation, so the relevant feature is formality alone. Texts are represented using the StyleDistance algorithm of \citet{patel2024styledistance}. The complete variable definitions, conditional-independence assumption, and scenario probabilities are in \Cref{app:emp-multiple-dimensions}.

\textbf{Results.}
\Cref{tab:mult-dim} contains the complete results. In Scenario~1, the largest score, $0.940$, occurs for text that is both formal and contains a question or exclamation mark. Texts containing only one of the two treatment-induced attributes receive smaller scores, and texts containing neither receive the smallest score. Thus, the learned feature-scoring function captures the joint textual direction induced by treatment.

In Scenario~2, punctuated texts receive high scores regardless of formality: $0.954$ for $(F,P)=(0,1)$ and $0.935$ for $(F,P)=(1,1)$. Texts without punctuation receive low scores regardless of formality. Thus, the learned feature-scoring function correctly ignores formality and recovers punctuation as the treatment-induced dimension.

In Scenario~3, the pattern reverses. Formal texts receive high scores regardless of punctuation: $0.962$ for $(F,P)=(1,0)$ and $0.969$ for $(F,P)=(1,1)$. Informal texts receive low scores whether or not they contain question or exclamation marks. Thus, the learned feature-scoring function recovers formality as the treatment-induced dimension and ignores punctuation. Overall, the proposed objective recovers different outcome-side textual features according to how treatment changes the text distribution.

\begin{table}[t]
\centering
\caption{The MCF recovers the formality~($F$) and punctuation~($P$) dimensions changed in each scenario. Average $\hat{g}(Y)$ values for each text type across all three scenarios. In each scenario, the learned $\hat{g}$ values are consistent with the corresponding data generating process.}
\begin{tabular}{ccccc}
\toprule
& $F=0$, $P=0$ & $F=0$, $P=1$ & $F=1$, $P=0$ & $F=1$, $P=1$ \\
\midrule
\textit{Scenario 1} & $0.000$ & $0.119$ & $0.204$ & $\mathbf{0.940}$ \\
\textit{Scenario 2} & $0.091$ & $\mathbf{0.954}$ & $0.081$ & $\mathbf{0.935}$ \\
\textit{Scenario 3} & $0.036$ & $0.043$ & $\mathbf{0.962}$ & $\mathbf{0.969}$ \\
\bottomrule
\end{tabular}
\label{tab:mult-dim}
\end{table}

\FloatBarrier
\subsection{Experiment II: Empirical studies on image-based outcomes}
\label{sec:exp-image-outcome}

The second experiment asks whether the MCF isolates a treatment-induced style direction when the outcome is an image.

\textbf{Causal question.}
The study uses synthetic cell-body-stain images from the BBBC005v1 dataset in the Broad Bioimage Benchmark Collection~\citep{ljosa2012annotated}, restricting attention to images with at most $40$ cells. A common imaging question is whether an acquisition or processing intervention changes image style, such as blur or focus, after accounting for content. The causal question is: \textit{does the intervention change focus blur after accounting for the number of cells in the image?}

\textbf{Experimental setup.}
We define \textit{content} as the number of cells and \textit{style} as every image feature other than the number of cells; focus blur is the style of interest. An adversarial autoencoder separates content and style using a $32$-dimensional style representation. Treatment assignment depends on whether the cell count is below $20$, and treatment changes a target blur level used to sample an image from the appropriate cell-count group. We learn the MCF~$g(Y)$ from observational data, where the represented outcome~$Y$ is the image-style embedding. The complete representation, target-blur mechanism, truncation, and nearest-image sampling procedure are in \Cref{app:emp-image-representation}. The complete style-transfer grid and its interpretation are in \Cref{app:emp-image-examples}.

\textbf{Results.}
\Cref{fig:scatter-image-cell} shows a positive relationship between observed blur and the learned feature score in both training and test data. In the test set, images with $\hat g(Y)<0.2$ have mean blur $10.0$, while images with $\hat g(Y)>0.8$ have mean blur $34.6$. Thus, the MCF separates low- and high-blur styles after adjustment for cell-count content.

\begin{figure}[H]
    \centering
    \begin{subfigure}[b]{0.45\textwidth}
        \centering
        \includegraphics[width=\textwidth]{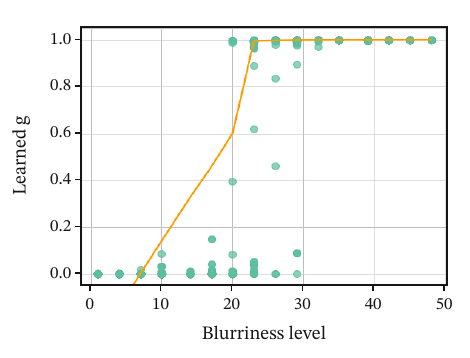}
        \caption{Training set}
    \end{subfigure}
    \hfill
    \begin{subfigure}[b]{0.45\textwidth}
        \centering
        \includegraphics[width=\textwidth]{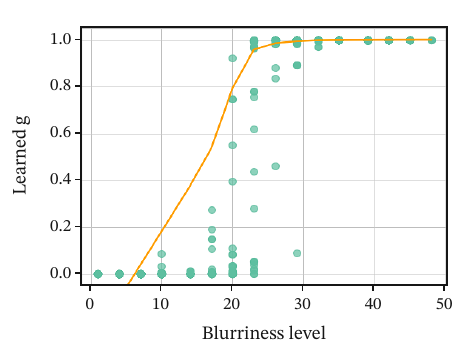}
        \caption{Test set}
    \end{subfigure}
    \caption{The learned feature score is positively associated with cell-image blur in both samples. Scatter plots of $\hat{g}$ vs. strengths of blurriness in the training and test sets. As expected, larger blurriness levels tend to correspond with larger values of $\hat{g}$.}
\label{fig:scatter-image-cell}
\end{figure}

\Cref{fig:nudging-cell} shows representative nudging results. Nudging iteratively adjusts the image-style representation along the gradient of~$\hat g$ while holding content fixed~\citep{wibisono2026causal}. It increases image blur while preserving the number of cells, consistent with the behavior captured by the learned feature-scoring function.

\begin{figure}[H]
    \centering
    \begin{subfigure}[b]{0.47\textwidth}
        \centering
        \includegraphics[width=\textwidth]{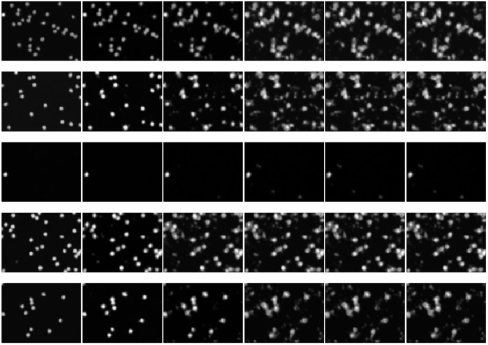}
    \end{subfigure}
    \hfill
    \begin{subfigure}[b]{0.47\textwidth}
        \centering
        \includegraphics[width=\textwidth]{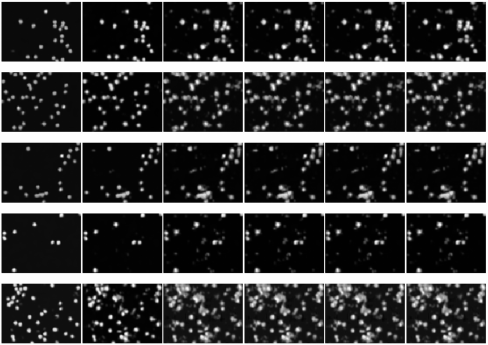}
    \end{subfigure}
    \caption{Nudging increases blur while holding cell-count content fixed. Nudging results for selected cell images. We observe that nudging generally makes images more blurry.}
\label{fig:nudging-cell}
\end{figure}

\FloatBarrier
\subsection{Experiment III: Empirical studies on simultaneous text-based treatments and outcomes}
\label{sec:exp-text-treatment-outcome}

The third experiment considers settings in which both treatment and outcome are unstructured. The treatment~$A$ is either a numerical vector or a text prompt, and the outcome~$Y$ is either a numerical vector or generated text. The goal is to learn paired feature-scoring functions~$f(A)$ and~$g(Y)$ that isolate the treatment-side and outcome-side directions most strongly associated with each other after adjustment for context, as developed in \Cref{sec:extension-hd-treatment-outcome}. The first study uses a controlled synthetic design with a known ground-truth relationship. The second studies LLM-assisted headline generation, in which prompt formality induces headline formality.

\subsubsection{Stylized treatment-plan example}
\label{sec:toy-example}

\textbf{Causal question.}
Consider a simplified study of an unstructured treatment plan and an unstructured follow-up measurement. The treatment vector~$A$ records three plan features, and the outcome vector~$Y$ records three post-treatment image or response features. The analyst wants to know which plan feature is linked to which outcome feature while ignoring irrelevant variation. We use a synthetic design so that the ground truth is known. The causal question is: \textit{which treatment coordinate is linked to which outcome coordinate?}

\textbf{Experimental setup.}
Only the first treatment coordinate affects the outcome: larger values of~$A_1$ produce larger values of~$Y_1$. The remaining coordinates~$A_2$, $A_3$, $Y_2$, and~$Y_3$ are noise. Therefore, the desired feature-scoring functions should depend on~$A_1$ and~$Y_1$ but ignore the other components. We estimate homogeneous functions~$\hat f(A)$ and~$\hat g(Y)$ using the MIF--MCF algorithm in \Cref{sec:extension-hd-treatment-outcome}. The complete distributions are in \Cref{app:emp-toy}.

\textbf{Results.}
\Cref{fig:toy} shows the learned scores on the test set. Panel~(a) shows that~$\hat f$ changes sharply with~$A_1$ but has no systematic relationship with~$A_2$ or~$A_3$. Panel~(b) shows that~$\hat g$ changes sharply with~$Y_1$ but not with~$Y_2$ or~$Y_3$. The paired feature-scoring functions therefore recover the correct treatment--outcome relationship and filter out irrelevant dimensions.

\begin{figure}[t]
    \centering
    \includegraphics[width=\textwidth]{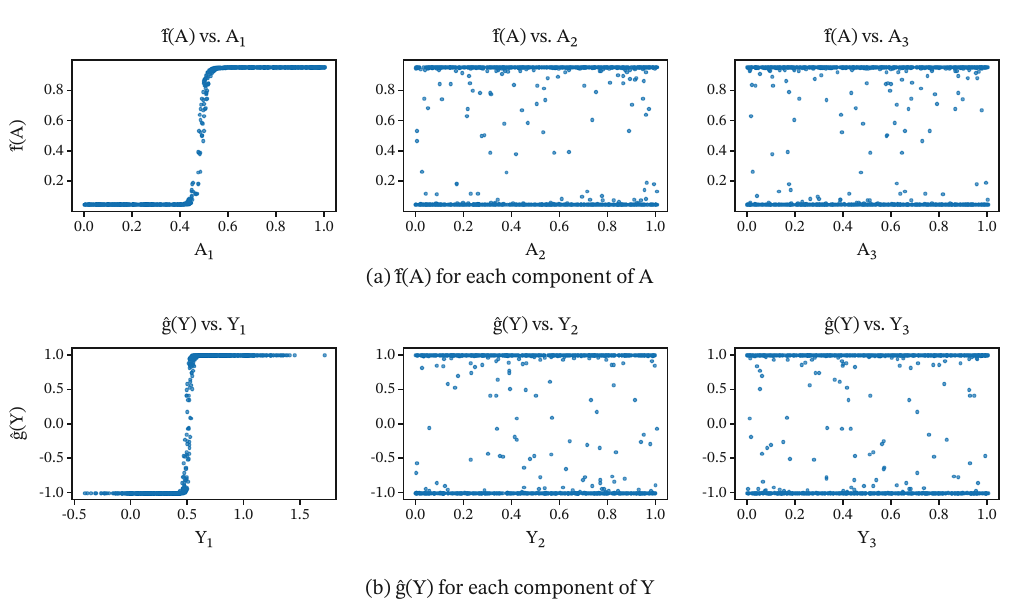}
    \caption{The paired feature-scoring functions ignore irrelevant treatment and outcome coordinates. Learned treatment and outcome features in the toy unstructured treatment-outcome experiment. Panel (a) shows $\hat f(A)$ against each component of $A$, and Panel (b) shows $\hat g(Y)$ against each component of $Y$. The learned features correctly identify that only $A_1$ and $Y_1$ are informative.}
\label{fig:toy}
\end{figure}

\subsubsection{Formality-inducing prompts in news headline generation}
\label{sec:headline-generation}

\textbf{Causal question.}
We next consider a more realistic text-to-text setting motivated by LLM-assisted content generation. Suppose a platform uses an instruction-tuned language model to generate short news headlines. The platform varies the prompt style, for example by requesting either a professional or a relaxed headline, and wants to understand how this prompt-level intervention changes the generated text. The treatment~$A$ is the text prompt, the outcome~$Y$ is the generated headline, and the context~$X$ is the news topic. The causal question is: \textit{which feature of the prompt induces which feature of the generated headline after accounting for topic?}

\textbf{Experimental setup.}
The binary topic distinguishes entertainment from politics and affects the probability that a prompt requests a formal headline. A formal prompt asks for a concise professional headline, while an informal prompt asks for a compact relaxed headline. Qwen2.5-1.5B-Instruct~\citep{qwen2024qwen25} generates the outcome after receiving the prompt and a separate topic instruction. This design creates confounding through topic: topic affects both the probability of receiving a formal prompt and the content of the generated headline. An algorithm that ignores topic may therefore confuse topic-related variation with prompt-induced style. The desired functions should instead recover prompt-side formality and outcome-side headline formality while adjusting for topic. Prompts and headlines are represented with the \textit{all-MiniLM-L6-v2} sentence-embedding model,\footnote{\url{https://huggingface.co/sentence-transformers/all-MiniLM-L6-v2}} implemented with Sentence Transformers~\citep{reimers2019sentence}. We estimate homogeneous feature-scoring functions~$\hat f(A)$ and~$\hat g(Y)$ using the MIF--MCF algorithm in \Cref{sec:extension-hd-treatment-outcome}. The complete four-step generation mechanism is in \Cref{app:emp-headline}.

\textbf{Results.}
\Cref{fig:avg_f_g} shows the expected formality pattern in both topics. Formal prompts receive high average~$\hat f(A)$ scores, while informal prompts receive low scores. The generated headlines follow the same pattern: headlines produced from formal prompts receive high average~$\hat g(Y)$ scores, while those produced from informal prompts receive low scores. Thus, the two learned functions recover the intended prompt-to-headline relationship. They identify prompt formality and the corresponding headline formality rather than merely learning the topic of the headline.

\begin{figure}[H]
    \centering
    \includegraphics[width=0.9\textwidth]{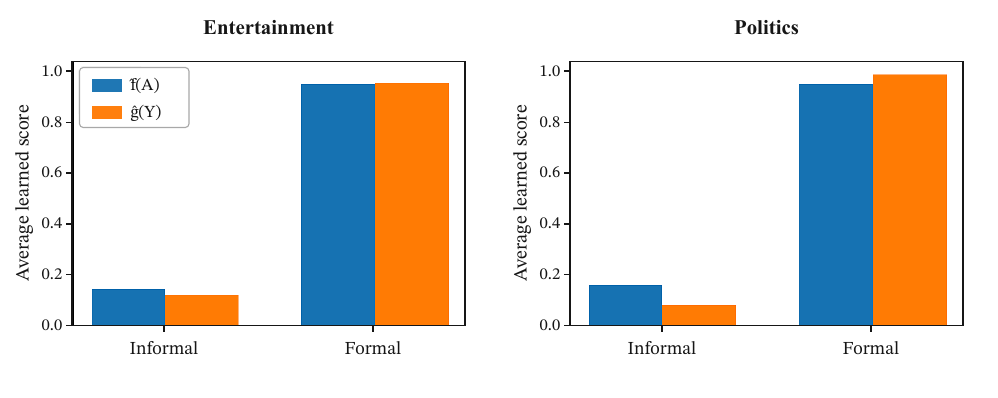}
    \caption{The paired feature-scoring functions recover prompt-to-headline formality after topic adjustment. Average learned $\hat f(A)$ and $\hat g(Y)$ values for informal and formal prompts in the headline generation experiment, stratified by topic. Formal prompts receive high $\hat f(A)$ values, and the corresponding generated headlines receive high $\hat g(Y)$ values in both entertainment and politics settings.}
    \label{fig:avg_f_g}
\end{figure}

\FloatBarrier

\section{Discussion}
\label{sec:discussion}

This paper develops algorithms for causal inference with unstructured outcomes such as text and images. In these settings, the average treatment effect has no immediate object: one cannot naturally subtract one clinical note, survey response, or image from another. We introduce the \emph{maximally contrasting feature (MCF)}, a learned feature of the outcome that reveals the strongest causal contrast between treated and control potential outcomes. We develop efficient estimators for MCF. We further extend MCF to heterogeneous effects and to settings in which both treatments and outcomes are unstructured. Across empirical studies, we find that MCF can recover treatment-relevant features in text and image outcomes; it also consistently improves over baseline summaries such as topic models. These results suggest MCF provides a general approach to studying interventions whose effects appear in the structure and content of rich outcomes.

\section*{Acknowledgements}

This work was supported in part by funding from the Office of Naval Research under grant N00014-23-1-2590, the National Science Foundation under grant No. 2310831, No. 2428059, No. 2435696, No. 2440954, a Michigan Institute for Data Science Propelling Original Data Science (PODS) grant, Two Sigma Investments LP, and  LG Management Development Institute AI Research. Any opinions, findings, and conclusions or recommendations expressed in this material are those of the authors and do not necessarily reflect the views of the sponsors.
\clearpage
\bibliographystyle{abbrvnat}
\bibliography{bibfile}
\clearpage

\appendix
\crefalias{section}{appendix}
\crefalias{subsection}{appendix}
\crefalias{subsubsection}{appendix}

\begin{center}
    \Large{\textbf{Supplementary Materials: Causal Inference with Unstructured Outcomes}}
\end{center}

\proofsection{prop:mcf-identification}{Identification of feature-specific contrasts}{app:pf-mcf-identification}
\begin{proof}
By SUTVA and unconfoundedness,
\[
\E\{g(Y(a))\}
=
\E\left[\E\{g(Y(a))\mid X\}\right]
=
\E\left[\E\{g(Y)\mid A=a,X\}\right].
\]
Subtracting the control mean from the treated mean gives \Cref{eq:mcf-adjustment}. The inverse-probability expression follows by iterated expectation. For example,
\[
\E\left\{\frac{A g(Y)}{e_0(X)}\mid X\right\}
=
\E\{g(Y)\mid A=1,X\}.
\]
The control term is analogous.
\end{proof}

\proofsection{prop:mcf-identification-uniqueness}{Identification and uniqueness of the MCF}{app:pf-mcf-identification-uniqueness}
\begin{proof}
The first statement follows from \Cref{prop:mcf-identification}. If the identified map \(g\mapsto\theta(g)\) has a unique maximizer, that maximizer is a functional of the observed-data law.

For the oracle maximizer,
\[
    \theta(g)
    =
    \int g(y)\{p_1(y)-p_0(y)\}\,\mathrm{d} y
    =
    \int g(y)\Delta(y)\,\mathrm{d}y.
\]
At each \(y\), the integrand is linear in \(g(y)\). It is maximized by taking \(g(y)=1\) when \(\Delta(y)>0\) and \(g(y)=0\) when \(\Delta(y)<0\). When \(\Delta(y)=0\), every value in \([0,1]\) gives the same contribution. If the tie set has reference probability zero, these arbitrary choices do not change the feature almost surely.
\end{proof}

\proofsection{prop:ext-2}{Characterization of the heterogeneous oracle MCF}{app:pf-ext-2}
\begin{proof}
For every feasible \(g\),
\[
T(g)
=
\int g(y,x)\,p(x)\,\Delta(y\mid x)\,dy\,dx.
\]
The integrand is pointwise linear in \(g(y,x)\). Since \(p(x)\geq0\), it is maximized by setting \(g(y,x)=1\) wherever \(\Delta(y\mid x)>0\) and \(g(y,x)=0\) wherever \(\Delta(y\mid x)<0\). On the tie set \(\Delta(y\mid x)=0\), every value in \([0,1]\) gives the same contribution. Consequently, every maximizer has the stated form away from the tie set, and a tie set of reference probability zero gives uniqueness almost surely.
\end{proof}

\proofsection{prop:cons}{Consistency}{app:pf-cons}
\begin{proof}
Let the observed-data vector be~\(Z=(X,A,Y)\), and define the population summand
\[
q(Z;\beta)=\frac{A g(Y;\beta)}{e_0(X)}-\frac{(1-A)g(Y;\beta)}{1-e_0(X)}.
\]
Define the oracle empirical criterion that uses the true propensity score as
\[
Q_N(\beta)=\frac{1}{N}\sum_{i=1}^N\left\{\frac{A_i g(Y_i;\beta)}{e_0(X_i)}-\frac{(1-A_i)g(Y_i;\beta)}{1-e_0(X_i)}\right\},
\]
and define the feasible empirical criterion as
\[
\widetilde Q_N(\beta)=\frac{1}{N}\sum_{i=1}^N\left\{\frac{A_i g(Y_i;\beta)}{\hat e(X_i)}-\frac{(1-A_i)g(Y_i;\beta)}{1-\hat e(X_i)}\right\}.
\]
The Glivenko--Cantelli condition in \Cref{prop:cons} gives
\[
\sup_{\beta\in\Theta}\left|Q_N(\beta)-\mathbb{E}\{q(Z;\beta)\}\right|=o_p(1).
\]

Let~\(\eta>0\) be the overlap margin in \Cref{asm:mcf-overlap}. Uniform consistency of~\(\hat e\) implies that, with probability tending to one, both~\(\hat e(X_i)\) and~\(1-\hat e(X_i)\) are at least~\(\eta/2\) for every observation. Because the feature-scoring functions satisfy~\(0\leq g\leq1\),
\[
\sup_{\beta\in\Theta}\left|\widetilde Q_N(\beta)-Q_N(\beta)\right|\leq\frac{2}{\eta^2}\sup_x|\hat e(x)-e_0(x)|=o_p(1).
\]
Therefore,
\[
\sup_{\beta\in\Theta}\left|\widetilde Q_N(\beta)-\mathbb{E}\{q(Z;\beta)\}\right|=o_p(1).
\]
Compactness of~\(\Theta\), uniqueness of the population maximizer~\(\beta_0\), and the argmax theorem now imply~\(\hat\beta\xrightarrow{p}\beta_0\), as claimed.
\end{proof}

\proofsection{prop:eff}{Asymptotic normality and efficiency}{app:pf-eff}
\begin{proof}
Write \(\phi_{\beta,e}(Z)=\phi(Z;\beta,e)\) for the IPW moment function in \Cref{eq:ipw-moment}. Decompose the estimating equation into the fixed-parameter estimation term
\[
W_{1N}=\mathbb{P}_N\phi_{\beta_0,\hat e}-\mathbb{P}\phi_{\beta_0,e_0},
\]
the empirical-process remainder
\[
W_{2N}=(\mathbb{P}_N-\mathbb{P})(\phi_{\hat\beta,\hat e}-\phi_{\beta_0,\hat e}),
\]
and the population linearization term
\[
W_{3N}=\mathbb{P}(\phi_{\hat\beta,\hat e}-\phi_{\beta_0,\hat e}).
\]
The population and empirical moment conditions imply
\[
0=W_{1N}+W_{2N}+W_{3N}.
\]

By the fixed-parameter efficiency expansion in \Cref{eq:tau-expansion},
\[
W_{1N}=(\mathbb{P}_N-\mathbb{P})\{\gamma(Z;\beta_0,e_0)\}+o_p(N^{-1/2}),
\]
where the efficient influence representation~\(\gamma\) is defined in \Cref{eq:efficient-representation}. By the Donsker condition, the consistency result in \Cref{prop:cons}, and the sample-splitting or cross-fitting condition, stochastic equicontinuity gives
\[
W_{2N}=o_p(N^{-1/2});
\]
see Lemma 19.24 of \citet{van2000asymptotic}.

Uniform differentiability of the population moment and convergence of the derivative matrix give
\[
W_{3N}=V(\beta_0,e_0)(\hat\beta-\beta_0)+o_p(\lVert\hat\beta-\beta_0\rVert).
\]
Combining the three terms yields
\[
0=(\mathbb{P}_N-\mathbb{P})\{\gamma(Z;\beta_0,e_0)\}+V(\beta_0,e_0)(\hat\beta-\beta_0)+o_p(N^{-1/2})+o_p(\lVert\hat\beta-\beta_0\rVert).
\]
Because \(V(\beta_0,e_0)\) is non-singular,
\[
\hat\beta-\beta_0=-V(\beta_0,e_0)^{-1}(\mathbb{P}_N-\mathbb{P})\{\gamma(Z;\beta_0,e_0)\}+o_p(N^{-1/2})+o_p(\lVert\hat\beta-\beta_0\rVert).
\]
Taking norms first gives \(\lVert\hat\beta-\beta_0\rVert=O_p(N^{-1/2})\). Substituting this rate into the preceding expansion gives
\[
\sqrt N(\hat\beta-\beta_0)=-V(\beta_0,e_0)^{-1}\sqrt N(\mathbb{P}_N-\mathbb{P})\{\gamma(Z;\beta_0,e_0)\}+o_p(1).
\]
Thus, the efficient influence function for~\(\beta_0\) is \(-V(\beta_0,e_0)^{-1}\gamma(Z;\beta_0,e_0)\), which proves asymptotic normality and semiparametric efficiency.
\end{proof}

\proofsection{prop:ext-3}{Budgeted oracle MCF}{app:pf-ext-3}
\begin{proof}
Let
\[
s(y)=\frac{\Delta(y)}{p_{\mathrm{ref}}(y)}.
\]
Then
\[
\theta(g)
=
\int g(y)s(y)p_{\mathrm{ref}}(y)\,dy.
\]
Let \(\lambda_\tau\) be an upper \(\tau\)-threshold of \(s(Y)\) under \(P_{\mathrm{ref}}\). The no-atom condition at this threshold implies
\[
P_{\mathrm{ref}}\{s(Y)\ge\lambda_\tau\}=\tau,
\]
so \(g^\star_\tau(y)=\mathbf{1}\{s(y)\ge\lambda_\tau\}\) satisfies the budget. For any other feasible \(g\),
\[
\begin{aligned}
\theta(g^\star_\tau)-\theta(g)
&=
\int \{g^\star_\tau(y)-g(y)\}
      \{s(y)-\lambda_\tau\}
      p_{\mathrm{ref}}(y)\,dy
\\
&\ge 0.
\end{aligned}
\]
The inequality holds pointwise: above the threshold, \(g^\star_\tau=1\) and \(s-\lambda_\tau\ge0\); below it, \(g^\star_\tau=0\) and \(s-\lambda_\tau<0\). The no-atom condition also gives uniqueness \(P_{\mathrm{ref}}\)-almost surely.

For \(0<\tau_1<\tau_2<\tau^\star\), both thresholds are positive and the selected region for \(\tau_2\) adds \(P_{\mathrm{ref}}\)-mass only where \(s>0\). Therefore,
\[
\theta(g^\star_{\tau_2})-\theta(g^\star_{\tau_1})
=
\int \{g^\star_{\tau_2}(y)-g^\star_{\tau_1}(y)\}
      s(y)p_{\mathrm{ref}}(y)\,dy
>0,
\]
which proves the stated monotonicity on \((0,\tau^\star)\).

It remains to verify the saturation statement. Let \(g_+(y)=\mathbf{1}\{s(y)>0\}\). At \(\tau=\tau^\star\), any feasible \(g\) satisfies
\[
\theta(g_+)-\theta(g)
=
\int_{\{s>0\}}\{1-g(y)\}s(y)\,dP_{\mathrm{ref}}(y)
+
\int_{\{s<0\}}g(y)\{-s(y)\}\,dP_{\mathrm{ref}}(y)
\ge0.
\]
Equality requires \(g=1\) on \(\{s>0\}\) and \(g=0\) on \(\{s<0\}\); the exact budget then requires \(g=0\) on \(\{s=0\}\), all \(P_{\mathrm{ref}}\)-almost surely. For \(\tau\in(\tau^\star,\tau^\star+\tau_0]\), assigning the additional mass to \(\{s=0\}\) attains the same value as \(g_+\). When \(\tau>\tau^\star+\tau_0\), every feasible rule must assign positive mass to \(\{s<0\}\), so its value is strictly below the saturated value. Finally, for \(\tau^\star+\tau_0<\tau_1<\tau_2\), the ranking argument implies that an optimizer at \(\tau_2\) selects all nonnegative-score mass and \(\tau_2-\tau^\star-\tau_0\) negative-score mass. Scaling its selection on \(\{s<0\}\) by \((\tau_1-\tau^\star-\tau_0)/(\tau_2-\tau^\star-\tau_0)\) gives a feasible rule at \(\tau_1\) with strictly larger value. Thus the exact-budget optimum is strictly decreasing beyond the plateau.
\end{proof}

\proofsection{prop:ext-4}{Heterogeneous budgeted oracle MCF}{app:pf-ext-4}
\begin{proof}
Write
\[
r(y,x)=\frac{p(x)\Delta(y\mid x)}{p_{\mathrm{ref}}(y,x)}.
\]
Then
\[
\int g(y,x)p(x)\Delta(y\mid x)\,dy\,dx
=
\int g(y,x)r(y,x)p_{\mathrm{ref}}(y,x)\,dy\,dx.
\]
Thus the problem is to choose a function \(0\le g\le1\) with \(P_{\mathrm{ref}}\)-mass \(\tau\) that maximizes the average of \(r(Y,X)\) over the selected region. The optimal rule selects the largest values of \(r\). Let \(\lambda_\tau\) be an upper \(\tau\)-threshold. Under the no-atom condition at this threshold, the optimizer is unique \(P_{\mathrm{ref}}\)-almost surely and is given by
\[
g^\star_{X,\tau}(y,x)
=
\mathbf{1}\{r(y,x)\ge \lambda_\tau\}.
\]
For \(\tau<\tau^\star_X\), the threshold is positive, so lowering it adds only pairs with positive conditional density difference. Hence \(\theta_X(g^\star_{X,\tau})\) is increasing in \(\tau\).

The saturation claims follow from the same joint-space ordering. Let
\[
g_{X,+}(y,x)=\mathbf{1}\{r(y,x)>0\}.
\]
Because \(p(x)>0\) and \(p_{\mathrm{ref}}(y,x)>0\) on the relevant support, the events \(r(y,x)>0\), \(r(y,x)=0\), and \(r(y,x)<0\) coincide with the corresponding signs of \(\Delta(y\mid x)\). Thus, at \(\tau=\tau^\star_X\), every feasible \(g\) satisfies
\[
\theta_X(g_{X,+})-\theta_X(g)
=
\int_{\{r>0\}}\{1-g(y,x)\}r(y,x)\,dP_{\mathrm{ref}}(y,x)
+
\int_{\{r<0\}}g(y,x)\{-r(y,x)\}\,dP_{\mathrm{ref}}(y,x)
\geq0.
\]
Equality requires selection of the positive-score set and exclusion of the negative-score set, up to \(P_{\mathrm{ref}}\)-null sets. For \(\tau\in(\tau^\star_X,\tau^\star_X+\tau_{0,X}]\), the additional exact-budget mass can be assigned to \(\{r=0\}\) without changing the objective. When \(\tau>\tau^\star_X+\tau_{0,X}\), every exact-budget rule must assign positive mass to \(\{r<0\}\). Under an upper-bound budget, \(g_{X,+}\) remains feasible for every bound at least \(\tau^\star_X\) and pointwise maximizes the integrand, so the optimum saturates at \(\tau^\star_X\).
\end{proof}

\proofsection{prop:hdto-equivalent}{Equivalent forms of the joint feature objective}{app:proof_hdto_equivalent}
\begin{proof}
First,
\[
    \E\!
    \left[f(A)\E\{g(Y)\mid X\}\right]
    =
    \E\!
    \left[\E\{f(A)\mid X\}\E\{g(Y)\mid X\}\right].
\]
It follows that
\[
\begin{aligned}
&\E\!
\left[f(A)\{g(Y)-\E(g(Y)\mid X)\}\right]
\\
&\qquad=
\E\{f(A)g(Y)\}
-
\E\!
\left[\E\{f(A)\mid X\}\E\{g(Y)\mid X\}\right].
\end{aligned}
\]
Expanding the centered product gives the same expression:
\[
\begin{aligned}
&\E\!
\left[
    \{f(A)-\E(f(A)\mid X)\}
    \{g(Y)-\E(g(Y)\mid X)\}
\right]
\\
&\qquad=
\E\{f(A)g(Y)\}
-
\E\!
\left[\E\{f(A)\mid X\}\E\{g(Y)\mid X\}\right].
\end{aligned}
\]
The final equality follows from the definition of conditional covariance and iterated expectation.
\end{proof}

\proofsection{prop:negative-outcome-hdto}{Negative control outcomes recover the centered objective}{app:proof_negative-outcome-hdto}
\begin{proof}
Since \(Y^\dagger\perp A\mid X\),
\[
    \E\{f(A)g(Y^\dagger)\mid X\}
    =
    \E\{f(A)\mid X\}\E\{g(Y^\dagger)\mid X\}.
\]
Since \(Y^\dagger\mid X\) and \(Y\mid X\) have the same distribution,
\[
    \E\{g(Y^\dagger)\mid X\}
    =
    \E\{g(Y)\mid X\}.
\]
Therefore
\[
    \E\{f(A)g(Y^\dagger)\}
    =
    \E\!
    \left[\E\{f(A)\mid X\}\E\{g(Y)\mid X\}\right].
\]
Also,
\[
    \E\!
    \left[f(A)\E\{g(Y)\mid X\}\right]
    =
    \E\!
    \left[\E\{f(A)\mid X\}\E\{g(Y)\mid X\}\right].
\]
Combining the two equalities proves the claim.
\end{proof}

\section{Why inverse propensity weighting rather than plug-in estimation?}
\label{sec:why-ipw}

The MCF is defined as a maximizer over feature-scoring functions. In estimation, however, we do not optimize over all functions directly. We work with a parameterized class
\[
    \mathcal{G}_{\Theta}
    =
    \{g(\cdot;\beta):\beta\in\Theta\},
\]
where \(\beta\in\Theta\subseteq\mathbb{R}^{n_\beta}\) denotes the trainable parameter vector of the feature-scoring function and \(n_\beta\) is its dimension. The population target in this class is
\begin{equation}
\label{eq:beta-target-alt}
    \beta_0
    \in
    \operatorname*{arg\,max}_{\beta\in\Theta}
    \theta(\beta),
    \qquad
    \theta(\beta)
    =
    \mathbb{E}\left[g\{Y(1);\beta\}-g\{Y(0);\beta\}\right].
\end{equation}
Thus \(g(Y;\beta_0)\) is the feature in the chosen parameterized class whose average potential value changes most when treatment is set from \(0\) to \(1\).

By the identification result in \Cref{prop:mcf-identification}, for every fixed \(\beta\),
\begin{equation}
\label{eq:beta-adjustment}
    \theta(\beta)
    =
    \mathbb{E}\left[
        \mu_{1,\beta}(X)-\mu_{0,\beta}(X)
    \right],
\end{equation}
where
\[
    \mu_{a,\beta}(x)
    =
    \mathbb{E}\{g(Y;\beta)\mid X=x,A=a\},
    \qquad a\in\{0,1\}.
\]
For a fixed value of \(\beta\), this is the usual regression-adjustment formula: estimate the treated and control conditional means of the scalar outcome \(g(Y;\beta)\), average their difference over the covariate distribution, and obtain the feature-specific causal contrast.

The difficulty is that \(\beta\) is the object being learned. During training, each update of \(\beta\) changes the scalar outcome \(g(Y;\beta)\). Consequently, the regression target~\(\mathbb{E}\{g(Y;\beta)\mid X=x,A=a\}\) also changes as the feature-scoring function changes. A plug-in implementation would therefore require learning conditional mean functions for a moving outcome. In text and image applications, this creates a nested learning problem: the outer optimization updates the feature-scoring function~\(g(\cdot;\beta)\), while the inner optimization updates the nuisance regressions for the current scalar outcome~\(g(Y;\beta)\).

The same issue appears if the conditional mean is written through the conditional density of the represented outcome. Let \(\eta_a(y\mid x)\) denote the density of \(Y\) given \(X=x\) and \(A=a\). Then
\[
    \mu_{a,\beta}(x)
    =
    \int g(y;\beta)\eta_a(y\mid x)\,dy.
\]
Thus the plug-in route either requires estimating the conditional density \(\eta_a(y\mid x)\) of the unstructured outcome, or repeatedly fitting regressions of the changing scalar outcome \(g(Y;\beta)\) on \(X\) within each treatment group. Both are burdensome when \(Y\) is a text or image representation and \(g(\cdot;\beta)\) is trained by gradient-based optimization.

\textbf{The plug-in first-order condition.}
The plug-in difficulty can also be seen from the first-order condition for \(\beta_0\). Suppose \(\beta_0\) is an interior maximizer of \(\theta(\beta)\), and suppose \(g(y;\beta)\) is differentiable with respect to \(\beta\). Define
\[
    \ell_\beta(y)
    =
    \frac{\partial g(y;\beta)}{\partial \beta}.
\]
Let \(p_a(y)\) denote the density of the potential outcome \(Y(a)\). Under the identification assumptions, this density can be written as
\[
    p_a(y)
    =
    \int \eta_a(y\mid x)\,d\mathbb{P}_X(x),
\]
where \(\mathbb{P}_X\) is the marginal distribution of \(X\). Since
\[
    \theta(\beta)
    =
    \int g(y;\beta)\{p_1(y)-p_0(y)\}\,dy,
\]
differentiating with respect to \(\beta\) gives the moment function
\begin{equation}
\label{eq:moment-beta}
    m(\beta)
    =
    \frac{\partial \theta(\beta)}{\partial \beta}
    =
    \int \ell_\beta(y)\{p_1(y)-p_0(y)\}\,dy.
\end{equation}
Therefore an interior maximizer \(\beta_0\) satisfies
\[
    m(\beta_0)=0.
\]

Equivalently, by adjustment,
\[
    m(\beta)
    =
    \mathbb{E}\left[
        r_{1,\beta}(X)-r_{0,\beta}(X)
    \right],
\]
where
\[
    r_{a,\beta}(x)
    =
    \mathbb{E}\{\ell_\beta(Y)\mid X=x,A=a\},
    \qquad a\in\{0,1\}.
\]
A direct plug-in estimator of the first-order condition would be
\begin{equation}
\label{eq:plugin-moment}
    \widehat m_{\mathrm{PI}}(\beta)
    =
    \frac{1}{N}
    \sum_{i=1}^N
    \left\{
        \widehat r_{1,\beta}(X_i)
        -
        \widehat r_{0,\beta}(X_i)
    \right\},
\end{equation}
where \(\widehat r_{a,\beta}(x)\) estimates
\(\mathbb{E}\{\ell_\beta(Y)\mid X=x,A=a\}\). This moment is natural, but it inherits the same moving-target problem: the regression target \(\ell_\beta(Y)\) changes whenever \(\beta\) changes.

\begin{proposition}[MCF first-order condition]
\label{prop:mcf-first-order}
Assume that \(\beta_0\) is an interior maximizer of \(\theta(\beta)\), that \(\theta(\beta)\) is differentiable in a neighborhood of \(\beta_0\), and that differentiation may be interchanged with integration. Then \(\beta_0\) satisfies the moment condition \(m(\beta_0)=0\), where \(m(\beta)\) is defined in \Cref{eq:moment-beta}.
\end{proposition}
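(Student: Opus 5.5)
The plan is to combine the standard necessary condition for an interior maximum with the integral representation of \(\theta(\beta)\) from \Cref{prop:mcf-identification}. The argument has three steps.

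\textbf{Step 1: integral representation.} Under \Cref{asm:mcf-sutva,asm:mcf-overlap,asm:mcf-unconfoundedness}, the law of \(Y(a)\) has density \(p_a(y)=\int \eta_a(y\mid x)\,d\mathbb{P}_X(x)\). This follows from the adjustment formula \Cref{eq:mcf-adjustment} applied to indicator functions, with Fubini used to swap the order of integration. Consequently
\[
\theta(\beta)=\int g(y;\beta)\{p_1(y)-p_0(y)\}\,dy .
\]
This representation holds for every \(\beta\) in a neighborhood of \(\beta_0\), because \(0\le g\le 1\) makes all integrals finite.

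\textbf{Step 2: vanishing gradient at an interior maximizer.} Since \(\beta_0\) is interior and \(\theta\) is differentiable near \(\beta_0\), fix a direction \(v\in\mathbb{R}^{n_\beta}\). For \(|t|\) small, \(\beta_0+tv\in\Theta\), so the scalar function \(t\mapsto\theta(\beta_0+tv)\) has a local maximum at \(t=0\). Its one-sided difference quotients are therefore \(\le 0\) for \(t>0\) and \(\ge 0\) for \(t<0\). Both limits equal \(v^\top\partial\theta(\beta_0)/\partial\beta\), which forces this quantity to be zero. Because \(v\) is arbitrary, \(\partial\theta(\beta_0)/\partial\beta=0\).

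\textbf{Step 3: identify the gradient with \(m\).} Differentiating the integral in Step 1 and using the assumed interchange of differentiation and integration gives
\[
\frac{\partial\theta(\beta)}{\partial\beta}=\int \ell_\beta(y)\{p_1(y)-p_0(y)\}\,dy=m(\beta),
\]
which is exactly \Cref{eq:moment-beta}. Evaluating at \(\beta_0\) and combining with Step 2 yields \(m(\beta_0)=0\).

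For completeness, the adjustment form \(m(\beta)=\E[r_{1,\beta}(X)-r_{0,\beta}(X)]\) follows by applying \Cref{prop:mcf-identification} componentwise to \(\ell_\beta\) in place of \(g\). This requires \(\ell_\beta\) to be integrable, which the interchange hypothesis implicitly provides.

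The only delicate point is the interchange step. It is assumed in the statement, so I would simply cite it. If asked to justify it, I would use dominated convergence with an integrable envelope for \(\sup_{\beta\text{ near }\beta_0}\|\ell_\beta\|\) under \(p_1+p_0\).
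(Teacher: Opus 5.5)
Your proposal is correct and follows the paper's route: write $\theta(\beta)=\int g(y;\beta)\{p_1(y)-p_0(y)\}\,dy$, differentiate under the integral to obtain $m(\beta)$, and use that the gradient vanishes at an interior maximizer. You add details the paper leaves implicit (the directional-derivative argument for the first-order condition and a dominated-convergence justification of the interchange), though the detour through the identification assumptions in Step 1 is unnecessary, since $p_a$ is by definition the density of $Y(a)$.
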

The proof is in \Cref{app:pf-moment-alt}.

\textbf{Bias correction and its nuisance functions.}
For a generic scalar variable \(T\), define the adjusted mean under treatment level \(a\) as
\[
    \psi_a(T)
    =
    \mathbb{E}\left[
        \mathbb{E}(T\mid X,A=a)
    \right].
\]
Let
\[
    e_a(X)=\mathbb{P}(A=a\mid X).
\]
The efficient influence function for \(\psi_a(T)\) is
\[
    \phi_a(T;\mathbb{P})
    =
    \frac{\mathbf{1}(A=a)}{e_a(X)}
    \left\{
        T-\mathbb{E}(T\mid X,A=a)
    \right\}
    +
    \mathbb{E}(T\mid X,A=a)
    -
    \psi_a(T).
\]
This is the standard efficient influence function for an adjusted mean~\citep{kennedy2023semiparametric,kennedy2022semiparametric}. Applying the formula to \(T=\ell_\beta(Y)\), the efficient influence function for \(m(\beta)\) is
\[
    \phi_1\{\ell_\beta(Y);\mathbb{P}\}
    -
    \phi_0\{\ell_\beta(Y);\mathbb{P}\}.
\]
Thus a bias-corrected moment estimator is
\begin{equation}
\label{eq:bc-moment}
    \widehat m_{\mathrm{BC}}(\beta)
    =
    \widehat m_{\mathrm{PI}}(\beta)
    +
    \frac{1}{N}
    \sum_{i=1}^N
    \left[
        \phi_1\{\ell_\beta(Y_i);\widehat{\mathbb{P}}\}
        -
        \phi_0\{\ell_\beta(Y_i);\widehat{\mathbb{P}}\}
    \right],
\end{equation}
where \(\widehat{\mathbb{P}}\) means that the unknown propensity scores and conditional expectations are replaced by estimates. Equivalently, define the estimated propensity score as
\[
    \widehat e(x)=\widehat{\mathbb{P}}(A=1\mid X=x).
\]
Then \Cref{eq:bc-moment} can be written as
\[
    \widehat m_{\mathrm{BC}}(\beta)
    =
    \frac{1}{N}
    \sum_{i=1}^N
    \Bigg[
        \frac{A_i}{\widehat e(X_i)}
        \{\ell_\beta(Y_i)-\widehat r_{1,\beta}(X_i)\}
        -
        \frac{1-A_i}{1-\widehat e(X_i)}
        \{\ell_\beta(Y_i)-\widehat r_{0,\beta}(X_i)\}
        +
        \widehat r_{1,\beta}(X_i)
        -
        \widehat r_{0,\beta}(X_i)
    \Bigg].
\]
Bias-corrected plug-in estimators are statistically attractive when the nuisance functions can be estimated accurately. Under standard regularity conditions, including \(N^{-1/4}\)-type-rate conditions for the propensity estimator~\(\widehat e\) and the outcome-regression-gradient estimators~\(\widehat r_{a,\beta}\), solving a bias-corrected moment equation can yield an asymptotically normal and semiparametrically efficient estimator~\citep{kennedy2023semiparametric,kennedy2022semiparametric}. In the present problem, however, the bias-corrected moment still requires estimating \(\widehat r_{a,\beta}\) for a changing derivative outcome \(\ell_\beta(Y)\). The bias correction improves the statistical expansion once the nuisance functions are available; it does not remove the nested learning problem created by the fact that the nuisance regressions depend on \(\beta\).

One may also view these moments as gradients of plug-in objectives. Solving \(\widehat m_{\mathrm{PI}}(\beta)=0\) corresponds to finding a stationary point of
\[
    \frac{1}{N}
    \sum_{i=1}^N
    \left\{
        \widehat\mu_{1,\beta}(X_i)
        -
        \widehat\mu_{0,\beta}(X_i)
    \right\},
\]
where
\[
    \widehat\mu_{a,\beta}(x)
    \approx
    \mathbb{E}\{g(Y;\beta)\mid X=x,A=a\}.
\]
Similarly, solving \(\widehat m_{\mathrm{BC}}(\beta)=0\) corresponds to finding a stationary point of the augmented objective
\[
    \frac{1}{N}
    \sum_{i=1}^N
    \left\{
        \widehat\mu_{1,\beta}(X_i)
        -
        \widehat\mu_{0,\beta}(X_i)
    \right\}
    +
    \frac{1}{N}
    \sum_{i=1}^N
    \left[
        \phi_1\{g(Y_i;\beta);\widehat{\mathbb{P}}\}
        -
        \phi_0\{g(Y_i;\beta);\widehat{\mathbb{P}}\}
    \right].
\]
These objectives are statistically natural, but they require either estimating the conditional density \(\eta_a(y\mid x)\) of the unstructured outcome, or repeatedly updating the nuisance regressions \(\widehat\mu_{a,\beta}\) and \(\widehat r_{a,\beta}\) as \(\beta\) changes.

One possible way to avoid explicit density estimation is bi-level optimization. For each current value of \(\beta\), one learns the nuisance regressions
\[
    x\mapsto
    \mathbb{E}\{g(Y;\beta)\mid X=x,A=a\},
    \qquad a\in\{0,1\},
\]
and then substitutes these fitted regressions into the outer objective. \citet{melnychuk2023normalizing} use normalizing flows to estimate counterfactual outcome distributions. In our setting, however, fitting or updating such nuisance models as~\(\beta\) changes creates a nested optimization problem for unstructured or representation-valued outcomes. We implemented two bi-level approaches: one based on hypernetworks~\citep{lorraine2018stochastic} and one based on implicit differentiation~\citep{lorraine2020optimizing}. In our experiments, both were unstable and sensitive to initialization.

\proofsection{prop:mcf-first-order}{MCF first-order condition}{app:pf-moment-alt}
\begin{proof}
Using the density representation of the potential outcomes,
\[
    \theta(\beta)
    =
    \mathbb{E}\left[g\{Y(1);\beta\}-g\{Y(0);\beta\}\right]
    =
    \int g(y;\beta)\{p_1(y)-p_0(y)\}\,dy.
\]
Differentiating both sides with respect to \(\beta\) gives
\[
    \frac{\partial \theta(\beta)}{\partial \beta}
    =
    \int
    \frac{\partial g(y;\beta)}{\partial \beta}
    \{p_1(y)-p_0(y)\}\,dy
    =
    m(\beta).
\]
Since \(\beta_0\) is an interior maximizer and \(\theta(\beta)\) is differentiable at \(\beta_0\), its first-order condition is
\[
    \frac{\partial \theta(\beta)}{\partial \beta}\bigg|_{\beta=\beta_0}
    =
    0.
\]
Hence \(m(\beta_0)=0\).
\end{proof}

\section{What does the MCF actually learn? A Gaussian example}
\label{sec:mcf-learn}
We now provide a more concrete interpretation of the MCF in terms of interpretable text features. The main point is that the MCF should not be interpreted as separately identifying individual features of the text. Instead, it learns a single score that combines multiple features, and the selected texts are obtained by thresholding this score. We first make this point in the unbudgeted setting, and then explain how the budget changes the threshold but not the underlying direction.

Throughout this section, we focus on the case where $g$ only depends on $Y$, noting that an extension to the case where $g$ depends on both $Y$ and $X$ is straightforward.

\paragraph{Unbudgeted Gaussian MCF.} Suppose that the outcome text embedding can be summarized by two
interpretable features, \(Y=(F,P)^\top\), where \(F\) denotes formality and \(P\) denotes politeness. Let \(p_a(y)\) denote the density of
\(Y(a)\) for \(a\in\{0,1\}\). The population MCF contrast is
\[
T(g)
=
\mathbb{E}\left(g(Y(1))\right)
-
\mathbb{E}\left(g(Y(0))\right)
=
\int g(y)\left(p_1(y)-p_0(y)\right)\,dy,
\]
where $0\le g\le 1$. From \Cref{prop:mcf-identification-uniqueness}, the population maximizer is
$
g^*(y)=\mathds{1}\left(p_1(y)>p_0(y)\right)$. Thus, the MCF selects precisely those outcome texts whose interventional density is larger under
treatment than under control.

This selection rule becomes especially transparent under a Gaussian equal-covariance model. We assume the interventional distributions \(Y(a)\sim\mathcal N(\mu_a,\Sigma)\) for \(a\in\{0,1\}\),
where
\[
\mu_a =
\begin{pmatrix}
\mu_{F,a}\\
\mu_{P,a}
\end{pmatrix},
\qquad
\Sigma=
\begin{pmatrix}
\sigma_F^2 & \rho \sigma_F \sigma_P\\
\rho \sigma_F \sigma_P & \sigma_P^2
\end{pmatrix},
\qquad |\rho|<1,
\]
the vector~\(\mu_a\) is the intervention-specific mean, the common matrix~\(\Sigma\) is the covariance, the quantities~\(\sigma_F\) and~\(\sigma_P\) are the marginal standard deviations, and~\(\rho\) is their correlation. We define the interventional mean-shift vector as~\(\delta:=\mu_1-\mu_0=(\Delta_F,\Delta_P)^\top\), where~\(\Delta_F\) is the treatment-induced shift in formality and~\(\Delta_P\) is the treatment-induced shift in politeness. A special case is~\(\Delta_P=0\), when treatment affects only formality and not politeness. In that case, politeness is \emph{not causally shifted}, although it may still be correlated with formality through~\(\rho\neq0\).

Under the Gaussian model, we have the following result.
\begin{proposition}[Gaussian MCF]
\label{prop:gaussian-mcf}
Under the equal-covariance Gaussian model,
\[
g^*(y)=\mathds{1}(w^\top y > c),
\]
where \(w=\Sigma^{-1}(\mu_1-\mu_0)=\Sigma^{-1}\delta\) is the discriminant direction and \(c=\{\mu_1^\top\Sigma^{-1}\mu_1-\mu_0^\top\Sigma^{-1}\mu_0\}/2\) is its threshold.
\end{proposition}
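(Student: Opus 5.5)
The plan is to reduce the statement to the oracle characterization in \Cref{prop:mcf-identification-uniqueness} and then compute the sign of \(\Delta(y)=p_1(y)-p_0(y)\) explicitly. That proposition shows every oracle maximizer equals \(\mathds{1}\{\Delta(y)>0\}\) off the tie set. It is therefore enough to show two things: \(\Delta(y)>0\) exactly when \(w^\top y>c\), and the tie set \(\{\Delta=0\}\) is Lebesgue-null, so the indicator is the maximizer almost surely.

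Both densities are Gaussian with the same covariance \(\Sigma\), so they share the normalizing constant \((2\pi)^{-1}\lvert\Sigma\rvert^{-1/2}\). Since \(p_0>0\) everywhere, \(\Delta(y)>0\) is equivalent to \(\log p_1(y)-\log p_0(y)>0\). I will expand the two quadratic forms \((y-\mu_a)^\top\Sigma^{-1}(y-\mu_a)\). The terms \(y^\top\Sigma^{-1}y\) cancel, and \(\Sigma^{-1}\) is symmetric, so
\[
\log\frac{p_1(y)}{p_0(y)}=(\mu_1-\mu_0)^\top\Sigma^{-1}y-\tfrac12\left(\mu_1^\top\Sigma^{-1}\mu_1-\mu_0^\top\Sigma^{-1}\mu_0\right)=w^\top y-c,
\]
with \(w=\Sigma^{-1}\delta\) and \(c\) as stated. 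Hence \(\Delta(y)>0\iff w^\top y>c\), and \(\Delta(y)=0\iff w^\top y=c\). The condition \(|\rho|<1\) (together with positive \(\sigma_F,\sigma_P\)) guarantees that \(\Sigma\) is invertible, so \(w\) is well defined.

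The only delicate step is the tie set. If \(\delta\neq0\), then \(w\neq0\) because \(\Sigma^{-1}\) is nonsingular. The set \(\{w^\top y=c\}\) is then an affine hyperplane, which is Lebesgue-null and hence null under any Gaussian reference law, and the uniqueness clause of \Cref{prop:mcf-identification-uniqueness} gives \(g^*(y)=\mathds{1}(w^\top y>c)\) almost surely. I will treat the degenerate case \(\delta=0\) separately. There \(w=0\) and \(c=0\), so \(\Delta\equiv0\), every \(g\) is optimal, and the displayed indicator (identically zero) is one valid maximizer. I would state this caveat explicitly rather than claim uniqueness in that case.
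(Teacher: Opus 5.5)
Your proposal is correct and follows the paper's proof: expand the equal-covariance Gaussian log-likelihood ratio, cancel the $y^\top\Sigma^{-1}y$ terms to get $w^\top y-c$, and read off $p_1(y)>p_0(y)\iff w^\top y>c$. The paper leaves two points implicit that you make explicit. The tie set is a Lebesgue-null hyperplane, which gives uniqueness almost surely when $\delta\neq0$. When $\delta=0$, the indicator is only one of many maximizers.
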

The proof is in \Cref{app:pf-gauss-mcf}. From \Cref{prop:gaussian-mcf}, the MCF learns a threshold on a single linear score~$w^\top y$. Its formality and politeness coefficients are
\[
w_F =
\frac{\Delta_F/\sigma_F^2 - \rho \Delta_P/(\sigma_F\sigma_P)}
{1-\rho^2}, \qquad
w_P = \frac{\Delta_P/\sigma_P^2 - \rho \Delta_F/(\sigma_F\sigma_P)}
{1-\rho^2}.
\]
Even when both features are affected by the treatment, the learned MCF does not produce two separate directions; it produces one combined direction. That being said, we can show that the learned MCF remains aligned with the true interventional shift.
\begin{proposition}[Positive alignment]
\label{prop:pos-align}
Let $\delta=\mu_1-\mu_0$ and $w=\Sigma^{-1}\delta$. Then,
\[
\delta^\top w = \delta^\top \Sigma^{-1}\delta > 0
\qquad\text{whenever } \delta \neq 0.
\]
\end{proposition}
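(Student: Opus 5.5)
The identity \(\delta^\top w=\delta^\top\Sigma^{-1}\delta\) is immediate from the definition \(w=\Sigma^{-1}\delta\): substituting gives \(\delta^\top w=\delta^\top(\Sigma^{-1}\delta)\), so the only content of \Cref{prop:pos-align} is the strict positivity of the quadratic form. The plan is therefore to show that \(\Sigma^{-1}\) is symmetric positive definite under the Gaussian model of \Cref{prop:gaussian-mcf}, and then to apply the definition of positive definiteness at the nonzero vector \(\delta\).

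First, I will establish that \(\Sigma\) itself is positive definite. In the two-feature model, \(\Sigma\) is symmetric with diagonal entries \(\sigma_F^2,\sigma_P^2>0\) (marginal standard deviations are positive, which is implicit in the model since the coefficients divide by \(\sigma_F,\sigma_P\)) and determinant \(\sigma_F^2\sigma_P^2(1-\rho^2)>0\) because \(|\rho|<1\). By Sylvester's criterion, since both leading principal minors are positive, \(\Sigma\succ0\). More generally, any nondegenerate covariance matrix, which the model requires for the Gaussian densities to exist, is positive definite.

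Second, I will transfer positive definiteness to the inverse. Write \(\Sigma=Q\Lambda Q^\top\) with \(Q\) orthogonal and \(\Lambda\) diagonal with strictly positive entries. Then \(\Sigma^{-1}=Q\Lambda^{-1}Q^\top\), which is symmetric with strictly positive eigenvalues. Equivalently, for \(v\neq0\) one can set \(u=\Sigma^{-1}v\neq0\) and note that \(v^\top\Sigma^{-1}v=u^\top\Sigma u>0\). I prefer the second route because it avoids spectral arguments. Applying it with \(v=\delta\neq0\) gives \(\delta^\top\Sigma^{-1}\delta>0\), which completes the argument.

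There is no real obstacle here. The only point requiring care is making explicit that nondegeneracy (\(|\rho|<1\) and positive variances) is exactly what gives strict rather than weak positivity. If \(\Sigma\) were singular, \(\Sigma^{-1}\) would not exist and the discriminant direction would be undefined. I would close with a remark on interpretation: the angle between \(w\) and \(\delta\) is strictly less than \(90^\circ\), so the learned score increases along the true interventional shift.
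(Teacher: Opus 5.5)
Your proposal is correct and matches the paper's proof, which argues that $\Sigma$ is positive definite, hence $\Sigma^{-1}$ is too, hence $\delta^\top\Sigma^{-1}\delta>0$ for $\delta\neq0$. You spell out two steps the paper asserts in one line: positive definiteness of $\Sigma$ via Sylvester's criterion using $|\rho|<1$, and the transfer to $\Sigma^{-1}$ via $u=\Sigma^{-1}\delta$, which gives $\delta^\top\Sigma^{-1}\delta=u^\top\Sigma u>0$.
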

The proof is in \Cref{app:pf-pos-align}. The next proposition turns this inner-product identity into a direct statement about how the MCF score changes along the causal shift.
\begin{proposition}[Moving along the causal shift increases the MCF score]
\label{prop:causal-shift}
For any $y\in\mathbb{R}^2$ and any $t\in \mathbb{R}$, we have
\[
w^\top (y+t\delta) = w^\top y + t\,\delta^\top \Sigma^{-1}\delta.
\]
If $\delta\neq 0$, the MCF score is strictly increasing in $t$; if $\delta=0$, it is constant.
\end{proposition}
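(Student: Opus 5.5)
The identity follows from linearity of the inner product, with $w=\Sigma^{-1}\delta$ as in \Cref{prop:gaussian-mcf}. I would expand
\[
w^\top(y+t\delta)=w^\top y+t\,w^\top\delta .
\]
Then I would rewrite $w^\top\delta=\delta^\top\Sigma^{-1\top}\delta$. Since $\Sigma$ is a symmetric covariance matrix, its inverse is symmetric, so this equals $\delta^\top\Sigma^{-1}\delta$. Substituting gives the displayed formula for every $y\in\mathbb{R}^2$ and $t\in\mathbb{R}$.

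For monotonicity, the map $t\mapsto w^\top y+t\,\delta^\top\Sigma^{-1}\delta$ is affine in $t$ with slope $\delta^\top\Sigma^{-1}\delta$. When $\delta\neq0$, \Cref{prop:pos-align} gives that this slope is strictly positive, so the score is strictly increasing in $t$. I would recall why $\Sigma^{-1}$ is positive definite, which underlies that proposition. The condition $|\rho|<1$ with $\sigma_F,\sigma_P>0$ gives $\det\Sigma=\sigma_F^2\sigma_P^2(1-\rho^2)>0$ and $\operatorname{tr}\Sigma>0$, so $\Sigma$, and hence $\Sigma^{-1}$, is positive definite. When $\delta=0$, the slope is zero and the score is constant in $t$.

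There is no real obstacle. The only point needing care is the symmetry of $\Sigma^{-1}$, which lets $w^\top\delta$ be identified with the quadratic form $\delta^\top\Sigma^{-1}\delta$; positivity is then already supplied by \Cref{prop:pos-align}.
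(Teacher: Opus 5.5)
Your proposal is correct and follows the paper's proof: expand by linearity, identify the slope $w^\top\delta$ with $\delta^\top\Sigma^{-1}\delta$, and invoke its strict positivity for $\delta\neq0$ (\Cref{prop:pos-align}), with zero slope when $\delta=0$. The symmetry of $\Sigma^{-1}$ you flag as the delicate point is not actually needed, since $w^\top\delta=\delta^\top w=\delta^\top\Sigma^{-1}\delta$ holds for any vectors.
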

The proof is in \Cref{app:pf-causal-shift}. When $\delta\neq0$, \Cref{prop:pos-align,prop:causal-shift} show that although $w$ is covariance-adjusted, it still points in the correct overall interventional direction. In particular, moving outcomes in the true interventional mean-shift direction increases the MCF score.

\paragraph{A signal-to-noise perspective.} Under the same condition, a key property of the MCF is that it follows the linear direction that maximizes the signal-to-noise ratio defined below. For any nonzero candidate direction~$u\in\mathbb{R}^2$, consider the linear score~$S_u(Y):=u^\top Y$. We have
\[
S_u(Y(a))\sim \mathcal N(u^\top \mu_a,\; u^\top \Sigma u).
\]
In particular, its mean shift is $u^\top \delta$, and its variance is $u^\top \Sigma u$. Define the squared signal-to-noise ratio
\[
\mathrm{SNR}(u):=\frac{(u^\top \delta)^2}{u^\top \Sigma u}.
\]

When $\delta\neq0$, \Cref{prop:mcf-snr} shows that the MCF maximizes the SNR. In other words, among all linear scores, the MCF finds the direction with the largest mean separation relative to the variance.
\begin{proposition}[MCF maximizes linear SNR]
\label{prop:mcf-snr}
Assume $\delta\neq0$, and let $w=\Sigma^{-1}\delta$. Then, $\mathrm{SNR}(u)\le \delta^\top \Sigma^{-1}\delta$ for every nonzero $u$, with equality if and only if $u$ is proportional to $w$.
\end{proposition}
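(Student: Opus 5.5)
The plan is to prove the bound with the Cauchy--Schwarz inequality in the inner product induced by $\Sigma$. Since $|\rho|<1$ and $\sigma_F,\sigma_P>0$, the covariance $\Sigma$ is symmetric positive definite. It therefore has a symmetric positive definite square root $\Sigma^{1/2}$ with inverse $\Sigma^{-1/2}$. For any nonzero $u$, factor the numerator of $\mathrm{SNR}(u)$ as
\[
u^\top\delta=(\Sigma^{1/2}u)^\top(\Sigma^{-1/2}\delta).
\]
Cauchy--Schwarz then gives
\[
(u^\top\delta)^2\le (u^\top\Sigma u)(\delta^\top\Sigma^{-1}\delta).
\]
Because $u\neq0$ and $\Sigma$ is positive definite, $u^\top\Sigma u>0$. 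Dividing by it yields $\mathrm{SNR}(u)\le\delta^\top\Sigma^{-1}\delta$.

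For the equality case I will use the equality condition of Cauchy--Schwarz. Equality holds if and only if the vectors $\Sigma^{1/2}u$ and $\Sigma^{-1/2}\delta$ are linearly dependent. Since $\delta\neq0$, the vector $\Sigma^{-1/2}\delta$ is nonzero. Linear dependence therefore means $\Sigma^{1/2}u=c\,\Sigma^{-1/2}\delta$ for some scalar $c$. Multiplying by $\Sigma^{-1/2}$ gives $u=c\,\Sigma^{-1}\delta=c\,w$, and $c\neq0$ because $u\neq0$.

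For the converse, I substitute $u=cw$ directly. The numerator becomes
\[
(cw^\top\delta)^2=c^2(\delta^\top\Sigma^{-1}\delta)^2,
\]
and the denominator becomes
\[
c^2w^\top\Sigma w=c^2\,\delta^\top\Sigma^{-1}\Sigma\Sigma^{-1}\delta=c^2\,\delta^\top\Sigma^{-1}\delta.
\]
Their ratio is $\delta^\top\Sigma^{-1}\delta$. This quantity is strictly positive by \Cref{prop:pos-align}, so the cancellation is legitimate.

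There is no serious obstacle here. The only points needing care are recording why $\Sigma$ is positive definite (from $|\rho|<1$ and positive marginal standard deviations) and handling the scalar $c$ in the equality case so that it is nonzero. The nonzero $c$ is exactly what makes the statement ``proportional to $w$'' correct, including negative multiples, for which the SNR is unchanged.
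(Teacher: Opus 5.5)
Your proof is correct and follows essentially the same route as the paper, which likewise substitutes $v=\Sigma^{1/2}u$, applies Cauchy--Schwarz to $v$ and $\Sigma^{-1/2}\delta$, and reads off the equality case as $u$ proportional to $\Sigma^{-1}\delta=w$. Your extra care about positive definiteness, about $c\neq0$, and your direct check of the converse are harmless additions to that argument.
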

The proof is in \Cref{app:pf-mcf-snr}. The following remark isolates the special case in which the intervention induces no mean shift in politeness.
\begin{remark}
In the special case where $\Delta_P = 0$, so the intervention induces no mean shift in politeness, MCF yields an SNR of
\[
\frac{1}{1 - \rho^2} \cdot \frac{\Delta_F^2}{\sigma_F^2},
\]
compared to $\Delta_F^2/\sigma_F^2$ when we restrict the second coordinate of $u$ to be zero (i.e., using formality alone). Since $|\rho| < 1$, we have $1/(1 - \rho^2) > 1$, so the MCF achieves strictly higher SNR whenever $\rho \neq 0$ and $\Delta_F\neq0$. Moreover, the gain factor $1/(1 - \rho^2)$ is increasing in $|\rho|$, so the benefit of incorporating the second coordinate becomes larger as the correlation between formality and politeness increases. Thus, although the intervention induces no mean shift in politeness, the correlated politeness coordinate can increase the discriminative signal-to-noise ratio.
\end{remark}

\paragraph{Comparing the contributions of different features.} To compare the contributions of different text features, it is useful to work with scale-normalized
coefficients, especially when the scales of the features differ. Define the scale-normalized features as
\[
F^* := \frac{F}{\sigma_F}, \qquad P^* := \frac{P}{\sigma_P},
\]
whence $w_F F + w_P P
=
(w_F \sigma_F)F^* + (w_P \sigma_P)P^*$.
Define the scale-normalized coefficients
\[
\tilde w_F := w_F \sigma_F, \qquad \tilde w_P := w_P \sigma_P,
\]
and the standardized treatment effects
\[
z_F := \frac{\Delta_F}{\sigma_F},
\qquad
z_P := \frac{\Delta_P}{\sigma_P}.
\]
Using simple algebra, the scale-normalized MCF coefficients are
\[
\tilde w_F = \frac{z_F - \rho z_P}{1-\rho^2},
\qquad
\tilde w_P = \frac{z_P - \rho z_F}{1-\rho^2}.
\]
\Cref{prop:ordering} shows that, after scale normalization, the relative size of the coefficients reflects the relative size of the normalized
treatment effects. For example, if formality has a larger normalized treatment shift than politeness, then
the scale-normalized MCF coefficient for formality is larger in magnitude. Its proof follows from simple algebra.
\begin{proposition}[Ordering of scale-normalized coefficients]
\label{prop:ordering}
With
\[
\tilde w_F = \frac{z_F - \rho z_P}{1-\rho^2},
\qquad
\tilde w_P = \frac{z_P - \rho z_F}{1-\rho^2},
\]
we have $|\tilde w_F| > |\tilde w_P|
\iff
|z_F| > |z_P|$.
\end{proposition}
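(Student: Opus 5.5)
The plan is to compare squared magnitudes. The common factor $1/(1-\rho^2)$ is positive because $|\rho|<1$. It therefore drops out of any comparison of $|\tilde w_F|$ and $|\tilde w_P|$, so
\[
|\tilde w_F|>|\tilde w_P|
\iff
(z_F-\rho z_P)^2>(z_P-\rho z_F)^2 .
\]
Expanding both squares, the cross terms $-2\rho z_F z_P$ appear identically on each side and cancel. What remains is
\[
(z_F-\rho z_P)^2-(z_P-\rho z_F)^2
=
z_F^2+\rho^2 z_P^2-z_P^2-\rho^2 z_F^2
=
(1-\rho^2)(z_F^2-z_P^2).
\]
One can also reach this factorization directly through the difference of squares $(a-b)(a+b)$ with $a=z_F-\rho z_P$ and $b=z_P-\rho z_F$. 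Here $a-b=(1+\rho)(z_F-z_P)$ and $a+b=(1-\rho)(z_F+z_P)$.

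Since $1-\rho^2>0$, the sign of the difference is the sign of $z_F^2-z_P^2$. So $|\tilde w_F|>|\tilde w_P|$ holds if and only if $z_F^2>z_P^2$, which is the same as $|z_F|>|z_P|$. Both directions of the equivalence come out of this single chain of equivalences, so no separate argument is needed for the converse.

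There is no real obstacle here. The only point needing care is to invoke $|\rho|<1$ twice: once to discard the positive prefactor $1/(1-\rho^2)$, and once to keep the sign of $(1-\rho^2)(z_F^2-z_P^2)$. As a by-product, the same identity gives the analogous equivalence with equality, $|\tilde w_F|=|\tilde w_P|\iff |z_F|=|z_P|$.
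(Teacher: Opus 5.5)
Your proposal is correct and follows essentially the same route as the paper: both compare squared magnitudes and use the identity $(z_F-\rho z_P)^2-(z_P-\rho z_F)^2=(1-\rho^2)(z_F^2-z_P^2)$, with $|\rho|<1$ guaranteeing the positive factor. The paper states the resulting difference $\tilde w_F^2-\tilde w_P^2=(z_F^2-z_P^2)/(1-\rho^2)$ directly, while you add the difference-of-squares check and the equality case.
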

The proof is in \Cref{app:pf-ordering}. \textbf{Budgeted Gaussian MCF.}
We now return to the budgeted formulation. Suppose the budget is defined with respect to the pooled interventional reference density
\[
p(y)
:=
\pi p_1(y)+(1-\pi)p_0(y),
\qquad
\pi\in(0,1).
\]
Here, \(\pi\) is the mixture weight.
When $\mu_1\neq\mu_0$, the equal-covariance Gaussian model gives the budgeted MCF the same direction as the unbudgeted MCF; only the threshold changes.
\begin{proposition}[Budgeted Gaussian MCF]
\label{prop:budgeted-gaussian-mcf}
In the equal-covariance Gaussian model with $\mu_1\neq\mu_0$ and pooled reference distribution $p(y):=\pi p_1(y)+(1-\pi)p_0(y)$, the budgeted MCF takes the form
\[
g_\tau^*(y)=\mathds{1}\{w^\top y \ge t_\tau\},
\qquad
w=\Sigma^{-1}(\mu_1-\mu_0).
\]
Concretely, only the threshold $t_\tau$ changes with $\tau$. The direction $w$ is unchanged.
\end{proposition}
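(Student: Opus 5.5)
The plan is to reduce the statement to \Cref{prop:ext-3} and then show that, in the Gaussian model, the treatment enrichment score is a strictly increasing function of the linear score \(S(y)=w^\top y\). By \Cref{prop:ext-3}, the budgeted optimizer thresholds \(s(y)=\Delta(y)/p(y)\) with \(p=\pi p_1+(1-\pi)p_0\). It is therefore enough to find a strictly increasing map \(\kappa\) with \(s(y)=\kappa(w^\top y)\). Then \(\{s(y)\ge\lambda_\tau\}=\{w^\top y\ge t_\tau\}\), where \(t_\tau=\kappa^{-1}(\lambda_\tau)\), suitably interpreted if \(\lambda_\tau\) lies outside the range of \(\kappa\). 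The direction \(w\) does not depend on \(\tau\); only \(t_\tau\) does.

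The key step is the likelihood ratio. Write \(r(y)=p_1(y)/p_0(y)\). Expanding the two Gaussian quadratic forms with common \(\Sigma\), the quadratic terms cancel, giving
\[
r(y)=\exp\{w^\top y-c\},
\]
with \(w\) and \(c\) as in \Cref{prop:gaussian-mcf}. Dividing numerator and denominator of \(s\) by \(p_0\) gives
\[
s(y)=\frac{r(y)-1}{\pi r(y)+1-\pi}.
\]
As a function of \(u=r>0\), this is \(h(u)=(u-1)/(\pi u+1-\pi)\), with derivative \(h'(u)=1/(\pi u+1-\pi)^2>0\). So \(h\) is strictly increasing, and so is \(u\mapsto e^{u-c}\). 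Hence \(\kappa(v)=h(e^{v-c})\) is strictly increasing, and \(s(y)=\kappa(w^\top y)\).

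It remains to check the hypotheses of \Cref{prop:ext-3}. First, \(p>0\) everywhere because Gaussian densities are positive. Second, there is no atom: since \(\mu_1\neq\mu_0\), we have \(w\neq0\), so \(w^\top Y\) under the pooled reference is a nondegenerate two-component Gaussian mixture. It is therefore continuous, and \(s(Y)\), a strictly monotone transform of it, has no atoms. Consequently \(t_\tau\) is well defined as the upper \(\tau\)-quantile of \(w^\top Y\) under \(p\), and the indicator achieves budget exactly \(\tau\).

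The main obstacle is the range of \(\tau\). \Cref{prop:ext-3} assumes \(\tau<\tau^\star\). For \(\tau\ge\tau^\star\), the exact-budget optimizer still selects the top-\(\tau\) mass by the ranking argument in its proof: the pointwise inequality \((g^\star-g)(s-\lambda)\ge0\) does not depend on the sign of \(\lambda\). The Gaussian tie set \(\{s=0\}\) is a hyperplane of measure zero, so \(\tau_0=0\). I will therefore apply that ranking argument directly for every \(\tau\in(0,1)\), rather than invoking the proposition only on \((0,\tau^\star)\).
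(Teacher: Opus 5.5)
Your proposal is correct and follows the paper's proof: reduce to \Cref{prop:ext-3}, write $s=(r-1)/(\pi r+1-\pi)$ with $r=p_1/p_0=\exp\{w^\top y-c\}$, and use the strict monotonicity of this map to turn the threshold on $s$ into a threshold on $w^\top y$. Your extra checks fill in details the paper's proof leaves implicit. First, $w^\top Y$ has no atoms under the pooled reference because $w\neq0$. Second, the ranking argument extends beyond the range $\tau<\tau^\star$ assumed in \Cref{prop:ext-3}, since the Gaussian tie set is a null hyperplane.
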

The proof is in \Cref{app:pf-budgeted-gaussian-mcf}. Note that the budget does not discover new features or new directions: it simply moves the threshold along the same MCF direction. Concretely, in the formality and politeness situation, the budgeted formulation \textit{does not first discover the strongest feature and then the weaker one}; instead, both features enter through the same combined score $w^\top y$ at all budgets, and increasing the budget reveals less extreme points along the same ranking.

Finally, we formalize \textit{how the composition of selected texts changes as the budget increases}. We focus on the probability of selecting texts that are more polite than formal across different score levels. Throughout, we assume that formality has a larger effect than politeness and work with features on a common unit-variance scale so that differences in scale do not affect our comparisons.

\begin{proposition}[Small budgets rarely select texts that are polite but not formal]
\label{prop:small-budget-selection}
Suppose that under the reference distribution used for the budget constraint,
\[
Y=\begin{pmatrix}F\\P\end{pmatrix}
\sim \mathcal N\!\left(
\begin{pmatrix}\mu_F\\ \mu_P\end{pmatrix},
\begin{pmatrix}
1 & \rho\\
\rho & 1
\end{pmatrix}
\right),
\qquad -1<\rho<1,
\]
and let the MCF score be $S=\alpha_F F+\alpha_P P$, where $\alpha_F>\alpha_P>0$. For $\kappa\ge 0$, define the event $E_\kappa:=\{P-F\ge \kappa\}$, which represents texts that are substantially more polite than formal. Then,
\begin{enumerate}
\item The conditional probability $q(t):=\Pr(E_\kappa\mid S\ge t)$ is decreasing in $t$.

\item More generally, for any $t_1>t_2>t_3$, $\Pr(E_\kappa\mid t_2\le S<t_1)
\;\le\;
\Pr(E_\kappa\mid t_3\le S<t_2)$. Thus, lower score bands contain a larger proportion of polite-but-not-formal texts.

\item If the budgeted MCF rule selects the top $\tau$ fraction according to $S$, i.e.,
\[
\mathcal S_\tau=\{S\ge t_\tau\},
\qquad
\Pr(S\ge t_\tau)=\tau,
\]
then $t_\tau$ decreases as $\tau$ increases, and therefore $\Pr(E_\kappa\mid\mathcal S_\tau)$ is increasing in $\tau$.
\end{enumerate}
\end{proposition}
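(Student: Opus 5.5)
The plan is to reduce everything to one monotonicity fact: conditional on the score $S=s$, the probability of $E_\kappa$ is a decreasing function of $s$. Parts 1–3 then follow from elementary facts about averaging a monotone function over score bands and tails.

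\textbf{Step 1 (joint law of score and contrast).} Write $D:=P-F$, so that $E_\kappa=\{D\ge\kappa\}$. Under the reference law, $(S,D)$ is bivariate Gaussian, being a linear image of $(F,P)$. Its moments are:
\[
\operatorname{Var}(S)=\alpha_F^2+\alpha_P^2+2\rho\alpha_F\alpha_P>0,
\qquad
\operatorname{Var}(D)=2(1-\rho)>0,
\]
\[
\operatorname{Cov}(S,D)=\alpha_P-\alpha_F+\rho(\alpha_F-\alpha_P)=-(1-\rho)(\alpha_F-\alpha_P)<0.
\]
The variance of $S$ is positive because $\alpha_F,\alpha_P>0$ and $|\rho|<1$. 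The covariance is negative exactly because $\alpha_F>\alpha_P$ and $\rho<1$; this is where the hypothesis ``formality has the larger coefficient'' enters.

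By Gaussian conditioning, $D\mid S=s\sim\mathcal N(m(s),v)$, where
\[
m(s)=\mu_D+\frac{\operatorname{Cov}(S,D)}{\operatorname{Var}(S)}\,(s-\mu_S)
\]
is strictly decreasing and affine in $s$, and $v$ does not depend on $s$. I expect $v>0$, since $S$ and $D$ are not collinear: the vectors $(\alpha_F,\alpha_P)$ and $(-1,1)$ are linearly independent and $\Sigma$ is nonsingular. Hence
\[
h(s):=\Pr(E_\kappa\mid S=s)=\bar\Phi\{(\kappa-m(s))/\sqrt v\}
\]
is continuous and strictly decreasing in $s$.

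\textbf{Step 2 (band and tail averages of a decreasing function).} For any band, $\Pr(E_\kappa\mid a\le S<b)=\E\{h(S)\mid a\le S<b\}$, and $S$ has a positive density on $\mathbb R$.

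For part 2, take $t_1>t_2>t_3$. Since $h$ is decreasing, $h(S)\le h(t_2)$ on $\{t_2\le S<t_1\}$ and $h(S)\ge h(t_2)$ on $\{t_3\le S<t_2\}$. Averaging gives the required inequality, with $h(t_2)$ separating the two band averages.

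For part 1, take $t<t'$ and decompose
\[
\E\{h(S)\mid S\ge t\}=\lambda\,\E\{h(S)\mid t\le S<t'\}+(1-\lambda)\,\E\{h(S)\mid S\ge t'\},
\qquad \lambda\in(0,1).
\]
The same separation argument, run with the upper band replaced by the tail $\{S\ge t'\}$, shows that the first average is at least $h(t')$ and the second is at most $h(t')$. Hence $q(t)\ge q(t')$, and the inequality is strict because $h$ is strictly decreasing.

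\textbf{Step 3 (budgets).} Because $S$ is continuous with full support, $\tau\mapsto t_\tau$, defined by $\Pr(S\ge t_\tau)=\tau$, is well defined and strictly decreasing. Since $\Pr(E_\kappa\mid\mathcal S_\tau)=q(t_\tau)$, composing with part 1 shows that this probability is increasing in $\tau$.

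The main obstacle is Step 1: getting the sign of $\operatorname{Cov}(S,D)$ right and confirming that the conditional variance $v$ is strictly positive, which is needed for strict monotonicity. Everything after that is a routine stochastic-ordering argument. If the sign computation became messy, I would instead note that the regression coefficient of $D$ on $S$ is proportional to $(-1,1)^\top\Sigma(\alpha_F,\alpha_P)^\top$ and evaluate that product directly.
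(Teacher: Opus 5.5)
Your proposal is correct and follows the same route as the paper's proof. Both compute $\operatorname{Cov}(P-F,S)=(1-\rho)(\alpha_P-\alpha_F)<0$, use Gaussian conditioning to show $\Pr(E_\kappa\mid S=s)$ is strictly decreasing in $s$, average this over tails and bands, and get Part 3 from the monotonicity of $t_\tau$. Your mixture decomposition for Part 1 and your check that the conditional variance is strictly positive make rigorous two steps that the paper only asserts.
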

The proof is in \Cref{app:pf-thm-budget}. \Cref{prop:small-budget-selection} formalizes the intuition that, when formality has the larger coefficient in the MCF score, very small budgets mainly select texts that are strong along the formality direction. As the budget increases and the threshold decreases, the newly added texts may contain a larger proportion of texts whose inclusion is supported by the secondary feature, here politeness.

\proofsection{prop:gaussian-mcf}{Gaussian MCF}{app:pf-gauss-mcf}
\begin{proof}
For equal-covariance Gaussians,
\[
\log\frac{p_1(y)}{p_0(y)}
=
-\frac12 (y-\mu_1)^\top \Sigma^{-1}(y-\mu_1)
+\frac12 (y-\mu_0)^\top \Sigma^{-1}(y-\mu_0).
\]
Expanding and canceling the quadratic terms gives
\[
\log\frac{p_1(y)}{p_0(y)}
=
(\mu_1-\mu_0)^\top \Sigma^{-1} y
-
\frac12\left(\mu_1^\top\Sigma^{-1}\mu_1-\mu_0^\top\Sigma^{-1}\mu_0\right).
\]
Hence
\[
p_1(y)>p_0(y)
\iff
w^\top y > c,
\]
with \(w\) and \(c\) as stated.
\end{proof}

\proofsection{prop:pos-align}{Positive alignment}{app:pf-pos-align}
\begin{proof}
Since \(\Sigma\) is positive definite, \(\Sigma^{-1}\) is also positive definite. Therefore, \(\delta^\top \Sigma^{-1}\delta>0\) for every nonzero \(\delta\).
\end{proof}

\proofsection{prop:causal-shift}{Moving along the causal shift increases the MCF score}{app:pf-causal-shift}
\begin{proof}
By linearity,
\[
w^\top(y+t\delta)
=
w^\top y+t w^\top\delta
=
w^\top y+t\,\delta^\top\Sigma^{-1}\delta.
\]
The coefficient of \(t\) is strictly positive when \(\delta\neq0\) and is zero when \(\delta=0\).
\end{proof}

\proofsection{prop:mcf-snr}{MCF maximizes linear SNR}{app:pf-mcf-snr}
\begin{proof}
Let \(v=\Sigma^{1/2}u\), so that \(u=\Sigma^{-1/2}v\). Then
\[
\mathrm{SNR}(u)
=
\frac{(u^\top \delta)^2}{u^\top \Sigma u}
=
\frac{(v^\top \Sigma^{-1/2}\delta)^2}{v^\top v}.
\]
By Cauchy--Schwarz,
\[
(v^\top \Sigma^{-1/2}\delta)^2
\le
(v^\top v)(\delta^\top \Sigma^{-1}\delta).
\]
Hence
\[
\mathrm{SNR}(u)\le \delta^\top \Sigma^{-1}\delta.
\]
Equality holds if and only if \(v\) is proportional to \(\Sigma^{-1/2}\delta\); that is, \(u\) is proportional to \(\Sigma^{-1}\delta=w\).
\end{proof}

\proofsection{prop:ordering}{Ordering of scale-normalized coefficients}{app:pf-ordering}
\begin{proof}
Using the definitions of the scale-normalized coefficients,
\[
\begin{aligned}
\tilde w_F^2-\tilde w_P^2
&=
\frac{(z_F-\rho z_P)^2-(z_P-\rho z_F)^2}
{(1-\rho^2)^2} \\
&=
\frac{z_F^2-z_P^2}{1-\rho^2}.
\end{aligned}
\]
Because \(|\rho|<1\), the denominator is positive. Thus
\[
\tilde w_F^2>\tilde w_P^2
\iff
z_F^2>z_P^2,
\]
which is equivalent to
\(\lvert\tilde w_F\rvert>\lvert\tilde w_P\rvert
\iff
\lvert z_F\rvert>\lvert z_P\rvert\).
\end{proof}

\proofsection{prop:budgeted-gaussian-mcf}{Budgeted Gaussian MCF}{app:pf-budgeted-gaussian-mcf}
\begin{proof}
From \Cref{prop:ext-3}, we have
\[
g_\tau^*(y)=\mathds{1}\{s(y)\ge \lambda_\tau\},
\qquad
s(y):=\frac{p_1(y)-p_0(y)}{p(y)}.
\]
Write
\[
r(y):=\frac{p_1(y)}{p_0(y)}.
\]
Then
\[
s(y)
=
\frac{p_1(y)-p_0(y)}{\pi p_1(y)+(1-\pi)p_0(y)}
=
\frac{r(y)-1}{\pi r(y)+(1-\pi)}.
\]
The map \(r\mapsto(r-1)/(\pi r+1-\pi)\) is strictly increasing on \((0,\infty)\). Hence thresholding \(s(y)\) is equivalent to thresholding \(r(y)\), or equivalently \(\log r(y)\). In the equal-covariance Gaussian case,
\[
\log r(y)=w^\top y-c,
\]
with \(w=\Sigma^{-1}(\mu_1-\mu_0)\). Therefore, the optimal rule is
\[
g_\tau^*(y)=\mathds{1}\{w^\top y\ge t_\tau\},
\]
for a suitable threshold \(t_\tau\) chosen to satisfy the budget.
\end{proof}

\proofsection{prop:small-budget-selection}{Small budgets rarely select texts that are polite but not formal}{app:pf-thm-budget}
\begin{proof}
Let
\[
D:=P-F.
\]
Since \((F,P)\) is jointly Gaussian and both \(D\) and \(S\) are linear combinations of \((F,P)\), the pair \((D,S)\) is jointly Gaussian.

Its covariance is
\[
\operatorname{Cov}(D,S)
=
\operatorname{Cov}(P-F,\alpha_F F+\alpha_P P).
\]
Using \(\operatorname{Var}(F)=\operatorname{Var}(P)=1\) and \(\operatorname{Cov}(F,P)=\rho\), we obtain
\begin{align*}
\operatorname{Cov}(D,S)
&=
\alpha_F\operatorname{Cov}(P-F,F)+\alpha_P\operatorname{Cov}(P-F,P) \\
&=
\alpha_F(\rho-1)+\alpha_P(1-\rho) \\
&=
(1-\rho)(\alpha_P-\alpha_F).
\end{align*}
Since \(\alpha_F>\alpha_P\) and \(\rho<1\), this is strictly negative:
\[
\operatorname{Cov}(D,S)<0.
\]

Because \((D,S)\) is jointly Gaussian, the conditional distribution of \(D\) given \(S=s\) is Gaussian with mean
\[
\mathbb E[D\mid S=s]
=
\mathbb E[D]
+
\frac{\operatorname{Cov}(D,S)}{\operatorname{Var}(S)}
\bigl(s-\mathbb E[S]\bigr),
\]
and variance
\[
\operatorname{Var}(D\mid S=s)
=
\operatorname{Var}(D)-\frac{\operatorname{Cov}(D,S)^2}{\operatorname{Var}(S)},
\]
which does not depend on \(s\).

Hence
\[
\Pr(E_\kappa\mid S=s)
=
\Pr(D\ge \kappa\mid S=s)
=
1-\Phi\!\left(
\frac{\kappa-\mathbb E[D\mid S=s]}{\sqrt{\operatorname{Var}(D\mid S)}}
\right),
\]
where \(\Phi\) is the standard normal cdf.

Since \(\operatorname{Cov}(D,S)<0\), the conditional mean \(\mathbb E[D\mid S=s]\) is strictly decreasing in \(s\). Therefore, the conditional event probability \(\Pr(E_\kappa\mid S=s)\) is also strictly decreasing in \(s\).

Now
\[
q(t)=\Pr(E_\kappa\mid S\ge t)
=
\mathbb E\!\left[\Pr(E_\kappa\mid S)\mid S\ge t\right].
\]
Because \(\Pr(E_\kappa\mid S=s)\) is decreasing in \(s\), conditioning on larger values of \(S\) can only decrease this expectation. Thus \(q(t)\) is decreasing in \(t\). This proves (1).

For (2), note that
\[
\Pr(E_\kappa\mid t_2\le S<t_1)
=
\mathbb E\!\left[\Pr(E_\kappa\mid S)\mid t_2\le S<t_1\right].
\]
Since \(\Pr(E_\kappa\mid S=s)\) is decreasing in \(s\), its average over the lower interval \([t_3,t_2)\) is at least as large as its average over the higher interval \([t_2,t_1)\). This proves
\[
\Pr(E_\kappa\mid t_2\le S<t_1)
\le
\Pr(E_\kappa\mid t_3\le S<t_2).
\]

Finally, (3) follows because the budget threshold \(t_\tau\) decreases as \(\tau\) increases, whereas part (1) shows that \(\Pr(E_\kappa\mid S\ge t)\) decreases in \(t\).
\end{proof}

\section{What the MIF--MCF objective selects}
\label{sec:hdto-selection}

We provide two algebraic characterizations that illustrate what the MIF--MCF objective prefers to capture when one side of the pair is held fixed.

\textbf{Fixing the outcome feature.}
For a fixed outcome feature \(g\), define
\[
    r_g(a)
    =
    \E\!
    \left[
        g(Y)-\E\{g(Y)\mid X\}
        \mid A=a
    \right].
\]
This is the average residual outcome-feature value among units that received treatment object $a$.
\begin{proposition}[Selection form for fixed outcome feature]
\label{prop:hdto-fixed-g}
Fix \(g\), and suppose \(\epsilon\leq f\leq 1-\epsilon\). Then
\[
    \mathcal{J}(f,g)=\E\{f(A)r_g(A)\}.
\]
An optimizer over all measurable $f$ satisfying $\epsilon\leq f\leq 1-\epsilon$ is
\[
    f_g^*(a)
    =
    \epsilon+(1-2\epsilon)\mathds{1}\{r_g(a)>0\}.
\]
On the tie set~\(\{a:r_g(a)=0\}\), any value in~\([\epsilon,1-\epsilon]\) is optimal.
\end{proposition}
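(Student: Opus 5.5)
The plan is to reduce the objective to a pointwise-linear functional of \(f\), in the same way as in the proofs of \Cref{prop:mcf-identification-uniqueness} and \Cref{prop:ext-2}.

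\textbf{Step 1: rewrite the objective.} Start from \Cref{eq:hdto-objective} with \(m_g(X)=\E\{g(Y)\mid X\}\), and condition on \(A\) by iterated expectations:
\[
\mathcal J(f,g)=\E\big[\E\{f(A)(g(Y)-m_g(X))\mid A\}\big]=\E\big[f(A)\,\E\{g(Y)-m_g(X)\mid A\}\big]=\E\{f(A)r_g(A)\}.
\]
Pulling \(f(A)\) outside the inner expectation is legitimate because \(f(A)\) is \(A\)-measurable and bounded. The residual \(g(Y)-m_g(X)\) is integrable whenever \(g\) is bounded or has finite first moment. This gives the first claim.

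\textbf{Step 2: maximize pointwise.} Write \(P_A\) for the law of \(A\), so that \(\mathcal J(f,g)=\int f(a)r_g(a)\,dP_A(a)\). For any feasible \(f\),
\[
\mathcal J(f_g^*,g)-\mathcal J(f,g)=\int\{f_g^*(a)-f(a)\}\,r_g(a)\,dP_A(a).
\]
Check that the integrand is nonnegative case by case:
\begin{itemize}
\item Where \(r_g(a)>0\), we have \(f_g^*(a)=1-\epsilon\ge f(a)\).
\item Where \(r_g(a)<0\), we have \(f_g^*(a)=\epsilon\le f(a)\).
\item Where \(r_g(a)=0\), the integrand vanishes whatever value \(f\) takes.
\end{itemize}
So \(f_g^*\) is optimal, and every value in \([\epsilon,1-\epsilon]\) on the tie set gives the same objective.

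\textbf{Step 3: the main obstacle, measurability.} The only delicate point is that \(r_g\) must be a well-defined measurable function of \(a\), so that \(f_g^*\) is measurable and feasible. Since \(A\) may be an unstructured object, I would take \(r_g\) to be a version of the conditional expectation \(\E[\,g(Y)-m_g(X)\mid A\,]\), which exists because the residual is integrable. The set \(\{r_g>0\}\) is then measurable. Any two versions of \(r_g\) agree \(P_A\)-almost surely, so the optimizer is determined up to \(P_A\)-null sets, and the stated form holds for every version.
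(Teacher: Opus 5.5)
Your proposal is correct and follows the paper's proof: iterated expectation conditional on \(A\) gives \(\E\{f(A)r_g(A)\}\), and the optimizer then comes from pointwise maximization of an objective linear in \(f(a)\). Your explicit bound \(\int\{f_g^*(a)-f(a)\}r_g(a)\,dP_A(a)\ge0\) and your remark about choosing a measurable version of \(r_g\) add rigor that the paper leaves implicit.
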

The proof is in \Cref{app:pf-hdto-fixed-g}. Thus, once the outcome feature is fixed, the best treatment feature gives high scores to treatment objects followed by high residual outcome scores. In the documentation example, if detailed AI suggestions are followed by unusually complete notes among comparable visits, then such suggestions receive high values of $f_g^*$.

\textbf{Fixing the treatment feature.}
For a fixed treatment feature \(f\), define
\[
    s_f(y)
    =
    \E\!
    \left[
        f(A)-\E\{f(A)\mid X\}
        \mid Y=y
    \right].
\]
This is the average residual treatment-feature value among units whose outcome object is $y$.
\begin{proposition}[Selection form for fixed treatment feature]
\label{prop:hdto-fixed-f}
Fix \(f\), and suppose \(0\leq g\leq1\). Then
\[
    \mathcal{J}(f,g)=\E\{g(Y)s_f(Y)\}.
\]
An optimizer over all measurable $g$ satisfying $0\leq g\leq1$ is
\[
    g_f^*(y)=\mathds{1}\{s_f(y)>0\}.
\]
\end{proposition}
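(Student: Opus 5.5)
The plan is to reduce $\mathcal J(f,g)$ to an expectation of $g(Y)$ times a function of $Y$ alone, and then maximize pointwise, as in the oracle argument of \Cref{prop:mcf-identification-uniqueness}.

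\textbf{Step 1: symmetric form.} By \Cref{prop:hdto-equivalent}, the objective can be written with both factors centered:
\[
\mathcal J(f,g)=\E\!\left[\{f(A)-\E(f(A)\mid X)\}\{g(Y)-\E(g(Y)\mid X)\}\right].
\]
Write $\tilde f=f(A)-\E\{f(A)\mid X\}$. Then $\E(\tilde f\mid X)=0$, so $\E[\tilde f\,\E\{g(Y)\mid X\}]=0$ by iterated expectation on $X$. Hence
\[
\mathcal J(f,g)=\E\{\tilde f\,g(Y)\}.
\]

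\textbf{Step 2: condition on $Y$.} Since $g(Y)$ is $\sigma(Y)$-measurable and bounded,
\[
\E\{\tilde f\,g(Y)\}=\E\bigl[g(Y)\,\E(\tilde f\mid Y)\bigr]=\E\{g(Y)s_f(Y)\}.
\]
This is the claimed identity. Integrability is not an issue: $f$ is bounded in $[\epsilon,1-\epsilon]$ and $g$ in $[0,1]$, so every expectation above is finite.

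\textbf{Step 3: optimize pointwise.} The map $g\mapsto\E\{g(Y)s_f(Y)\}$ is linear, and its integrand $g(y)s_f(y)$ can be maximized separately at each $y$. Take $g(y)=1$ where $s_f(y)>0$ and $g(y)=0$ where $s_f(y)<0$; on $\{s_f=0\}$ any value in $[0,1]$ is optimal. For any feasible $g$,
\[
\E\bigl[\{\mathbf 1(s_f(Y)>0)-g(Y)\}\,s_f(Y)\bigr]\ge0,
\]
because the integrand is nonnegative pointwise: where $s_f>0$ the first factor is $1-g\ge0$, and where $s_f<0$ it is $-g\le0$. This shows $g_f^*$ is an optimizer.

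\textbf{Main subtlety.} The only delicate point is measurability. One must choose a measurable version of the regression $s_f(y)$, which exists by the Doob--Dynkin lemma, so that $g_f^*$ is a measurable function of $Y$. Beyond that the argument is routine.
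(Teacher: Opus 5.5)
Your proposal is correct and follows the paper's proof. Both rewrite $\mathcal{J}(f,g)$ via \Cref{prop:hdto-equivalent} as $\E[g(Y)\{f(A)-\E(f(A)\mid X)\}]$, condition on $Y$ to obtain $\E\{g(Y)s_f(Y)\}$, and maximize the linear integrand pointwise; your Step 1 usefully makes explicit the iterated-expectation step that removes the centering of $g$, which the paper leaves implicit when it cites \Cref{prop:hdto-equivalent}.
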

The proof is in \Cref{app:pf-hdto-fixed-f}. Together, the two selection forms show that the criterion searches for a mutually reinforcing pair: a treatment feature that selects treatment objects linked to high residual outcomes, and an outcome feature that selects outcome objects linked to high residual treatments.

\proofsection{prop:hdto-fixed-g}{Selection form for fixed outcome feature}{app:pf-hdto-fixed-g}
\begin{proof}
By iterated expectation,
\[
\begin{aligned}
\mathcal{J}(f,g)
&=
\E\!
\left[
    f(A)\{g(Y)-\E(g(Y)\mid X)\}
\right]
\\
&=
\E\!
\left[
    f(A)
    \E\!\left\{g(Y)-\E(g(Y)\mid X)\mid A\right\}
\right]
=
\E\{f(A)r_g(A)\}.
\end{aligned}
\]
The last expression is linear in \(f(a)\) at each value of \(a\). It is maximized by assigning \(1-\epsilon\) when \(r_g(a)>0\), assigning \(\epsilon\) when \(r_g(a)<0\), and assigning any value in \([\epsilon,1-\epsilon]\) when \(r_g(a)=0\). The displayed optimizer uses \(\epsilon\) on the tie set.
\end{proof}

\proofsection{prop:hdto-fixed-f}{Selection form for fixed treatment feature}{app:pf-hdto-fixed-f}
\begin{proof}
By \Cref{prop:hdto-equivalent},
\[
    \mathcal{J}(f,g)
    =
    \E\!
    \left[
        g(Y)\{f(A)-\E(f(A)\mid X)\}
    \right].
\]
Conditioning on \(Y\) gives
\[
\begin{aligned}
\mathcal{J}(f,g)
&=
\E\!
\left[
    g(Y)
    \E\!\left\{f(A)-\E(f(A)\mid X)\mid Y\right\}
\right]
=
\E\{g(Y)s_f(Y)\}.
\end{aligned}
\]
The last expression is linear in \(g(y)\) at each value of \(y\). It is maximized by setting \(g(y)=1\) when \(s_f(y)>0\) and \(g(y)=0\) when \(s_f(y)<0\).
\end{proof}

\section{Covariate-adaptive treatment and outcome features}
\label{sec:hdto-heterogeneous}

The global version learns one treatment feature and one outcome feature for the full population. Some scientific questions call for features whose meaning varies across covariates. In clinical documentation, a useful suggestion for an emergency visit may differ from a useful suggestion for a routine follow-up, and the corresponding change in the final note may differ as well.

We allow this by learning
\[
    f=f(A,X),
    \qquad
    g=g(Y,X).
\]
The same suggestion or final note may then receive different scores at different covariate values. The covariate-adaptive objective is
\[
    \mathcal{J}_X(f,g)
    =
    \E\!
    \left[
        f(A,X)\{g(Y,X)-\E(g(Y,X)\mid X)\}
    \right].
\]
By the same calculation as in \Cref{prop:hdto-equivalent},
\[
    \mathcal{J}_X(f,g)
    =
    \E\!
    \left[
        \operatorname{Cov}\{f(A,X),g(Y,X)\mid X\}
    \right].
\]
The interpretation is local to the covariate value: the learned pair captures treatment and outcome directions whose association remains within each covariate stratum.

The negative-outcome identity also carries over. If
\[
    Y^\dagger\mid X \sim Y\mid X,
    \qquad
    Y^\dagger \perp A\mid X,
\]
then
\[
    \E\{f(A,X)g(Y,X)\}
    -
    \E\{f(A,X)g(Y^\dagger,X)\}
    =
    \mathcal{J}_X(f,g).
\]
In finite samples, the matched baseline evaluates each candidate negative outcome at the target covariate value,
\[
    \widehat b_\beta(X_i)
    =
    \sum_{j\in N(i)} w_{ij}g_\beta(Y_j,X_i).
\]
The empirical objective is
\[
    \widehat{\mathcal{J}}_X(\gamma,\beta)
    =
    \frac{1}{n}
    \sum_{i=1}^n
    f_\gamma(A_i,X_i)
    \left\{
        g_\beta(Y_i,X_i)
        -
        \sum_{j\in N(i)}w_{ij}g_\beta(Y_j,X_i)
    \right\}.
\]
Removing $X_i$ from the arguments of $f_\gamma$ and $g_\beta$ recovers the global estimator. The covariate-adaptive version is useful when the scientific question concerns subgroup-specific treatment and outcome directions. The global version is the clearer starting point when the goal is one interpretable feature of the treatment and one interpretable feature of the outcome.

\section{Additional empirical details and results}
\label{app:empirical}

This appendix retains the complete experimental mechanisms, auxiliary results, and additional examples referenced from the main empirical studies. The studies appear in the same within-experiment order as in the main text.

\subsection{Experiment I: Additional details for text-based outcomes}
\label{app:emp-text-outcomes}

\subsubsection{GYAFC formality}
\label{app:emp-formality}

\textbf{Experimental setup.}
The binary context~$X$ equals zero for entertainment and one for family. The binary treatment~$A$ equals one under exposure to the formality-inducing intervention and zero under the baseline condition. The binary style label~$F$ equals one for formal sentences and zero for informal sentences. The observed outcome~$Y$ is a sentence sampled from the corresponding context and style category. Treatment assignment follows
\[
X\sim\operatorname{Bernoulli}(0.5),\qquad A\mid X\sim\operatorname{Bernoulli}\{\operatorname{sigmoid}(0.35+0.3X)\}.
\]
Thus, treatment is not randomized marginally with respect to context. Because context also affects the outcome-text distribution, the context~$X$ is a confounder.

\textbf{Treatment scenarios.}
In Scenario~1, the outcome-style mechanism is
\[
F\mid A,X\sim\operatorname{Bernoulli}\{\operatorname{sigmoid}(-0.5+1.5A+0.2X)\}.
\]
Treatment therefore increases formality in both contexts. The learned feature-scoring function should recover a global formality direction: formal sentences should receive high values of~$\hat g(Y)$, while informal sentences should receive low values. In Scenario~2, the outcome-style mechanism is
\[
F\mid A,X\sim\operatorname{Bernoulli}\{\operatorname{sigmoid}(1.5(2X-1)A+0.1X)\}.
\]
Treatment decreases formality when~$X=0$ and increases formality when~$X=1$. This mechanism represents a context-dependent writing intervention: it may encourage a more conversational style in entertainment contexts but a more formal style in family or advice-related contexts. A valid feature-scoring function should therefore recover the treatment-associated textual direction relative to the context-specific baseline, rather than merely a global formality score.

\textbf{Estimation.}
We estimate~$\hat g(Y)$ in Scenario~1 and~$\hat g(Y,X)$ in Scenario~2. Text outcomes are embedded before training, and all reported summaries use a held-out validation set.

\textbf{Additional score examples.}
\Cref{tab:formality-s1-examples,tab:formality-s2-examples} provide the complete low- and high-score examples. In Scenario~1, high-scoring outcomes are more formal in both contexts, while low-scoring outcomes are more informal or conversational. In Scenario~2, the interpretation of high values depends on context: high-scoring outcomes are more informal in entertainment contexts and more formal in family contexts. These qualitative examples support the interpretation of the learned feature-scoring function as the outcome-side direction induced by treatment.

\begin{table}[htbp]
    \centering
    \small
    \begin{tabularx}{\textwidth}{>{\raggedright\arraybackslash}p{0.19\textwidth}XX}
        \toprule
        Context & Low $\hat g(Y)$ examples & High $\hat g(Y)$ examples \\
        \midrule
        Entertainment
        &
        \begin{minipage}[t]{\linewidth}\raggedright
            \textit{and even if you think it is...i will always support you BABy..}\\
            \textit{no she was never a mouseketeer!}\\
            \textit{best-nothing but a good time--poison worst-i saw red--warrant}
        \end{minipage}
        &
        \begin{minipage}[t]{\linewidth}\raggedright
            \textit{I would prefer not to share my fame with a runner-up if I won.}\\
            \textit{The blonde asked, pointing at one of the sex toys on the shelf.}\\
            \textit{I design games using PowerPoint when I am able to do so. My friend greatly enjoys producing PowerPoint games.}
        \end{minipage}
        \\
        \midrule
        Family
        &
        \begin{minipage}[t]{\linewidth}\raggedright
            \textit{it depends... if u r losing it..u will b hurt.. (life is a love...love it)  }\\
            \textit{by usher wine by r kelly and thing that is soft r\&b that has a good beat}\\
            \textit{it doesn't matter that she's black, a girl is a girl!!}
        \end{minipage}
        &
        \begin{minipage}[t]{\linewidth}\raggedright
            \textit{Based off of your age of fifty-five and his age of sixty, you are both old enough.  }\\
            \textit{Be gracious and mature about it.}\\
            \textit{It is important to look at other important qualities as well.}
        \end{minipage}
        \\
        \bottomrule
    \end{tabularx}
    \caption{The feature-scoring function has the same qualitative formality interpretation in both contexts. Representative text outcomes with low and high learned values in Scenario 1. High $\hat g(Y)$ examples are expected to be more formal in both contexts.}
    \label{tab:formality-s1-examples}
\end{table}

\begin{table}[htbp]
    \centering
    \small
    \begin{tabularx}{\textwidth}{>{\raggedright\arraybackslash}p{0.19\textwidth}XX}
        \toprule
        Context & Low $\hat g(Y,X)$ examples & High $\hat g(Y,X)$ examples \\
        \midrule
        Entertainment
        &
        \begin{minipage}[t]{\linewidth}\raggedright
            \textit{This is a song that you enjoy.  }\\
            \textit{I am the most sexually attrictive caucasian boy to ever exist.}\\
            \textit{I recommend listing to "Anything Goes" featuring Capone N. Noreaga if you do not believe me}
        \end{minipage}
        &
        \begin{minipage}[t]{\linewidth}\raggedright
            \textit{anywayz fred astaire was big...you can try to search at yahoo music..}\\
            \textit{no they don't thay all ugly}\\
            \textit{I think paris should have won or that bald dude!!}
        \end{minipage}
        \\
        \midrule
        Family
        &
        \begin{minipage}[t]{\linewidth}\raggedright
            \textit{and at night its a long ass conversation mostly dead aur.}\\
            \textit{go up to her and be all do you like me, cuz i like you?}\\
            \textit{She is avoiding u, it means she may be in love with some other guy.}
        \end{minipage}
        &
        \begin{minipage}[t]{\linewidth}\raggedright
            \textit{My boyfriend is five years older than I am.}\\
            \textit{Naturally, it always helps to have someone nice to start dating.}\\
            \textit{They have insufficient funds to purchase a larger bra.}
        \end{minipage}
        \\
        \bottomrule
    \end{tabularx}
    \caption{A context-free interpretation of the learned score would be misleading. Representative text outcomes with low and high learned values in Scenario 2. High $\hat g(Y,X)$ examples are expected to be more informal in the entertainment context and more formal in the family context.}
    \label{tab:formality-s2-examples}
\end{table}

\textbf{Complete nudging trajectories.}
We further evaluate the learned direction using embedding-based nudging~\citep{wibisono2026causal}. Starting from an initial text outcome, nudging moves its embedding in the direction that increases the learned feature score while holding context fixed. At each iteration, the updated embedding is decoded back into text and evaluated with an external formality scorer. This procedure checks whether higher feature scores produce the transformation predicted by the data-generating process.

\Cref{fig:formal-nudging} preserves every plotted iteration. In Scenario~1, nudging increases average formality monotonically from approximately $0.15$ at the initial texts to $0.82$ after ten iterations. This pattern agrees with the data-generating process, in which treatment increases formality in both contexts.

In Scenario~2, the nudging direction depends on context. For initially informal family text, corresponding to $(F,X)=(0,1)$, average formality increases from approximately $0.28$ to $0.63$. For initially formal entertainment text, corresponding to $(F,X)=(1,0)$, average formality instead decreases from approximately $0.84$ to $0.72$. Thus, the learned transformation is not simply a global push toward formal language; it moves text in the direction associated with treatment within each context.

\begin{figure}[htbp]
\centering
\begin{subfigure}[b]{0.43\textwidth}
\centering
\includegraphics[width=\textwidth]{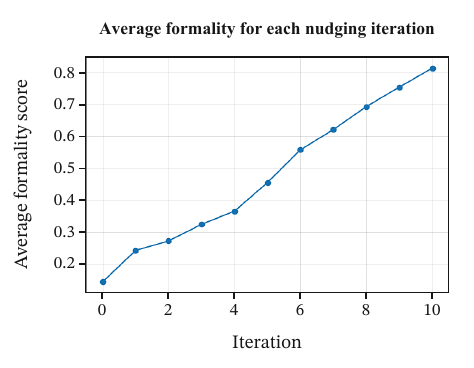}
\caption{Scenario 1}
\end{subfigure}
\hfill
\begin{subfigure}[b]{0.47\textwidth}
\centering
\includegraphics[width=\textwidth]{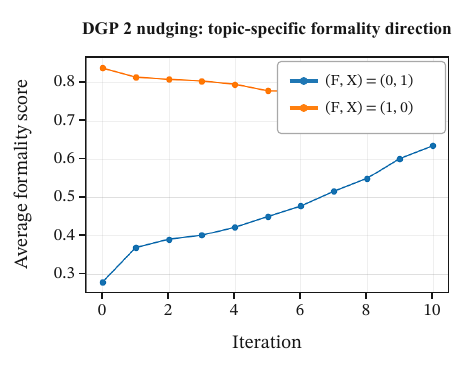}
\caption{Scenario 2}
\end{subfigure}
\caption{A single global editing direction cannot describe both formality scenarios. Average formality scores over nudging iterations in the formality experiment. In Scenario 1, nudging increases formality because the treatment induces more formal text in both contexts. In Scenario 2, the nudging direction is context-dependent: it increases formality in the family context but decreases formality in the entertainment context.}
\label{fig:formal-nudging}
\end{figure}

\FloatBarrier
\subsubsection{GYAFC topic-model comparison}
\label{app:emp-formality-lda}

\textbf{Experimental setup.}
The topic-model comparison instantiates the codebook framework of \citet{egami2022make} with latent Dirichlet allocation~\citep{blei2003latent}, focusing on Scenario~1, the context-invariant case. We fit models for topic counts~$K\in\{2,3,\ldots,10\}$. The dominant-topic function~$D(Y)\in[K]$ assigns each document to the topic with the largest posterior proportion. For each fitted model, we identify the topic subset~$\mathcal T\subset[K]$ that maximizes
\[
\mathbb E\!\left[\mathds{1}\{D(Y(1))\in\mathcal T\}\right]-\mathbb E\!\left[\mathds{1}\{D(Y(0))\in\mathcal T\}\right].
\]
We compare the topics in~$\mathcal T$ with those outside~$\mathcal T$ to infer which aspects of the text are most causally affected by treatment. The model with~$K=8$ produces the largest contrast. \Cref{tab:lda-sum} retains all representative words.

\begin{table}[H]
\small
\caption{In the GYAFC study, the MCF yields a more directly interpretable feature-scoring function than the LDA summary. Representative words for topics in $\mathcal{T}$ (positively affected by the treatment) and topics not in $\mathcal{T}$ from the LDA model with $K = 8$. It is difficult to interpret exactly what aspects of the text $\mathcal{T}$ corresponds to, reflecting a known limitation of topic model-based approaches.}
\centering
\begin{tabular}{c @{\hspace{10mm}} c}
\toprule
Topics in $\mathcal{T}$ & Topics not in $\mathcal{T}$ \\
\midrule
\makecell{\textit{yes}, \textit{com}, \textit{think}, \textit{good}, \textit{try}, \\
          \textit{yahoo}, \textit{like}, \textit{search}, \textit{girl}, \textit{music}}
& \makecell{\textit{don't}, \textit{way}, \textit{just}, \textit{love}, \textit{know}, \\
            \textit{mean}, \textit{say}, \textit{idea}, \textit{makes}, \textit{problem}} \\
\midrule
\makecell{\textit{person}, \textit{ask}, \textit{care}, \textit{thing}, \textit{look}, \\
          \textit{free}, \textit{work}, \textit{self}, \textit{does}, \textit{life}}
& \makecell{\textit{song}, \textit{like}, \textit{man}, \textit{did}, \textit{just}, \\
            \textit{really}, \textit{love}, \textit{think}, \textit{best}, \textit{say}} \\
\midrule
\makecell{\textit{tell}, \textit{need}, \textit{old}, \textit{like}, \textit{make}, \\
          \textit{big}, \textit{talk}, \textit{just}, \textit{want}, \textit{good}}
& \makecell{\textit{know}, \textit{time}, \textit{don't}, \textit{men}, \textit{think}, \\
            \textit{like}, \textit{women}, \textit{right}, \textit{just}, \textit{better}} \\
\midrule
\makecell{\textit{movie}, \textit{like}, \textit{heard}, \textit{great}, \textit{funny}, \\
          \textit{just}, \textit{attractive}, \textit{that's}, \textit{listen}, \textit{good}}
& \makecell{\textit{love}, \textit{like}, \textit{question}, \textit{relationship}, \textit{sure}, \\
            \textit{want}, \textit{answer}, \textit{man}, \textit{hope}, \textit{friend}} \\
\bottomrule
\end{tabular}
\label{tab:lda-sum}
\end{table}

\textbf{Results.}
The fitted LDA topics summarize clusters of co-occurring words, but in this GYAFC study the selected and unselected topic summaries do not map directly to formality, sentiment, or another clear stylistic dimension. This makes it difficult to determine which text properties treatment affects. The MCF instead yields a feature-scoring function that maps more directly to the formality dimension in the data-generating process and results.

\FloatBarrier
\subsubsection{ParaDetox toxicity}
\label{app:emp-toxicity}

\textbf{Experimental setup.}
The binary context~$X$ equals zero for a high-conflict discussion and one for a moderated support setting. The binary treatment~$A$ equals one under exposure to the platform intervention and zero under the baseline condition. The binary label~$T$ equals one for a non-toxic or neutral sentence and zero for a toxic sentence. The observed outcome~$Y$ is sampled from the corresponding context and toxicity category. Treatment assignment follows
\[
X\sim\operatorname{Bernoulli}(0.5),\qquad A\mid X\sim\operatorname{Bernoulli}\{\operatorname{sigmoid}(0.35+0.3X)\}.
\]
The context~$X$ is a confounder because it affects both treatment assignment and the outcome-text distribution.

\textbf{Treatment scenarios.}
In Scenario~1, the toxicity mechanism is
\[
T\mid A,X\sim\operatorname{Bernoulli}\{\operatorname{sigmoid}(-0.5+1.5A+0.3X)\}.
\]
Treatment increases non-toxicity in both contexts. The learned feature-scoring function should therefore recover a global safety direction: it should assign high values of~$\hat g(Y)$ to non-toxic sentences and low values to toxic sentences. In Scenario~2, the mechanism is
\[
T\mid A,X\sim\operatorname{Bernoulli}\{\operatorname{sigmoid}(1.5(2X-1)A+0.1X)\}.
\]
Treatment decreases non-toxicity in high-conflict discussions and increases non-toxicity in moderated support settings. This mechanism represents a context-dependent moderation or reply-assistance intervention: it may amplify confrontational language in one setting while encouraging safer or more neutral language in the other. The target is therefore context-specific. A valid feature-scoring function should recover the treatment-associated direction relative to the context-specific baseline rather than simply a global toxicity classifier.

\textbf{Estimation.}
We estimate~$\hat g(Y)$ in Scenario~1 and~$\hat g(Y,X)$ in Scenario~2. Text outcomes are embedded before training, and all reported summaries use a held-out validation set.

\textbf{Additional score examples.}
\Cref{tab:toxicity-s1-examples,tab:toxicity-s2-examples} retain every low- and high-score example. In Scenario~1, high-scoring outcomes are more non-toxic or neutral in both contexts, while low-scoring outcomes are more toxic. In Scenario~2, the interpretation of high values depends on context: high values correspond to more toxic text in high-conflict discussions and to more non-toxic text in moderated support settings. These qualitative examples support the interpretation of the learned feature-scoring function as the context-specific safety direction induced by treatment.

\begin{table}[htbp]
    \centering
    \small
    \begin{tabularx}{\textwidth}{>{\raggedright\arraybackslash}p{0.19\textwidth}XX}
        \toprule
        Context & Low $\hat g(Y)$ examples & High $\hat g(Y)$ examples \\
        \midrule
        \makecell[l]{High-conflict\\discussion}
        &
        \begin{minipage}[t]{\linewidth}\raggedright
            \textit{im in a school you bitch fuck .}\\
            \textit{they better fucking nerf shoes .}\\
            \textit{you lose in our wars every fucking time !}
        \end{minipage}
        &
        \begin{minipage}[t]{\linewidth}\raggedright
            \textit{Didn't I just call you?	}\\
            \textit{Who makes decisions like that?}\\
            \textit{What is this war?}
        \end{minipage}
        \\
        \midrule
        \makecell[l]{Moderated support\\setting}
        &
        \begin{minipage}[t]{\linewidth}\raggedright
            \textit{they analyzed their own fucking network logs of their own computer systems .}\\
            \textit{it worked but it 's talkin bout some other shit i ain 't fuckin wit it so fuck it}\\
            \textit{there are crazy fucking assholes everywhere}
        \end{minipage}
        &
        \begin{minipage}[t]{\linewidth}\raggedright
            \textit{I fear nothing}\\
            \textit{Everybody knew that, but me -}\\
            \textit{It's too early for this on my timeline.}
        \end{minipage}
        \\
        \bottomrule
    \end{tabularx}
    \caption{The feature-scoring function has the same qualitative safety interpretation in both contexts. Representative text outcomes with low and high learned values in Scenario 1. High $\hat g(Y)$ examples are expected to be more non-toxic in both contexts.}
    \label{tab:toxicity-s1-examples}
\end{table}

\begin{table}[htbp]
    \centering
    \small
    \begin{tabularx}{\textwidth}{>{\raggedright\arraybackslash}p{0.19\textwidth}XX}
        \toprule
        Context & Low $\hat g(Y,X)$ examples & High $\hat g(Y,X)$ examples \\
        \midrule
        \makecell[l]{High-conflict\\discussion}
        &
        \begin{minipage}[t]{\linewidth}\raggedright
            \textit{He wasnt playin, he remembers it}\\
            \textit{Do I know you?}\\
            \textit{I don't like zoo}
        \end{minipage}
        &
        \begin{minipage}[t]{\linewidth}\raggedright
            \textit{nutella and ferrero rocher are fucking amazing .	}\\
            \textit{oh shit the video fuck , haven 't even started ! ! !	}\\
            \textit{how the fuck are they even gonna get to mexico , honestly ?}
        \end{minipage}
        \\
        \midrule
        \makecell[l]{Moderated support\\setting}
        &
        \begin{minipage}[t]{\linewidth}\raggedright
            \textit{dress how the fuck you want .	}\\
            \textit{why do we continue to accept to be fucked like this .}\\
            \textit{looks pretty fucking accurate to me .	}
        \end{minipage}
        &
        \begin{minipage}[t]{\linewidth}\raggedright
            \textit{The people elected uncapable people and then reelect them.}\\
            \textit{Well, this is rusting, it's old but it's still in good shape.}\\
            \textit{Expect too leave the session and announce "you're not good at pilates".}
        \end{minipage}
        \\
        \bottomrule
    \end{tabularx}
    \caption{A context-free safety interpretation of the learned score would be misleading. Representative text outcomes with low and high learned values in Scenario 2. High $\hat g(Y,X)$ examples are expected to be more toxic in the high-conflict context and more non-toxic in the moderated support setting.}
    \label{tab:toxicity-s2-examples}
\end{table}

\textbf{Complete nudging trajectories.}
We use the same embedding-based nudging procedure as in the formality study. At each iteration, the text representation moves in the direction that increases the learned feature score, the updated representation is decoded back into text, and the decoded text is evaluated using non-toxicity, defined as one minus Detoxify's toxicity score~\citep{Detoxify}.

\Cref{fig:toxic-nudging} preserves every plotted iteration. In Scenario~1, nudging increases average non-toxicity monotonically from approximately $0.04$ at the initial texts to $0.77$ after ten iterations. This pattern agrees with the data-generating process, in which treatment increases non-toxicity in both contexts.

In Scenario~2, the nudging direction depends on context. For initially toxic text in moderated support settings, corresponding to $(T,X)=(0,1)$, average non-toxicity increases from approximately $0.05$ to $0.72$. For initially non-toxic text in high-conflict discussions, corresponding to $(T,X)=(1,0)$, average non-toxicity instead decreases from approximately $0.96$ to $0.83$. Thus, the learned transformation is not simply a global push toward safer language; it follows the context-dependent direction induced by treatment.

\begin{figure}[H]
\centering
\begin{subfigure}[b]{0.39\textwidth}
\centering
\includegraphics[width=\textwidth]{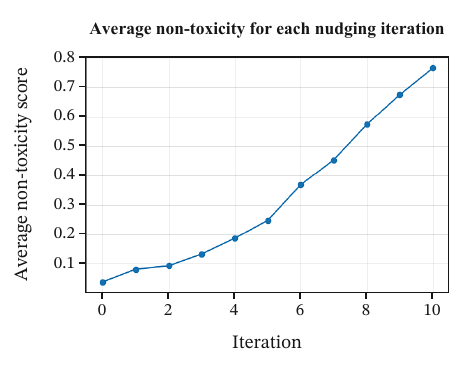}
\caption{Scenario 1}
\end{subfigure}
\hfill
\begin{subfigure}[b]{0.43\textwidth}
\centering
\includegraphics[width=\textwidth]{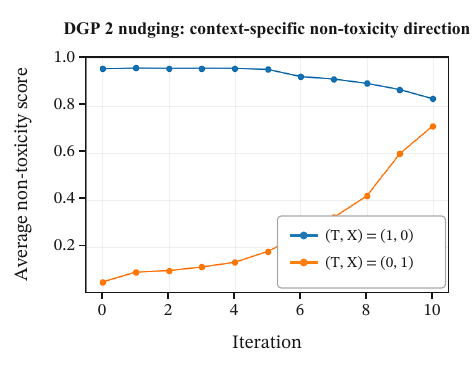}
\caption{Scenario 2}
\end{subfigure}
\caption{A single global safety-editing direction cannot describe both scenarios. Average non-toxicity scores over nudging iterations in the toxicity experiment. In Scenario 1, nudging increases non-toxicity because the intervention induces safer text in both contexts. In Scenario 2, the nudging direction is context-dependent: it increases non-toxicity in the moderated support setting but decreases non-toxicity in the high-conflict discussion setting.}
\label{fig:toxic-nudging}
\end{figure}

\FloatBarrier
\subsubsection{Multiple treatment-induced textual dimensions}
\label{app:emp-multiple-dimensions}

\textbf{Experimental setup.}
The study retains the GYAFC treatment assignment but changes the outcome mechanism. The formality label~$F$ equals one for formal text and zero for informal text. The punctuation label~$P$ equals one if the text contains a question mark or exclamation mark and zero otherwise. The observed outcome~$Y$ is sampled from the corresponding text category. Conditional on treatment, the two textual dimensions~$F$ and~$P$ are independent.

Under the control condition~$A=0$, both attributes have probability~$0.5$. The three scenarios are as follows.
\begin{enumerate}[leftmargin=*]
\item In Scenario~1, treatment affects both dimensions: when~$A=1$, both formality and punctuation have probability~$0.8$.
\item In Scenario~2, treatment affects only punctuation: when~$A=1$, formality remains at probability~$0.5$, while punctuation increases to probability~$0.8$.
\item In Scenario~3, treatment affects only formality: when~$A=1$, formality increases to probability~$0.8$, while punctuation remains at probability~$0.5$.
\end{enumerate}
Thus, the treatment-induced outcome feature differs across scenarios. Scenario~1 requires a joint formality-and-punctuation feature, Scenario~2 requires punctuation but not formality, and Scenario~3 requires formality but not punctuation. We represent text using the StyleDistance algorithm of \citet{patel2024styledistance} and estimate a homogeneous MCF in each scenario. \Cref{tab:mult-dim} in the main text reports the average learned score for every combination of formality and punctuation.

\FloatBarrier
\subsection{Experiment II: Additional details for image-based outcomes}
\label{app:emp-image-outcomes}

\subsubsection{Representation and data-generating process}
\label{app:emp-image-representation}

\textbf{Experimental setup.}
The study includes BBBC005v1 synthetic cell-body-stain images~\citep{ljosa2012annotated} with at most~$40$ cells. Content is the cell count, and style comprises the remaining image attributes, with focus blur as the target attribute. We use an adversarial autoencoder,\footnote{\url{https://github.com/johncf/mnist-style}} with architectural adjustments for the cell images and a $32$-dimensional style representation.

The binary covariate~$X$ indicates that cell count is below~$20$. The treatment and target-blur mechanisms are
\[
X\sim\operatorname{Bernoulli}(0.5),\qquad A\mid X\sim\operatorname{Bernoulli}\{\operatorname{sigmoid}(0.35+0.3X)\},
\]
and
\[
B^\star=8+24A+4X+\varepsilon,\qquad \varepsilon\sim\mathcal N(0,5^2).
\]
Here, $\varepsilon$ is the Gaussian noise term. The target blur~$B^\star$ is truncated to the empirical blur range. Within the cell-count group matching~$X$, a raw image~$O$ is sampled uniformly from the $30$ images with blur closest to~$B^\star$. The represented outcome~$Y=\psi(O)$ is the selected image's style representation, and the observed blur~$F$ is used only to interpret the learned score.

\subsubsection{Additional image examples}
\label{app:emp-image-examples}

\Cref{fig:img_example_cell} shows style transfer for cell counts $1$, $5$, $10$, $14$, $18$, $23$, $27$, $31$, $35$, and~$40$. In each row, every cell image follows the style of the leftmost image. The number of cells systematically increases from left to right, while the level of blur remains approximately fixed. This pattern supports the content--style separation used in the outcome analysis.

\begin{figure}[htbp]
    \centering
    \includegraphics[width=\textwidth]{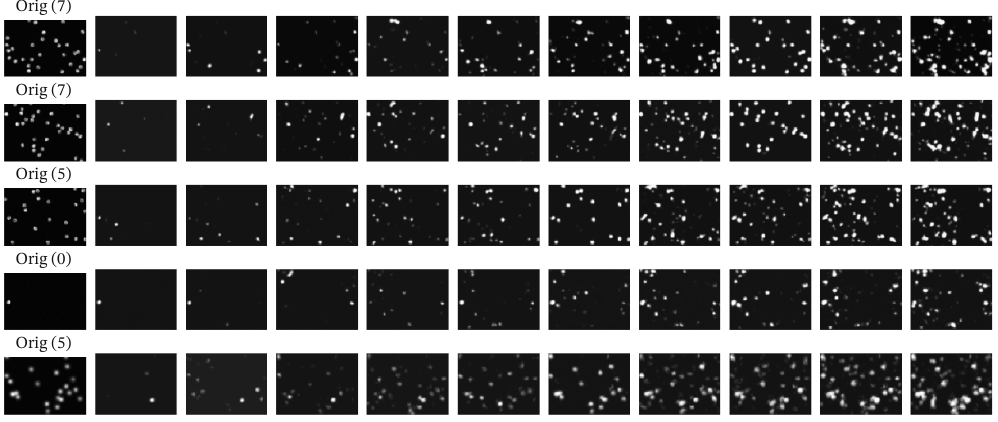}
    \caption{Style transfer preserves blur while cell-count content changes. Image style transfer example. Each row consists of how each number of cells (from 1 to 40) looks like following the style of the leftmost image.}
\label{fig:img_example_cell}
\end{figure}

\FloatBarrier
\subsection[Experiment III: Additional details for simultaneous text-based treatments and outcomes]{Experiment III: Additional details for simultaneous text-based\protect\\ treatments and outcomes}
\label{app:emp-text-treatment-outcomes}

\subsubsection{Stylized treatment-plan example}
\label{app:emp-toy}

\textbf{Experimental setup.}
The binary covariate follows~\(X\sim\operatorname{Bernoulli}(0.5)\). The treatment vector is~\(A=(A_1,A_2,A_3)\). Its three coordinates are independent~\(\operatorname{Uniform}(0,1)\) variables. The outcome vector is~\(Y=(Y_1,Y_2,Y_3)\), where
\[
Y_1\mid A_1\sim\mathcal N(A_1,0.2^2),\qquad Y_2,Y_3\sim\operatorname{Uniform}(0,1).
\]
Only~$A_1$ affects the outcome, and larger values of~$A_1$ produce larger values of~$Y_1$. The homogeneous MIF--MCF estimator learns~$\hat f(A)$ and~$\hat g(Y)$; the other coordinates are deliberately irrelevant.

\subsubsection{Formality-inducing prompts in news headline generation}
\label{app:emp-headline}

\textbf{Experimental setup.}
The complete generation mechanism has four steps.
\begin{enumerate}[leftmargin=*]
\item The binary topic follows~$X\sim\operatorname{Bernoulli}(0.5)$, where zero denotes entertainment and one denotes politics.
\item The prompt-formality indicator follows~$F\sim\operatorname{Bernoulli}\{\operatorname{sigmoid}(2X-1)\}$, where one denotes a formal prompt.
\item The treatment text~$A$ is sampled from a prompt family indexed by~$F$. A formal example is ``Compose a concise news headline in a professional tone.'' An informal example is ``Write a compact headline with a relaxed tone.'' The topic is not included in~$A$.
\item Qwen2.5-1.5B-Instruct~\citep{qwen2024qwen25} generates the outcome~$Y$ from the concatenated instruction consisting of~$A$ followed by~$T(X)$, where~$T(0)$ is ``The headline must be about entertainment'' and~$T(1)$ is ``The headline must be about politics.''
\end{enumerate}
The topic affects both the probability of receiving a formal prompt and the content of the generated headline, creating the adjustment problem. An algorithm that ignores~$X$ may therefore confuse topic-related variation with prompt-induced style. The desired feature-scoring functions should recover prompt-side formality and outcome-side headline formality while adjusting for topic. Prompt and headline texts are represented with the \textit{all-MiniLM-L6-v2} Sentence-Transformers model~\citep{reimers2019sentence}. Homogeneous functions~$\hat f(A)$ and~$\hat g(Y)$ are estimated with the MIF--MCF objective.

\FloatBarrier

\end{document}